\newtheorem{theorem}{Theorem}
\newtheorem{proposition}{Proposition}
\newtheorem{lemma}{Lemma}
\newtheorem{definition}{Definition}
\newcommand{\inputs}{\mathcal{X}} % input space
\newcommand{\outputs}{\overline{\mathcal{Y}}} % output space
\newcommand{\xoutputs}{\mathcal{Y}} % extended output space
\newcommand{\rejsym}{\bot} % rejection symbol
\newcommand{\xb}{\mathbf{x}} % input
\newcommand{\simplex}{\Delta} % simplex/probabilistic vectors
\newcommand{\ind}{\mathbf{1}} % indicator function
\newcommand{\nei}{\mathcal{N}} % neighborhood
\newcommand{\rerej}{R^\textrm{rej}} % our robust error with rejection
\newcommand{\rarej}{A^\textrm{rej}} % our robust accuracy with rejection
\newcommand{\ore}{R} % old robust error with eps0=eps
\newcommand{\orerej}{R^\textrm{rej}} % old robust error with rejection eps0=0
\newcommand{\nerr}{R} % natural error 
\newcommand{\lrej}{\ell^\textrm{rej}} % rejection loss function
\newcommand{\tlrej}{L} % total loss with rejection
\def \ie {{\em i.e.},~}
\def \eg {{\em e.g.},~}
\newcommand{\argmax}{\operatornamewithlimits{arg\!\max}}
\newcommand{\ben}{\begin{enumerate}}
\newcommand{\een}{\end{enumerate}}
\newcommand{\beq}{\begin{equation}}
\newcommand{\eeq}{\end{equation}}
\newcommand{\beqa}{\begin{eqnarray}}
\newcommand{\eeqa}{\end{eqnarray}}
\newcommand{\bit}{\begin{itemize}}
\newcommand{\eit}{\end{itemize}}
\newcommand{\btab}{\begin{tabular}}
\newcommand{\etab}{\end{tabular}}
\newcommand{\semic}{{\,;\,}}
\newcommand{\noprint}[1]{}
\newcommand{\mypara}[1]{\noindent\textbf{#1}}
\def \bfzero {\mathbf{0}}
\def \expec {\mathop{\mathbb{E}}}
\def \indicator {\mathbf{1}}
\def \mymax {\displaystyle\max\limits}
\def \bftheta {\bm{\theta}}
\def \bfdelta {\bm{\delta}}
\def \bfg {\mathbf{g}}
\def \bfh {\mathbf{h}}
\def \bfu {\mathbf{u}}
\def \bfx {\mathbf{x}}
\def \bfz {\mathbf{z}}
\def \bfJ {\mathbf{J}}
\def \calC {\mathcal{C}}
\def \calD {\mathcal{D}}
\def \calR {\mathcal{R}}
\def \calW {\mathcal{W}}
\setlist[enumerate]{nosep}
\icmltitlerunning{Stratified Adversarial Robustness with Rejection}
\begin{document}

\twocolumn[
\icmltitle{Stratified Adversarial Robustness with Rejection}
% Stratified Adversarial Robustness Using Consistent Prediction Based Rejection

% It is OKAY to include author information, even for blind
% submissions: the style file will automatically remove it for you
% unless you've provided the [accepted] option to the icml2023
% package.

% List of affiliations: The first argument should be a (short)
% identifier you will use later to specify author affiliations
% Academic affiliations should list Department, University, City, Region, Country
% Industry affiliations should list Company, City, Region, Country

% You can specify symbols, otherwise they are numbered in order.
% Ideally, you should not use this facility. Affiliations will be numbered
% in order of appearance and this is the preferred way.
\icmlsetsymbol{equal}{*}

\begin{icmlauthorlist}
\icmlauthor{Jiefeng Chen}{equal,wisc}
\icmlauthor{Jayaram Raghuram}{equal,wisc}
\icmlauthor{Jihye Choi}{wisc}
\icmlauthor{Xi Wu}{google}
\icmlauthor{Yingyu Liang}{wisc}
\icmlauthor{Somesh Jha}{wisc}
%\icmlauthor{}{sch}
%\icmlauthor{}{sch}
%\icmlauthor{}{sch}
\end{icmlauthorlist}

\icmlaffiliation{wisc}{Department of Computer Sciences, University of Wisconsin at Madison}
\icmlaffiliation{google}{Google}

\icmlcorrespondingauthor{Jiefeng Chen}{jiefeng@cs.wisc.edu}
% \icmlcorrespondingauthor{Firstname2 Lastname2}{first2.last2@www.uk}

% You may provide any keywords that you
% find helpful for describing your paper; these are used to populate
% the "keywords" metadata in the PDF but will not be shown in the document
\icmlkeywords{Machine Learning, ICML}

\vskip 0.3in
]

% this must go after the closing bracket ] following \twocolumn[ ...

% This command actually creates the footnote in the first column
% listing the affiliations and the copyright notice.
% The command takes one argument, which is text to display at the start of the footnote.
% The \icmlEqualContribution command is standard text for equal contribution.
% Remove it (just {}) if you do not need this facility.

%\printAffiliationsAndNotice{}  % leave blank if no need to mention equal contribution
\printAffiliationsAndNotice{\icmlEqualContribution} % otherwise use the standard text.

\begin{abstract}
Recently, there is an emerging interest in adversarially training a classifier with a rejection option (also known as a selective classifier) for boosting adversarial robustness.
While rejection can incur a cost in many applications, existing studies typically associate zero cost with rejecting perturbed inputs, which can result in the rejection of numerous slightly-perturbed inputs that could be correctly classified. 
In this work, we study adversarially-robust classification with rejection in the stratified rejection setting, where the rejection cost is modeled by rejection loss functions monotonically non-increasing in the perturbation magnitude.   
We theoretically analyze the stratified rejection setting and propose a novel defense method -- \textit{Adversarial Training with Consistent Prediction-based Rejection} (CPR) -- for building a robust selective classifier. 
Experiments on image datasets demonstrate that the proposed method significantly outperforms existing methods under strong adaptive attacks.
For instance, on CIFAR-10, CPR reduces the total robust loss (for different rejection losses) by at least 7.3\% under both seen and unseen attacks.

\end{abstract}

\section{Introduction}
\label{sec:intro}

% Building robust models against adversarial attacks is critical for designing secure and reliable machine learning systems~\cite{BiggioCMNSLGR13,SzegedyZSBEGF13,biggio2018wild}.
% Adversarial training and its variants are among the most effective methods for learning robust classifiers~\cite{madry2018towards, zhang2019theoretically}.
% However, their robust accuracy on complex datasets is still not satisfactory (\eg~\cite{croce2020robustbench}). Also, the robust models usually have poor generalization to threat models that are not utilized during training~\cite{stutz2020ccat,Laidlaw0F21}. Given these limitations, it is important to design classifiers that learn when to reject or abstain from predicting on adversarial examples. This can be especially crucial when it comes to real-world, safety-critical systems such as self-driving cars, where abstaining from prediction is often a much safer alternative than making an incorrect decision.

Building robust models against adversarial attacks is critical for designing secure and reliable machine learning systems~\cite{BiggioCMNSLGR13,SzegedyZSBEGF13,biggio2018wild,madry2018towards, zhang2019theoretically}.
However, the robust error of existing methods on complex datasets is still not satisfactory (\eg~\cite{croce2020robustbench}). Also, the robust models usually have poor generalization to threat models that are not utilized during training~\cite{stutz2020ccat,Laidlaw0F21}. Given these limitations, it is important to design selective classifiers that know when to reject or abstain from predicting on adversarial examples. This can be especially crucial when it comes to real-world, safety-critical systems such as self-driving cars, where abstaining from prediction is often a much safer alternative than making an incorrect decision. Along this line of adversarial robustness with rejection, several recent studies~\cite{laidlaw2019playing, stutz2020ccat, sheikholeslami2021provably, pang2022two, tramer2021detecting, kato2020atro} have extended the standard definition of robust error to the setting where the classifier can also reject inputs, and they consider rejecting any perturbed input to be a valid decision that does not count towards the robust error.
%, while rejecting a clean input still counts towards the robust error.

% Recently, several works have studied the problem of adversarially-robust classification with rejection~\cite{laidlaw2019playing, stutz2020ccat, sheikholeslami2021provably, pang2022two, tramer2021detecting, kato2020atro}. They extend the standard definition of robust error to the setting where the classifier can also reject inputs. In this setting, rejecting a perturbed input is considered to be a valid decision that does not count towards the robust error, while rejecting a clean input still counts towards the robust error. A key limitation with this view is that it associates \emph{zero cost} with the rejection decision. However, in many practical applications, rejection can have a high cost.
% For example, consider a selective classifier for traffic signs in a self-driving system. If it rejects an input (\eg a perturbed ``speed limit 60'' sign), then the system may not know how to react (either speeding up or slowing down can be dangerous) and thus needs human intervention (\eg adjust the speed). In such cases, rejection has the cost of manual intervention~\cite{John16,cunningham2015autonomous}. Fig. \ref{fig:illustration_example} shows an example where the method CCAT~\cite{stutz2020ccat} rejects a ``speed limit 60'' image with very small perturbation, while our method SATR accepts and correctly classifies it.  

A key limitation with these studies is that they associate \emph{zero cost} with the rejection decision on perturbed inputs~\footnote{Note that the rejection of clean inputs does incur a cost.}, whereas rejection can often have a high cost in many practical applications. For example, consider a selective classifier for traffic signs in a self-driving system. If it rejects an input (\eg a perturbed ``speed limit 60'' sign), then the system may not know how to react and thus need human intervention (\eg adjust the speed). In such cases, rejection has the cost of service-denial and manual intervention~\cite{John16,cunningham2015autonomous,mozannar2020consistent}.
%where the method CCAT~\cite{stutz2020ccat} rejects a ``speed limit 60'' image with very small perturbation, while our method SATR accepts and correctly classifies it.  
In contrast to the practical consideration, existing studies on adversarial robustness with rejection typically do not explicitly consider a cost for rejecting perturbed inputs.
The learned models thus may not satisfy the need of these applications. Indeed, the models from existing methods may end up rejecting too many slightly-perturbed inputs that could be correctly classified. 
As a concrete example, on MNIST with $\ell_\infty$-perturbation magnitude $0.4$, the method CCAT~\cite{stutz2020ccat} achieves very good performance on the 
% \begin{figure}[htb]
%   \begin{center}
%     \includegraphics[width=0.9\linewidth]{} 
%     \caption{\small \textbf{Rejecting imperceptibly perturbed images can be undesirable.} Left: ``speed limit 60'' image from the GTSRB dataset; Right: perturbation with very small magnitude $\frac{0.4}{255}$. Both CCAT and our method SATR accept and correctly classify the left. But CCAT rejects the right while SATR accepts and correctly classifies it. } \label{fig:illustration_example}
%   \end{center} 
%   \vspace{-3mm}
% \end{figure} 
existing metrics such as 1.82\% rejection rate on clean test inputs and 75.50\% robust accuracy with detection (a metric introduced in~\cite{tramer2021detecting}; see Eq. \ref{eq:robust_error_rejection_tramer}). 
However, for 99.30\% of the test points, CCAT will reject some small perturbations within magnitude as small as $0.02$. More results can be found in our experiments in Section~\ref{sec:experiment}. In summary, while rejecting such small perturbations has a cost, existing studies have not adequately measured the quality of the selective classifiers, and the training methods may not learn desired models for such applications.

To address this limitation, we revisit adversarially-robust classification with rejection by introducing rejection loss functions to model the potential cost of rejection. This offers a more flexible framework than the traditional adversarial robustness studies that do not consider rejection (roughly equivalent to associating with rejection the same loss as mis-classification), and recent studies on adversarial robustness with rejection (e.g. \cite{tramer2021detecting}) that associate zero loss with rejecting any perturbed input.
We focus on \emph{the stratified rejection setting where the rejection loss functions are monotonically non-increasing in the perturbation magnitude}. 
%\yingyu{need to improve the motivation for monotonically non-increasing losses} since applications often require a classifier to be robust to small perturbations and rejecting small perturbations should incur larger costs compared to rejecting larger perturbations since larger perturbations can be harder to classify.
This is motivated by the consideration that small input perturbations should not be rejected when correct classification is possible, and thus their rejection should incur a large cost. 
% While large perturbations can often be harder to classify and rejection is the best option when correct classification is not possible; so larger perturbations should incur a lower rejection cost compared to smaller perturbations.
However, large input perturbations can often be harder to classify, and rejection may be the best option when correct classification is not possible; so they should incur a lower rejection cost compared to smaller perturbations.
Furthermore, we consider the challenging scenario where  \emph{the rejection loss function used for testing is unknown at training time}. That is, the learning method is not designed for one specific rejection loss; the learned selective classifier should work under a range of reasonable rejection loss functions. Our goal is then to design such a method that can learn a robust selective classifier with small loss due to \textit{both} mis-classification and rejection. 
% To address the limitation, we revisit adversarially-robust classification with rejection by considering \emph{the stratified rejection setting}, which includes rejection loss functions that are monotonically non-increasing in the perturbation magnitude. 
% The rejection loss functions allow to model the cost of rejection in the applications. This offers a more flexible framework than the traditional adversarial robustness that does not consider rejection (roughly equivalent to associating with rejection the same loss as mis-classification), and recent studies on adversarial robustness with rejection (e.g. \cite{tramer2021detecting}) that associate zero loss with rejecting any perturbed input.
% We consider monotonically non-increasing rejection loss, since  applications often require a classifier to be robust to small perturbations, and rejecting small perturbations should incur larger costs compared to rejecting larger perturbations since larger perturbations can be harder to classify.  
% We consider the challenging scenario where the \emph{rejection loss function used for testing is unknown at training time}. Equivalently, the learning method is not designed for one specific rejection loss; the trained selective classifier should work under a wide range of reasonable rejection loss functions. Our goal is then to design such a method that can learn a robust selective classifier with small loss due to both mis-classification and rejection.

In summary, we make the following contributions:
%noitemsep
\begin{itemize}[leftmargin=*, topsep=1pt, noitemsep]
\item We propose to introduce rejection loss functions to model the potential cost of rejection in applications, and study the stratified rejection setting with monotonically non-increasing rejection loss functions (Section \ref{sec:problem_setup}).
%We introduce new evaluation metrics -- the \textit{total robust loss} and the \textit{robustness curve} -- that incorporate monotonically non-increasing rejection loss functions.
\item We provide a theoretical analysis of the stratified rejection setting. We analyze the existence of a robust selective classifier and discuss conditions when it can improve over classifiers without rejection (Section \ref{sec:theory}).
\item We propose a novel defense method CPR inspired by our theoretical analysis. Our experiments demonstrate that CPR significantly outperforms previous methods under strong adaptive attacks (Sections \ref{sec:method} and \ref{sec:experiment}). Its performance is strong for different rejection losses, on both traditional and our new metrics, and under seen and unseen attacks. CPR can be combined with different kinds of adversarial training methods (e.g., TRADES~\cite{zhang2019theoretically}) to enhance their robustness. 
\end{itemize}

\subsection{Related Work} 
\label{sec:related}

% \mypara{Related Work.}
Adversarial robustness of deep learning models has received significant attention in recent years. Many defenses have been proposed and most of them have been broken by strong adaptive attacks~\cite{athalye2018obfuscated, tramer2020adaptive}. The most effective approach for improving adversarial robustness is adversarial training~\cite{madry2018towards, zhang2019theoretically}. However, adversarial training still does not achieve very high robust accuracy on complex datasets. For example, as reported in RobustBench~\cite{croce2020robustbench}, even state-of-the-art adversarially trained models struggle to exceed 67\% robust test accuracy on CIFAR-10.
%, which is far from practical. 

One approach to break this adversarial robustness bottleneck is to allow the classifier to reject inputs, instead of trying to correctly classify all of them. Standard (non-adversarial) classification with a reject option (or selective classification) has been extensively studied in the literature~\cite{tax2008growing,geifman2019selectivenet,charoenphakdee2021classification,cortes2016learning}. 
Selective classification in the transductive setting with provable guarantees has been studied by \citet{goldwasser2020beyond}.
Recently, there has been a great interest in exploring adversarially robust classification with a reject option~\cite{laidlaw2019playing, stutz2020ccat, kato2020atro, yin2020generative, sheikholeslami2021provably, tramer2021detecting, pang2022two, balcan2023analysis}.
We next discuss some of these closely related works~\nocite{hosseini2017blocking, balcan2022robustly, chen2023aspest}.

\citet{stutz2020ccat} proposed to adversarially train confidence-calibrated models using label smoothing and confidence thresholding so that they can generalize to unseen adversarial attacks. 
\citet{sheikholeslami2021provably} modified existing certified-defense mechanisms to allow the classifier to either robustly classify or detect adversarial attacks, and showed that it can lead to better certified robustness, especially for large perturbation sizes.
\citet{pang2022two} observed that two coupled metrics, the prediction confidence and the true confidence (T-Con), can be combined to provably distinguish correctly-classified inputs from mis-classified inputs. Based on this, they propose to learn a rectified confidence (R-Con) that models T-Con, which is then used to adversarially train a selective classifier.
\citet{laidlaw2019playing} proposed a method called Combined Abstention Robustness Learning (CARL) for jointly learning a classifier and the region of the input space on which it should abstain, and showed that training with CARL can result in a more accurate and robust classifier.
% \citet{yin2020generative} proposed a joint adversarial detection and classification method, where $K$ binary one-versus-rest classifiers are trained to separate clean data from a particular class and adversarially-perturbed data from the remaining classes.
In \cite{balcan2023analysis}, the authors introduced a random feature subspace threat model and showed that classifiers without the ability to abstain (reject) are provably vulnerable to this adversary; but allowing the classifier to abstain (e.g., via a thresholded nearest-neighbor algorithm) can overcome such attacks.
% They also provide strong theoretical guarantees for setting the algorithm parameters to optimize over accuracy-abstention trade-offs using data-driven methods. 

% Rejecting adversarial inputs to black-box classifiers by augmenting their class set with a NULL class has been considered in \cite{hosseini2017blocking}.

An important aspect that has not been explored in prior works is the cost or loss of rejecting perturbed inputs. This is important for designing robust classifiers that do not reject many slightly-perturbed inputs which could be correctly classified.
%A notable gap among existing works is that they do not include
%However, these studies usually do not include
% A key limitation among existing works is that they do not include a loss for rejecting perturbed inputs, which can result in the trained classifiers rejecting many slightly-perturbed inputs that could be correctly classified. 
To the best of our knowledge, we are the first to study adversarially-robust classification with rejection in the stratified-rejection setting.

\iffalse

MART paper:
\citet{wang2020improving} addressed the importance of separating mis-classified inputs in the adversarial risk formulation, and proposed a new defense algorithm MART, which explicitly differentiates the mis-classified and correctly-classified inputs in the adversarial training objective.

Recent papers to cite
Beyond Perturbations: Learning Guarantees with Arbitrary Adversarial Test Examples
https://proceedings.neurips.cc/paper/2020/file/b6c8cf4c587f2ead0c08955ee6e2502b-Paper.pdf

Selective Classification Via Neural Network Training Dynamics
https://arxiv.org/pdf/2205.13532.pdf

ASPEST: Bridging the Gap Between Active Learning and Selective Prediction
https://arxiv.org/pdf/2304.03870.pdf

\fi

%\section{Robust Loss and Curve of Robust Error}
\section{Stratified Adversarial Robustness with Rejection}
\label{sec:problem_setup}

\mypara{Notations.}
Let $\inputs \subseteq \mathbb{R}^d$ denote the space of inputs $\xb$ and $\outputs := \{1, \cdots, k\}$ denote the space of outputs $y$. Let $\xoutputs := \outputs \cup \{\rejsym\}$ be the extended output space where $\rejsym$ denotes the abstain or rejection option. 
Let $\simplex_k$ denote the set of output probabilities over $\outputs$ (\ie the simplex in $k$-dimensions). 
Let $d(\xb, \xb')$ be a norm-induced distance on $\inputs$ (\eg the $\ell_p$-distance for some $p \geq 1$), and let $\,\nei(\xb, r) := \{\xb' \in \inputs: d(\xb', \xb) \le r\}\,$ denote the neighborhood of $\xb$ with distance $r$. 
Let $\wedge$ and $\vee$ denote the min and max operations respectively (reducing to AND and OR operations when applied on boolean values).
Let $\indicator\{c\}$ define the binary indicator function which takes value $1$ ($0$) when the condition $c$ is true (false). 
%We denote vectors and matrices using boldface symbols.

\mypara{Review.} We first review the standard setting of adversarial robustness without rejection and the setting with rejection in a recent line of work.
Given samples from a distribution $\calD$ over $\inputs \times  \outputs$, the goal is to learn a classifier with a reject option (a selective classifier), $f: \inputs \rightarrow \xoutputs$, that has a small error.   
The {\em standard robust error} at adversarial budget $\epsilon > 0$ is defined as~\cite{madry2018towards}:
\begin{align} \label{eq:robust_error}
    \ore_\epsilon(f) := \expec_{(\xb,y) \sim \calD}\left[\max_{\xb' \in \nei(\xb, \epsilon)} \ind\{ f(\xb') \neq y\}\right],
\end{align}
which does not allow rejection (\ie rejection is an error). 
%A line of recent studies~\cite{laidlaw2019playing, stutz2020ccat, sheikholeslami2021provably, pang2022two, tramer2021detecting, kato2020atro} have incorporated rejection for adversarial robustness.
A few recent studies (e.g.~\cite{tramer2021detecting}) have proposed the {\em robust error with detection} at adversarial budget $\epsilon$ as
\begin{align}
\label{eq:robust_error_rejection_tramer}
  \orerej_\epsilon(f) ~:=&~ \expec_{(\xb,y) \sim \calD} \bigg[
    \ind\{ f(\xb) \neq y \} \\ \nonumber
    ~\vee & \max_{\xb^\prime \in \nei(\xb, \epsilon)} \ind\big\{ f(\xb^\prime) \not \in \{y, \rejsym\} \big\} \bigg],
\end{align}
which allows the rejection of small (even infinitesimal) perturbations without incurring any error. 

\mypara{Rejection Loss Functions.}
The above studies are not well-suited for the needs of certain applications where rejection can have a cost. We would like to associate with the reject decision a loss that is a function of the perturbation magnitude.
Intuitively, rejecting a clean input 
%or mis-classifying a (possibly perturbed) input $\xb'$ 
should incur the maximum loss, and the loss of rejecting a perturbed input should decrease (or at least not increase) as the perturbation magnitude increases. 
Formally, let $\,\lrej(r): [0, \infty) \rightarrow [0,1]$ be a function specifying the loss of rejecting a perturbation $\bfx'$ of a clean input $\bfx$ with perturbation magnitude $r = d(\xb, \xb')$. We consider two concrete cases of such losses.
The {\em step rejection loss} is defined as
\begin{align}
\label{eq:step_rejection_loss}
\lrej(r) ~=~ \ind\{r \,\leq\, \alpha_0\epsilon\} %, ~~r \in [0, \epsilon]
\end{align}
for some $\alpha_0 \in [0, 1]$. That is, rejecting a perturbed input of magnitude smaller than $\alpha_0\epsilon$ has a loss 1 but rejecting larger perturbations has no loss.
% The total robust loss in this case is given by $\rerej_{\epsilon}(f, \alpha_0)$, \ie the robust error with rejection at $\alpha = \alpha_0$.
The {\em ramp rejection loss} is defined as follows for some $t \geq 0$
\begin{align}
\label{eq:ramp_rejection_loss}
\lrej(r) ~=~ \left(1 \,-\, \frac{r}{\epsilon}\right)^t. %, ~~r \in [0, \epsilon]
\end{align}
For instance, for $t = 1$, the ramp rejection loss decreases linearly with the perturbation magnitude. 
% The total robust loss in this case is given by
% $
% \tlrej_\epsilon(f; \lrej) ~=~ t \int_0^1 s(\alpha) \,(1 \,-\, \alpha)^{t - 1} \,\mathrm{d} \alpha.
% $
% More details on this can be found in Appendix~\ref{sec:appendix_total_robust_loss}.

\mypara{Total Robust Loss.}
%Then the adversary can make the selective classifier suffer a mis-classification loss when there exists a perturbation $\xb'$ with $f(\xb^\prime) \not \in \{y, \rejsym\}$. Even when no such mis-classification exists,  
With the rejection loss modeling the potential cost of rejection, the adversary can make the selective classifier suffer a mis-classification loss when there exists a perturbation $\xb'$ with $f(\xb^\prime) \not \in \{y, \rejsym\}$, or suffer a rejection loss $\lrej(d(\xb, \xb'))$ when there exists a perturbation $\xb'$ that is rejected (\ie $f(\xb') = \rejsym$). Then our goal is to learn a selective classifier that has a small total loss due to both mis-classification and rejection induced by the adversary, which is formalized as follows. 
\begin{definition} \label{def:total-loss}
{\em The total robust loss} of a selective classifier $f$ at adversarial budget $\epsilon>0$ with respect to a given rejection loss function $\lrej$ is:
\begin{align*}
%\label{eq:total_loss}
    \tlrej_\epsilon(f; \lrej) ~:=~ &\expec_{(\xb,y) \sim \calD} \bigg[ \max_{\xb^\prime \in \nei(\xb, \epsilon)} \Big( \ind\big\{ f(\xb^\prime) \not \in \{y, \rejsym\} \big\} \\ \nonumber
   ~&\vee~ \ind\{f(\xb') = \rejsym \} \,\lrej(d(\xb, \xb')) \Big) \bigg].
\end{align*}
\end{definition} 
Here $\vee$ denotes the maximum of the mis-classification loss and the rejection loss. 
While this definition is compatible with any rejection loss, we will focus on the monotonically non-increasing ones. 
This definition also applies to classifiers without a rejection option, in which case the total robust loss reduces to the standard robust error $\ore_\epsilon(f)$.

\mypara{The Curve of Robust Error.}
The definition of the total robust loss however depends on the specific instantiation of $\lrej$, which may vary for different applications. In practice, we would like to have a single evaluation of $f$ which can be combined with different definitions of $\lrej$ to compute the total robust loss.
Towards this end, we propose the notion of the \textit{curve of robust error (or accuracy)}, which generalizes the existing metrics. More importantly, Lemma~\ref{corr_total_robust_loss} shows that the curve can be used to compute the total robust loss for different rejection losses (proof given in Appendix~\ref{sec:proof_lemma1}). 

\begin{definition} \label{def:robust-curve}
{\em The curve of robust error} of a selective classifier $f$ at adversarial budget $\epsilon \geq 0$ is $\{\rerej_{ \epsilon}(f, \alpha): \alpha \in [0,1]\}$, where 
\begin{align}
\label{eq:robust_error_rejection}
 \rerej_{ \epsilon}(f, \alpha) ~:=&~ \expec_{(\xb,y) \sim \calD} \bigg[ \max_{\xb' \in \nei(\xb, \,\alpha\epsilon)} \ind\{ f(\xb') \neq y\} \\ \nonumber
 \vee & \max_{\xb'' \in \nei(\xb, \epsilon) } \ind\big\{ f(\xb'') \not \in \{y, \rejsym\} \big\} \bigg].
\end{align} 
{\em The curve of robust accuracy} or simply {\em the robustness curve} of $f$ at adversarial budget $\epsilon$ is defined as
\begin{align}
\label{eq:robust_accuracy_rejection}
    \{\rarej_{\epsilon}(f, \alpha) \,:=\, 1 - \rerej_{\epsilon}(f, \alpha) : \alpha \in [0,1]\}.
\end{align}
\end{definition}
We name $\rerej_{ \epsilon}(f, \alpha)$ the {\em robust error with rejection} at $\alpha$, and  $\rarej_{\epsilon}(f, \alpha)$ the {\em robustness with rejection} at $\alpha$. The intuition behind  $\rerej_{\epsilon}(f, \alpha)$ for a fixed $\alpha$ is clear. For small perturbations within $\nei(\xb, \,\alpha\epsilon)$, both an incorrect prediction and rejection are considered an error.
For larger perturbations outside $\nei(\xb, \,\alpha\epsilon)$, rejection is \textit{not} considered to be an error (\ie the classifier can either classify correctly or reject larger perturbations).
Moreover, $\rerej_{\epsilon}(f, \alpha)$ actually includes several existing metrics as special cases:
\vspace{-2mm}
\begin{itemize}%[leftmargin=*, topsep=1pt, noitemsep]
    \item When $\alpha=1$, the metric reduces to the standard robust error at budget $\epsilon$ in Eq. (\ref{eq:robust_error}), \ie $\rerej_{\epsilon}(f, 1) = \ore_{ \epsilon}(f)$.
    \item When $\alpha = 0$, it reduces to the robust error with detection at budget $\epsilon$ in Eq. (\ref{eq:robust_error_rejection_tramer}), \ie $\rerej_{\epsilon}(f, 0) = \orerej_\epsilon(f)$. Here rejection incurs an error \textit{only} for clean inputs.
    \item For any classifier $f$ \emph{without rejection} and any $\alpha \in [0, 1]$, it reduces to the standard robust error at budget $\epsilon$ defined in Eq. (\ref{eq:robust_error}), \ie $\,\rerej_{\epsilon}(f, \alpha) = \ore_{\epsilon}(f)$. % The curve of robust error of a classifier without rejection would be a horizontal line.
\end{itemize}
 
\begin{lemma}
\label{corr_total_robust_loss}
Let $s(\alpha) :=  \rerej_{\epsilon}(f, \alpha)$. Suppose the rejection loss $\lrej : [0, \infty) \mapsto [0, 1]$ is monotonically non-increasing, differentiable almost everywhere, and $\lrej(0) = 1$ and $\lrej(\epsilon) = 0$. Then the total robust loss simplifies to 
$ %\begin{align} \label{eq:total_robust_loss_simplified}
    \tlrej_\epsilon(f; \lrej) 
    %~=~ \rerej_{\epsilon}(f, 0) + \int_0^1 \lrej(\alpha \epsilon) ~\mathrm{d} s(\alpha) 
    ~=~ -\int_0^1 s(\alpha) ~\mathrm{d} \lrej(\alpha \epsilon).
$ %\end{align}
\end{lemma}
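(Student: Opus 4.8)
The plan is to reduce the statement to a \emph{pointwise} identity for each fixed example $(\xb,y)$ and then integrate over $\calD$. For fixed $(\xb,y)$, write the integrand of the total robust loss as
\[
\phi(\xb,y) \,:=\, \max_{\xb'\in\nei(\xb,\epsilon)}\Big(\ind\{f(\xb')\notin\{y,\rejsym\}\}\,\vee\,\ind\{f(\xb')=\rejsym\}\,\lrej(d(\xb,\xb'))\Big),
\]
and the integrand of the curve as
\[
\eta(\xb,y,\alpha) \,:=\, \max_{\xb'\in\nei(\xb,\alpha\epsilon)}\ind\{f(\xb')\neq y\}\,\vee\,\max_{\xb''\in\nei(\xb,\epsilon)}\ind\{f(\xb'')\notin\{y,\rejsym\}\},
\]
so that $s(\alpha)=\expec_{(\xb,y)\sim\calD}[\eta(\xb,y,\alpha)]$. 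The goal then reduces to proving $\phi(\xb,y) = -\int_0^1 \eta(\xb,y,\alpha)\,\mathrm{d}\lrej(\alpha\epsilon)$ for (almost) every $(\xb,y)$. Since $\lrej$ is non-increasing with $\lrej(0)=1$ and $\lrej(\epsilon)=0$, I would first observe that $\mu := -\mathrm{d}\lrej(\alpha\epsilon)$ is a non-negative Lebesgue--Stieltjes measure on $[0,1]$ of total mass $\lrej(0)-\lrej(\epsilon)=1$; because $\eta$ is $\{0,1\}$-valued, the right-hand side is just $\mu(\{\alpha:\eta(\xb,y,\alpha)=1\})$.

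I would establish the pointwise identity by casing on whether the adversary can force a misclassification inside the full budget $\epsilon$. In the first case there is some $\xb''\in\nei(\xb,\epsilon)$ with $f(\xb'')\notin\{y,\rejsym\}$; then the misclassification indicator equals $1$, so $\phi(\xb,y)=1$ and $\eta(\xb,y,\alpha)=1$ for every $\alpha\in[0,1]$, whence both sides equal $\mu([0,1])=1$. In the second case $f(\xb')\in\{y,\rejsym\}$ throughout $\nei(\xb,\epsilon)$; the misclassification term vanishes, and within the ball $f(\xb')\neq y$ is equivalent to $f(\xb')=\rejsym$, so $\eta(\xb,y,\alpha)=\ind\{\exists\,\xb'\in\nei(\xb,\alpha\epsilon):f(\xb')=\rejsym\}$.

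Setting $r^\star := \inf\{d(\xb,\xb'):\xb'\in\nei(\xb,\epsilon),\,f(\xb')=\rejsym\}$ (with $r^\star=\epsilon$ when $f$ never rejects in the ball), monotonicity of $\lrej$ forces the maximum in $\phi$ to be realized at the smallest rejection radius, i.e. $\phi(\xb,y)=\lrej(r^\star)$, while $\{\alpha:\eta(\xb,y,\alpha)=1\}=[r^\star/\epsilon,\,1]$ up to its left endpoint. Hence the right-hand side equals $\mu([r^\star/\epsilon,1])=\lrej(r^\star)-\lrej(\epsilon)=\lrej(r^\star)$, matching $\phi(\xb,y)$. Finally I would take expectations over $(\xb,y)\sim\calD$ and swap the expectation with the $\mu$-integral: since $\eta$ is bounded in $[0,1]$ and $\mu$ is a finite (indeed probability) measure, Tonelli's theorem justifies the interchange, giving $\tlrej_\epsilon(f;\lrej)=\expec[\phi]=-\int_0^1\expec[\eta(\xb,y,\alpha)]\,\mathrm{d}\lrej(\alpha\epsilon)=-\int_0^1 s(\alpha)\,\mathrm{d}\lrej(\alpha\epsilon)$.

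I expect the main obstacle to be the second case of the pointwise identity, namely arguing that $\sup$ of $\lrej(d(\xb,\xb'))$ over rejected perturbations is attained at the infimal radius $r^\star$ and that the super-level set in $\alpha$ is exactly $[r^\star/\epsilon,1]$. When the infimum is not attained or $\lrej$ has a jump at $r^\star$, these claims hold only up to the single threshold point $\alpha=r^\star/\epsilon$, which is both Lebesgue-null and $\mu$-null outside its atoms; I would therefore phrase the $\alpha$-identity as holding almost everywhere, which is all that the Stieltjes integral requires, and this is precisely why the hypotheses ``differentiable almost everywhere'' and the boundary conditions $\lrej(0)=1$, $\lrej(\epsilon)=0$ are invoked.
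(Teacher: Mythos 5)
Your proof is correct, and it takes a genuinely different route from the paper's. The paper argues at the distribution level: it first proves a more general identity (Lemma~\ref{lemma_total_loss} in Appendix~\ref{sec:proof_lemma1}) by partitioning the sample space into the events $\calW$ (some perturbation in the $\epsilon$-ball is misclassified), $\calC\setminus\calW$ (the clean input is rejected), and $\calR\setminus(\calW\cup\calC)$, evaluating the first two contributions directly, and sandwiching the third between upper and lower Riemann--Stieltjes sums of $\int_0^1\lrej(\alpha\epsilon)\,\mathrm{d}s(\alpha)$, using monotonicity of $\lrej$ and right-semicontinuity of $s$; the stated form of Lemma~\ref{corr_total_robust_loss} is then obtained by integration by parts via $\lrej(0)=1$ and $\lrej(\epsilon)=0$ (Appendix~\ref{sec:appendix_total_robust_loss}). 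You instead disintegrate: you prove a per-sample layer-cake identity $\phi(\xb,y)=\mu(\{\alpha:\eta(\xb,y,\alpha)=1\})$, where $\mu=-\mathrm{d}\lrej(\alpha\epsilon)$ is a probability measure precisely because of the boundary conditions, and then swap $\expec_{\calD}$ with the $\mu$-integral by Tonelli. Your two cases (misclassification achievable in the $\epsilon$-ball versus the infimal rejection radius $r^\star$) mirror the paper's event decomposition at the pointwise level, and your observation that $f(\xb')\neq y$ is equivalent to $f(\xb')=\rejsym$ inside the ball is exactly the content of the paper's Proposition~\ref{proposition1} in per-sample form. What your route buys: the integral against the honest measure $\mu$ always exists, so you need neither the right-semicontinuity assumption on $s$ nor the integration-by-parts step, and the argument directly targets the stated formula. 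What the paper's route buys: its intermediate Lemma~\ref{lemma_total_loss} holds for arbitrary monotone non-increasing $\lrej$ without the boundary conditions (at the price of the extra $p_\textrm{rej}$ term), and that more general form is reused in Appendix~\ref{sec:appendix_total_robust_loss} to compute the step and ramp losses.

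One caveat: your claim that the threshold $\alpha=r^\star/\epsilon$ is ``$\mu$-null outside its atoms'' does not fully dispose of the delicate case where $\mu$ has an atom \emph{exactly} at $r^\star/\epsilon$ --- e.g.\ a step-type loss whose jump coincides with the infimal rejection radius, where attainment of the infimum determines whether the atom should count --- and there the pointwise identity can be off by the atom's mass, potentially on a positive-probability set of samples. You should either assume $\lrej$ is continuous at the (distributionally relevant) thresholds or fix a one-sided convention for the Lebesgue--Stieltjes measure consistent with how the rejection set realizes its boundary. That said, the paper's own proof has exactly the same knife-edge (the Riemann--Stieltjes integral fails to exist when $s$ and $\lrej$ share a discontinuity, and the paper resolves step losses by a Dirac-delta convention), so your a.e.\ phrasing puts you at the same level of rigor as the paper; also note that ``differentiable almost everywhere'' is automatic for monotone functions and is not what resolves this issue.
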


Given this nice connection between the total robust loss and the curve of robust error, we advocate for evaluating a selective classifier $f$ using the curve $\{\rerej_{\epsilon}(f, \alpha) : \alpha \in [0,1]\}$. 
Without knowing the concrete definition of the rejection loss $\lrej$, this curve can give a holographic evaluation of the model. 
% Even when $f$ is trained with one specific rejection loss $\lrej$ in mind, the curve is helpful in providing a more complete evaluation, and is adaptable to different rejection losses at deployment time.
Even when $f$ is trained with a specific rejection loss $\lrej$ in mind, the curve is helpful in providing a more complete evaluation w.r.t.\ {\em any other} definition of the rejection loss at deployment time. 

\section{Theoretical Analysis}
\label{sec:theory}

%Our goal is to correctly classify small perturbations of the input and allow rejection of large perturbations when the classifier is not confident. 
Our goal is to learn a robust selective classifier, even when the training does not know the precise specification of the rejection loss for testing. 
% even when the definition of the rejection loss for testing is unknown at training time.
Some fundamental questions arise:
\begin{enumerate}
    \item[\textbf{Q1}.] \emph{Whether and under what conditions does there exist a selective classifier with a small total robust loss?} 
    %Most existing studies on robust classification with rejection allow zero loss for rejecting perturbations of any size. On the other hand, 
    Many applications could have a cost for certain rejections, e.g., rejecting very small perturbations is undesirable. To design algorithms for such applications, we would like to first investigate the existence of a solution with a small total robust loss.
    %For example, whether there exists a classifier $f$ that can correctly classify small perturbations, while rejecting or correctly classifying large perturbations. 
    \item[\textbf{Q2}.] \emph{Can allowing rejection lead to a smaller total robust loss?} 
    The question is essentially about the benefit of selective classification over traditional adversarial robustness (without rejection) that tries to correctly classify all perturbations, typically by adversarial training or its variants. 
\end{enumerate}
%To answer these questions, we will show that under mild conditions, there exists a classifier $f$ with rejection with small $\rerej_{\epsilon_0, \epsilon}(f)$. So it is possible to correctly classify small perturbations without rejecting them, answering the first question. Moreover, under the same conditions, all classifiers $g$ without rejection must have at least as large errors: $\rerej_{\epsilon_0, \epsilon}(g)=\ore_{\epsilon}(g) \ge \rerej_{\epsilon_0, \epsilon}(f)$. In fact, the error of $g$ may be much larger than that of $f$. This shows the benefit of allowing rejection, answering the second question.

The following theorem helps address these questions: by allowing rejection, there exists a selective classifier with a small total robust loss under proper conditions.
\begin{theorem}
\label{thm:main}
Consider binary classification. Let $f^*(\xb)$ be any classifier without a rejection option. For any $\delta \in [0,1]$ and $\epsilon \geq 0$, there is a selective classifier $f_\delta$ whose robust error curve is bounded by: 
\begin{align}
    \rerej_{ \epsilon}(f_\delta, \alpha) \,\le\, \ore_{\epsilon'}(f^*), ~\forall \alpha \in [0,1]
\end{align}
where $\epsilon' = \max\{(\alpha+\delta)\epsilon, (1-\delta)\epsilon\}$.
Moreover, the bound is tight: for any $\alpha \in [0,1]$, there exist simple data distributions and $f^*$ such that any $f$ must have $\rerej_{ \epsilon}(f, \alpha) \ge  \ore_{\epsilon'}(f^*)$.
\end{theorem}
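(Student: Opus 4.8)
The plan is to split the statement into the achievability direction (constructing $f_\delta$ and proving the upper bound $\rerej_{\epsilon}(f_\delta,\alpha)\le \ore_{\epsilon'}(f^*)$) and the matching lower bound (tightness). For achievability I would define $f_\delta$ from $f^*$ by a \emph{local-consistency} rule: set $f_\delta(\xb')=c$ if $f^*\equiv c$ on the entire ball $\nei(\xb',\delta\epsilon)$, and $f_\delta(\xb')=\rejsym$ otherwise (in binary classification, ``otherwise'' means $f^*$ takes both labels on that ball). The analysis is pointwise: fix $(\xb,y)$, suppose the integrand of $\rerej_{\epsilon}(f_\delta,\alpha)$ equals $1$, and produce a point in $\nei(\xb,\epsilon')$ where $f^*$ is wrong. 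If the first (small-ball) term fires there is $\xb'\in\nei(\xb,\alpha\epsilon)$ with $f_\delta(\xb')\ne y$: either $f_\delta(\xb')=\rejsym$, so $f^*$ is non-constant on $\nei(\xb',\delta\epsilon)$ and hence (by binariness) takes a value $\ne y$ at some point of that ball, which lies in $\nei(\xb,(\alpha+\delta)\epsilon)$; or $f_\delta(\xb')$ is the wrong label, so $f^*(\xb')\ne y$ already with $\xb'\in\nei(\xb,\alpha\epsilon)$. Either way $f^*$ errs within $(\alpha+\delta)\epsilon$.

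For the second (big-ball) term, if $\xb''\in\nei(\xb,\epsilon)$ has $f_\delta(\xb'')=c\notin\{y,\rejsym\}$ then $f^*\equiv c\ne y$ on all of $\nei(\xb'',\delta\epsilon)$; I then ``pull back'' toward $\xb$ along the segment by distance $\delta\epsilon$ (or all the way to $\xb$ if it is already within $\delta\epsilon$), landing at $\tilde\xb\in\nei(\xb'',\delta\epsilon)\cap\nei(\xb,(1-\delta)\epsilon)$ with $f^*(\tilde\xb)=c\ne y$, so $f^*$ errs within $(1-\delta)\epsilon$. Combining the two cases, $f^*$ is wrong somewhere in $\nei(\xb,\epsilon')$ with $\epsilon'=\max\{(\alpha+\delta)\epsilon,(1-\delta)\epsilon\}$, so the integrand of $\ore_{\epsilon'}(f^*)$ is also $1$; taking expectations over $\calD$ yields the inequality for every $\alpha$.

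For tightness I would take $\inputs=\reals$ with two atoms of mass $1/2$, namely $(0,+1)$ and $(D,-1)$ with $D=(1+\alpha)\epsilon$. Any selective $f$ that makes both integrands vanish would need $f=+1$ on $\nei(0,\alpha\epsilon)$ and $f\in\{-1,\rejsym\}$ on $\nei(D,\epsilon)$; since $D-\epsilon=\alpha\epsilon$, these closed balls meet at the point $\alpha\epsilon$, which is impossible, so $\rerej_{\epsilon}(f,\alpha)\ge \tfrac12$ for \emph{every} $f$. Choosing the constant classifier $f^*\equiv +1$ makes the $-1$ atom always misclassified, giving $\ore_{\epsilon'}(f^*)=\tfrac12$; hence $\rerej_{\epsilon}(f,\alpha)\ge \tfrac12=\ore_{\epsilon'}(f^*)$ for all $f$, and since the construction above attains $\tfrac12$ the bound is achieved, establishing tightness in the sense of the statement.

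The step I expect to be the main obstacle is making the tightness genuinely sharp \emph{in the radius} $\epsilon'$ across both regimes $\alpha\ge 1-2\delta$ and $\alpha\le 1-2\delta$ at once. The two-atom conflict caps the forced error of any $f$ at $\tfrac12$, whereas a threshold classifier's robust error steps through $0,\tfrac12,1$, and one checks via the identity $D-\epsilon'=\min\{(\alpha+\delta),(1-\delta)\}\epsilon$ that the ``near'' atom begins to be misclassified at the \emph{smaller} branch of the maximum rather than at $\epsilon'$. Reconciling these so that $\ore_{r}(f^*)$ jumps to the matching value exactly at $r=\epsilon'$ is delicate; I expect the cleanest instance to be the balanced one $\delta=(1-\alpha)/2$, where the two branches coincide and $D=2\epsilon'$ lines up with the forcing boundary $(1+\alpha)\epsilon$. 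Careful handling of closed- versus open-ball boundary cases, and possibly tuning the atom masses or the choice of $f^*$ per regime, will be needed to pin down the constant, whereas the achievability direction is routine once the consistency rule and the pull-back step are in place.
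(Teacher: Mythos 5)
Your achievability argument is essentially the paper's own proof in different notation. In binary classification, ``$f^*$ is non-constant on $\nei(\xb',\delta\epsilon)$'' is exactly the condition ``$\xb'\in\nei(f^*,\delta\epsilon)$'' used in the paper, so your local-consistency rule defines the same selective classifier $f_\delta$ as Eq.~(\ref{eq:selective-classifier}); your small-ball case split (rejection versus wrong hard label) matches the paper's first case, and your pull-back point is precisely the paper's $\bfz=\delta\xb+(1-\delta)\xb''$, which lies in $\nei(\xb'',\delta\epsilon)\cap\nei(\xb,(1-\delta)\epsilon)$. Both arguments produce a point of $\nei(\xb,\epsilon')$ where $f^*$ errs and conclude by taking expectations, so this half is correct and identical in substance (both also implicitly use that $d$ is norm-induced so the segment point satisfies the two distance bounds).

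For tightness you take a genuinely different, and simpler, route. The paper uses four atoms at $-4\epsilon,\,-\frac{\alpha\epsilon}{4},\,\frac{\alpha\epsilon}{4},\,4\epsilon$ with masses $\frac{1-\beta}{2},\frac{\beta}{2},\frac{\beta}{2},\frac{1-\beta}{2}$, sets $\delta=\frac{1-\alpha}{2}$ and $f^*(\xb)=\mathrm{sign}(\xb+\epsilon)$, and forces error $\frac{\beta}{2}$ from the two opposite-label atoms at distance $\frac{\alpha\epsilon}{2}$, whose $\alpha\epsilon$-balls both demand a hard but conflicting label; your two-atom construction instead plays the \emph{inner} constraint of one atom against the \emph{outer} constraint of the other, forcing error $\frac{1}{2}$ against a constant $f^*$. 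Your version verifies the stated claim (and the bound is attained, since $f_\delta\equiv +1$ achieves $\rerej_{\epsilon}(f_\delta,\alpha)=\frac{1}{2}=\ore_{\epsilon'}(f^*)$); the paper's version buys a tunable, non-maximal forced error $\frac{\beta}{2}\in(0,\frac14)$ witnessed by a non-degenerate $f^*$, at the cost of more atoms. Finally, your concluding worry targets a property strictly stronger than what the theorem asserts: the statement only requires some distribution and $f^*$ with $\rerej_{\epsilon}(f,\alpha)\ge\ore_{\epsilon'}(f^*)$ for every $f$, not that $\ore_r(f^*)$ jump exactly at $r=\epsilon'$. Indeed the paper's own example is flat near $\epsilon'$: the only error at radius $\epsilon'=\frac{(1+\alpha)\epsilon}{2}$ comes from the atom at $-\frac{\alpha\epsilon}{4}$, which is misclassified even cleanly, so $\ore_{\epsilon'}(f^*)=\ore_0(f^*)=\frac{\beta}{2}$ and the radius is no more ``pinned'' there than in your construction; and the two-regime issue is resolved exactly as you anticipated, by committing to the balanced $\delta=\frac{1-\alpha}{2}$ so that both branches of the maximum coincide. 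Hence no per-regime tuning or mass adjustment is actually needed, and your proof is complete as proposed (your closed-ball bookkeeping, making the single intersection point $\alpha\epsilon$ legitimate, is the right care to take).
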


% \begin{proofsketch}
\mypara{Proof Sketch:}
For any $r \ge 0$, let $\nei(f^*, r)$ denote the region within distance $r$ to the decision boundary of $f^*$:
$
    \nei(f^*, r) := \{\xb \in \inputs: \exists \xb' \in \nei(\bfx, r), f^*(\xb') \neq f^*(\xb)\}.
$
Consider a parameter $\delta \in [0, 1]$ and construct $f_\delta$ as follows: %$f_\delta(\xb) = \rejsym$ if $\xb \in \nei(f^*, \delta\epsilon)$ and $f_\delta(\xb) = f^*(\xb)$ otherwise.
\begin{align} \label{eq:selective-classifier}
    f_\delta(\xb) ~:=~ 
    \begin{cases}
    \rejsym & \textrm{if~} \xb \in \nei(f^*, \delta\epsilon), 
    \\
    f^*(\xb) & \textrm{otherwise.}  
    \end{cases}
\end{align}
This is illustrated in Fig.~\ref{fig:satr_illustration}.
We will show that any clean data $(\xb, y)$ contributing error in $\rerej_{\epsilon}(f_\delta, \alpha)$ must contribute error in $\ore_{\epsilon'}(f^*)$, so $\rerej_{ \epsilon}(f_\delta, \alpha) \le \ore_{\epsilon'}(f^*)$. 
Intuitively, $f_\delta$ and $f^*$ differ on the region $\nei(f^*, \delta\epsilon)$.
If $f_\delta$ gets a loss on a (possibly perturbed) input $\xb' \in \nei(f^*, \delta\epsilon)$, then the original input $\xb$ is close to $\xb'$ and thus close to the boundary of $f^*$. Therefore, $\xb$ can be perturbed to cross the boundary and contribute error in $\ore_{\epsilon'}(f^*)$. 
The complete proof is given in Appendix~\ref{sec:proof_thm1}. 
$\hfill \qedsymbol$
% \end{proofsketch}

%
\begin{figure}[!t]
%\vspace{-3mm}
\centering
\includegraphics[width=0.45\textwidth]{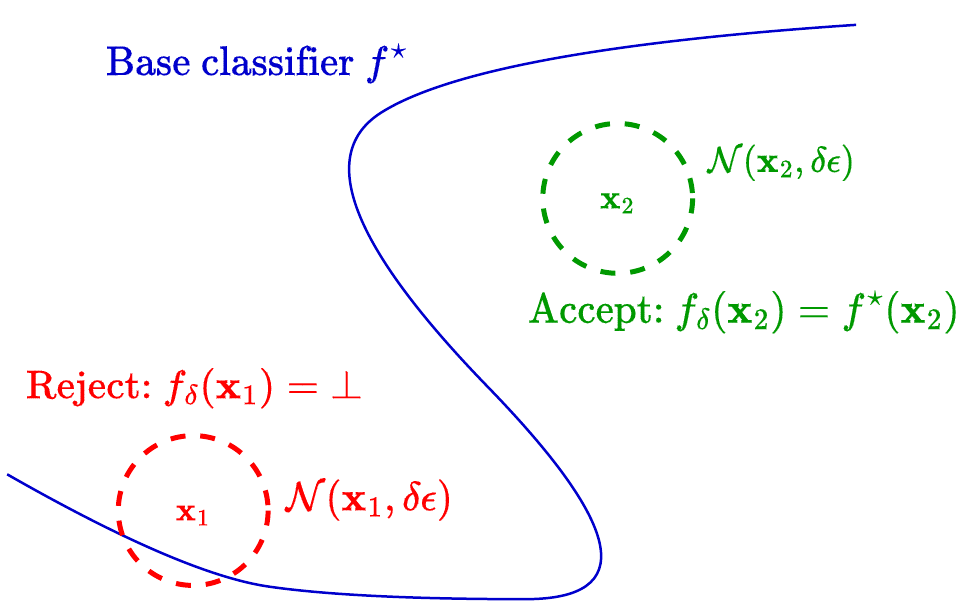}
%\vspace{-2mm}
\caption{
\small Proposed construction of a selective classifier $f_\delta$ from the base classifier $f^*$ for Theorem~\ref{thm:main}.
Input $\bfx_1$ is close to the boundary of $f^*$ and is rejected, while input $\bfx_2$ is accepted.
}
\vspace{-3mm}
\label{fig:satr_illustration}
\end{figure}

\mypara{Condition for Successful Selective Classification.}
The theorem shows that when the data allows a small robust error at budget $\epsilon' = \max\{(\alpha+\delta)\epsilon, (1-\delta)\epsilon\}$, there is a selective classifier $f_\delta$ with a small robust error with rejection $\rerej_{\epsilon}(f_\delta, \alpha)$, which is bounded by the small robust error $\ore_{\epsilon'}(f^*)$.
This shows a trade-off between the performance for small and large $\alpha$. For $\alpha < 1-\delta$, we have $\epsilon' < \epsilon$ and $\rerej_{\epsilon}(f_\delta, \alpha) \le \ore_{\epsilon'}(f^*) \le \ore_{\epsilon}(f^*)$. Note that $\ore_{\epsilon}(f^*) = \rerej_{\epsilon}(f^*, \alpha)$, so $\rerej_{\epsilon}(f_\delta, \alpha) \le \rerej_{\epsilon}(f^*, \alpha)$, i.e., $f_\delta$ can be better than $f^*$ for such an $\alpha$.
However, for $\alpha \ge 1-\delta$, the theorem does not guarantee $\rerej_{\epsilon}(f_\delta, \alpha) \le \ore_{\epsilon}(f^*)$, \ie $f_\delta$ may be worse than $f^*$ for such an $\alpha$.
%~\footnote{We can let $\delta = 0$, then $f_\delta=f^*$ and is no worse than $f^*$ for the large $\alpha$, but it also loses the advantage for small $\alpha$.}
This trade-off is necessary in the worst case since the bound is tight, and is also observed in our experiments in Section~\ref{sec:experiment}.
It is favorable towards a lower total robust loss when the rejection loss is monotonically non-increasing, as discussed in detail below.

\mypara{Benefit of Selective Classification.}
The trade-off discussed above shows that by allowing rejection, the selective classifier $f_\delta$ can potentially have a smaller total robust loss than the classifier $f^*$ without rejection. 
When $\alpha < 1 - \delta$, 
$
\rerej_{\epsilon}(f_\delta, \alpha) ~\le~  \ore_{\epsilon'}(f^*) ~\le~ \ore_{\epsilon}(f^*) ~=~ \rerej_{ \epsilon}(f^*, \alpha).
$
There can be a big improvement: when a large fraction of correctly-classified clean inputs have distances in $(\epsilon', \epsilon)$ to the boundary of $f^*$, we have $\ore_{\epsilon'}(f^*) \ll \ore_{\epsilon}(f^*)$ and thus $\rerej_{\epsilon}(f_\delta, \alpha) \ll \rerej_{ \epsilon}(f^*, \alpha)$. This can lead to a significant improvement in the total robust loss. When $\alpha \ge 1 - \delta$, $\rerej_{\epsilon}(f, \alpha)$ may be worse than $\rerej_{ \epsilon}(f^*, \alpha)$, but that only leads to milder rejection loss when $\lrej$ is monotonically decreasing. 
Then overall, the total robust loss can be improved, while the decreased amount would depend on the concrete data distribution and rejection loss. 

Theorem~\ref{thm:closed-form-solution} in Appendix \ref{sec:closed-form-solution} provides a rigorous and fine-grained analysis of the robustness curve and the total robust loss of $f_\delta$.
In general, the total robust losses of $f_\delta$ and $f^*$ only differ on  points with distances larger than $(1-\delta)\epsilon$ from the boundary of $f^*$. For such points, $f_\delta$ can correctly classify all their small perturbations of magnitude at most $\epsilon_0 := (1-2\delta)\epsilon$, and reject or correctly classify their larger perturbations. Intuitively, it correctly classifies small perturbations and rejects or correctly classifies large perturbations. 
Then $f_\delta$ only gets rejection loss for large magnitudes, and thus potentially gets smaller total robust losses than $f^*$ for monotonically non-increasing rejection losses. 

For a concrete example, suppose $f^*$ has $0$ standard error on the data distribution and each data point has distance at least $\frac{3\epsilon}{4}$ to the decision boundary of $f^*$.
Then when $\delta=\frac{1}{4}$, for any data point, $f_\delta$ can correctly classify all the small perturbations with magnitude bounded by $\frac{\epsilon}{2}$, and can reject or correctly classify the larger perturbations. Then $\rerej_{\epsilon}(f_\delta, \alpha)=0$ for $\alpha\leq \frac{1}{2}$.
For any step rejection loss with parameter $\alpha_0 \leq \frac{1}{2}$, the total robust loss of $f_\delta$ is 0, \ie this fixed $f_\delta$ can work well for a wide range of step rejection losses. In contrast, the total robust loss of $f^*$ is as large as $\ore_{\epsilon}(f^*)$, which is the probability mass of points within $\epsilon$ distance to the boundary of $f^*$. If there is a large mass of points with distance in $[\frac{3\epsilon}{4}, \epsilon]$, then $f_\delta$ has a significant advantage over $f^*$.

% \textcolor{blue}{(Need to change this)}
% Note that the constructed $f_\delta$ is not efficient since it requires checking all perturbations of an input at test time. Next, we will introduce how to construct $f_\delta$ in an efficient way in Section~\ref{sec:method}.

\section{Proposed Defense}
\label{sec:method}
 
We propose a defense method called adversarial training with consistent prediction-based rejection (CPR), following our theoretical analysis. 
CPR aims to learn the $f_\delta$ in our  analysis, where PGD is used to check the  condition for rejection efficiently at test time.
Our defense is quite general in that it can take any base classifier $f^*$ to construct the selective classifier $f_\delta$.
Finally, to evaluate the robustness of the defense, we also discuss the design of adaptive attacks.

\subsection{Consistent Prediction-Based Rejection}

The CPR defense is essentially the selective classifier $f_\delta$ in Eq.~(\ref{eq:selective-classifier}) in Theorem~\ref{thm:main}: given a base classifier and an input $\xb$, we define a selective classifier that rejects the input whenever the predictions in a small neighborhood of $\xb$ are not consistent;  otherwise it returns the class prediction of $\xb$. 
Equivalently, it rejects inputs within a small distance to the decision boundary of the given base classifier.

To formalize the details, we introduce some notations. 
Consider a base classifier without rejection (i.e., $f^*$) realized by a network with parameters $\bftheta$, whose predicted class probabilities are $\,\mathbf{h}(\xb \semic \bftheta) \!=\! [h_1(\xb \semic \bftheta), \cdots, h_k(\xb \semic \bftheta)] \!\in\! \simplex_k$. 
The class prediction is $\widehat{y}(\xb) := \arg\!\max_{y \in \outputs} \,h_y(\xb \semic \bftheta)$, and the cross-entropy loss is $\,\ell_{CE}(\xb, y) = -\log h_y(\xb \semic \bftheta)$. CPR aims to learn a selective classifier (corresponding to $f_\delta$): 
\begin{align*}
%\label{eq:consistent_prediction_selective1}
    \bfg_{\widetilde{\epsilon}}(\bfx \semic \bftheta) = 
    \begin{cases}
    \bfh(\bfx \semic \bftheta) & \textrm{if}~~ \mymax_{\widetilde{\xb} \in \nei(\xb, \widetilde{\epsilon})} \indicator\{\widehat{y}(\widetilde{\xb}) \neq \widehat{y}(\xb) \} \,=\, 0
    \\
    \rejsym    &   \textrm{otherwise},
    \end{cases}
\end{align*}
where $\widetilde{\epsilon} > 0$ is a hyper-parameter we call the {\em consistency radius}.
To check the consistent-prediction condition efficiently, we use the projected gradient descent (PGD) method~\cite{madry2018towards} to find the worst-case perturbed input $\widetilde{\xb}$. The selective classifier is then redefined as:
\begin{align}
\label{eq:consistent_prediction_selective2}
    \bfg_{\widetilde{\epsilon}}(\bfx \semic \bftheta) & ~=~ 
    \begin{cases}
    \bfh(\bfx \semic \bftheta) & \textrm{if}~~ \widehat{y}(\widetilde{\xb}) = \widehat{y}(\xb)
    \\
    \bot    &   \textrm{otherwise},
    \end{cases}
    \\
    \text{where~} \,\widetilde{\xb} & \,=\, \argmax_{\widetilde{\xb} \in \nei(\xb, \widetilde{\epsilon})} \,\ell_{CE}(\widetilde{\xb}, \widehat{y}(\xb)).
\end{align}
Then $\widehat{y}(\xb)$ is the predicted class when the input is accepted.
% where $\widetilde{\epsilon} > 0$ is the {\em consistency radius} that defines how close to the decision boundary of $\bfh$ we would like to accept inputs. The choice of $\widetilde{\epsilon}$ therefore controls the rejection rate of the selective classifier.
% In the above definition, since it is hard to solve the maximization to check for consistent prediction, we replace the hard $0-1$ error with the cross-entropy loss, and use the projected gradient descent (PGD) method~\cite{madry2018towards} to find the worst-case perturbed input $\widetilde{\xb}$. 
% Given an input $\xb$ with class prediction $\widehat{y}(\xb)$, we find $\,\widetilde{\xb} \,=\, \argmax_{\widetilde{\xb} \in \nei(\xb, \widetilde{\epsilon})} \,\ell_{CE}(\widetilde{\xb}, \widehat{y}(\xb))$. The selective classifier is then defined as
% \begin{align}
% \label{eq:consistent_prediction_selective2}
%     \bfg_{\widetilde{\epsilon}}(\bfx \semic \bftheta) ~=~ 
%     \begin{cases}
%     \bfh(\bfx \semic \bftheta) & \textrm{if}~~ \widehat{y}(\widetilde{\xb}) = \widehat{y}(\xb)
%     \\
%     \bot    &   \textrm{otherwise},
%     \end{cases}
% \end{align}
% with $\widehat{y}(\xb)$ being the prediction corresponding to accept.
%
The details of our CPR defense are given in Algorithm~\ref{alg:consistent-prediction-rej}.
% The classifier $f$ could be trained using any algorithm, e.g. standard adversarial training using PGD or TRADES; it could also be trained using a non-robust loss function.
Note that %for a given setting of the hyper-parameters ($\widetilde{\epsilon}, k, \text{ and } \eta$), 
the PGD method in the algorithm is deterministic. 
Thus, the mapping $\,T(\xb) = \xb_m$ is deterministic and we can summarize the defense as: if $\,\widehat{y}(T(\xb)) \neq \widehat{y}(\xb)$, then reject $\xb$\,; otherwise accept $\xb$ and output the predicted class $\widehat{y}(\xb)$.
\begin{algorithm}[!t]
  \small
  \caption{\textsc{\small Consistent prediction-based Rejection}}
  \begin{algorithmic}[1]
    \REQUIRE A base model $\bfh$, a test input $\xb$ (potentially with adversarial perturbations), the consistency radius $\widetilde{\epsilon} > 0$, the number of PGD steps $m \geq 1$, and the PGD step size $\eta > 0$.
    \STATE Class prediction: $y = \widehat{y}(\xb) = \argmax_{y' \in \outputs} h_{y'}(\xb \semic \bftheta)$
    \STATE Initialize the adversarial input: $\xb_0 = \xb$
    \FOR{$i = 1, 2, \dots, m$ }
    \STATE $\widehat{\xb}_i \,=\, \xb_{i-1} \,+\, \eta\, \textrm{sign}\big( \nabla_{\xb_{i-1}} \ell_{CE}(\xb_{i-1}, y) \big)$
    \STATE $\xb_i \,=\, \text{Proj}\big( \widehat{\xb}_i, \nei(\xb, \widetilde{\epsilon}) \big)$
    ~~\COMMENT{Project $\widehat{\xb}_i$ to $\nei(\xb, \widetilde{\epsilon})$}
    \ENDFOR
    \STATE Define $T(\xb) := \xb_m$
    \ENSURE If $\,\argmax_{y' \in \outputs} h_{y'}(T(\xb) \semic \bftheta) \neq y$, reject $\xb\,$; otherwise, accept $\xb$ and output the predicted class $y$.
  \end{algorithmic}
  \label{alg:consistent-prediction-rej}
\end{algorithm}
%

% Our defense can be combined with any robust base model $\bfh$, which can be trained using any adversarial training approach. For this, we consider the well known standard adversarial training (AT)~\cite{madry2018towards} and TRADES~\cite{zhang2019theoretically}.
The robust base model for our defense $\bfh$ can be trained using methods such as standard adversarial training (AT)~\cite{madry2018towards}, TRADES~\cite{zhang2019theoretically}, and MART~\cite{wang2020improving} (we will focus on the first two methods).
An advantage of the proposed defense is that it is agnostic of the rejection loss function used for evaluation, and therefore can generalize to multiple choices of the rejection loss function (see Section~\ref{sec:experiment}).
We note that prior works such as \citet{laidlaw2019playing} and \citet{tramer2021detecting} have also proposed the idea of using the consistency check for rejecting inputs to a classifier (equivalent to rejecting inputs based on their distance to the decision boundary). However, we are the first to systematically implement and evaluate such a defense, and propose strong adaptive attacks for evaluating it.

\subsection{Adaptive Attacks}
\label{sec:adaptive_attacks_main}

In this section, we design principled adaptive attacks to evaluate the proposed defense. 
By Definition~\ref{def:robust-curve} and Proposition~\ref{proposition1}, in order to compute the robustness curve, we need to design both inner and outer attacks and then combine them to get the final attack. We design multiple inner and outer attacks sketched below and ensemble them to get the strongest final attack. More details can be found in Appendix~\ref{sec:app_adaptive_attacks}

\mypara{Inner Attack.}
For a given $\alpha \in [0, 1]$ on the robustness curve, the goal of the inner attack is to find an input $\bfx^\prime \in \nei(\bfx, \alpha\epsilon)$ that is rejected. For CPR, this translates to finding $\bfx^\prime \in \nei(\bfx, \alpha\epsilon)$ such that the base model has different predictions on $\bfx^\prime$ and $T(\bfx^\prime)$.

Our first attack is Low-Confidence Inner Attack (\textbf{LCIA}), which finds $\bfx^\prime$ by minimizing the confidence of the base model within $\nei(\bfx, \alpha\epsilon)$.
Recall that the mapping $T(\bfx^\prime)$ attempts to minimize the base model's probability on the predicted class $\widehat{y}(\bfx^\prime)$.
So, if the base model has low confidence on $\bfx^\prime$, then it will very likely have even lower probability for $\widehat{y}(\bfx^\prime)$ on $T(\bfx^\prime)$, and thus have different predictions on $T(\bfx^\prime)$ and $\bfx^\prime$. 
The attack objective in this case is $\,\bfx^\prime = \argmax_{\bfz \in \nei(\bfx, \alpha\epsilon)} \,-\log h_{\max}(\bfz \semic \bftheta)$,
% \begin{align}
%     \bfx^\prime ~=~ \argmax_{\bfz \in \nei(\bfx, \alpha\epsilon)} & -\log h_{\max}(\bfz \semic \bftheta),  
% \end{align}
where $h_{\max}(\bfz \semic \bftheta) = \max_{y \in \outputs} h_y(\bfz \semic \bftheta) $ is the prediction confidence. We use the temperature-scaled log-sum-exponential approximation to the $\max$ function in order to make it differentiable.  
% We also consider the Consistent-Low-Confidence Inner Attack (\textbf{CLCIA}) that not only minimizes the confidence of the base model on $\bfx'$, but also minimizes the confidence on $T(\bfx')$.
We also consider a variant of LCIA, named Consistent-Low-Confidence Inner Attack (\textbf{CLCIA}), that minimizes the confidence of the base model on both $\bfx'$ and $T(\bfx')$ (details in Appendix~\ref{sec:app_adaptive_attacks}).
 
The third attack is Prediction-Disagreement Inner Attack (\textbf{PDIA}), an adaptive multi-target attack based on the BPDA method~\cite{athalye2018obfuscated}. 
We use BPDA since $T(\bfx)$ does not have a closed-form expression and is not differentiable.
This attack considers all possible target classes and attempts to find $\xb^\prime$ such that the base model has high probability for the target class at $\xb^\prime$ and a low probability for the target class at $T(\xb^\prime)$ (thereby encouraging rejection).
% We also consider a multi-targeted BPDA attack that can achieve this inner attack goal. 
The attack objective is: for each target class $j=1,\dots, k$, 
\begin{align*}
    \bfx_j^\prime = \argmax_{\bfz \in \nei(\bfx, \alpha\epsilon)} \Big[ \log h_j(\bfz \semic \bftheta) \,-\,  \log h_j(T(\bfz) \semic \bftheta) \Big].
\end{align*}
Then we select the strongest adversarial example $\bfx^\prime$ from the multiple target classes 
as follows:
%that corresponds to the largest attack objective, \ie 
\begin{align}
\bfx^\prime ~=~& \bfx_{j^\star}^\prime  ~~~~\text{where}~~~~ \\ \nonumber 
j^{\star} ~=~& \argmax_{j \in [k]} \, \Big[ \log h_j(\bfx_j^\prime \semic \bftheta) \,-\,  \log h_j(T(\bfx_j^\prime) \semic \bftheta) \Big].
\end{align} 

\mypara{Outer Attack.}
Given a clean input $(\bfx, y)$, the goal of the outer attack is to find $\bfx'' \in \nei(\bfx, \epsilon)$ such that the base model has a consistent incorrect prediction with high confidence on both $\bfx''$ and $T(\bfx'')$ (ensuring that $\bfx''$ is accepted and mis-classified). 
We propose the Consistent High-Confidence Misclassification Outer Attack (\textbf{CHCMOA}), an adaptive multi-target attack based on BPDA. 
The attack objective is: for each target class $j \in [k] \setminus \{y\}$,
\begin{align*}
 \bfx_j'' = \argmax_{\bfz \in \nei(\bfx, \epsilon)} \Big[ \log h_j(\bfz \semic \bftheta) \,+\, \log h_j(T(\bfz) \semic \bftheta) \Big].
\end{align*}
Then we select the strongest adversarial example $\bfx''$ via: 
\begin{align}
\bfx'' ~=~& \bfx_{j^\star}'' ~~~~\text{where}~~~~ \\ \nonumber
j^\star ~=~& \argmax_{j\in [k] \setminus \{y\}} \Big[ \log h_j(\bfx_j'' \semic \bftheta) \,+\, \log h_j(T(\bfx_j'') \semic \bftheta) \Big].
\end{align}
We also consider the High-Confidence Misclassification Outer Attack (\textbf{HCMOA}), which solves for an adversarial input $\bfx''$ such that the base model has incorrect prediction with high confidence on $\bfx''$ (details in Appendix~\ref{sec:app_adaptive_attacks}).
% As a variation to CHCMOA, we consider the High-Confidence Misclassification Outer Attack (\textbf{HCMOA}), which solves for adversarial input $\bfx''$ such that the base model has incorrect prediction with high confidence on $\bfx''$ (see Appendix~\ref{sec:app_adaptive_attacks}).
%In principle, HCMOA should be weaker than CHCMOA. Our experiments verify that this is indeed the case.  

\mypara{Final Attack.} 
To get the strongest evaluation, our final attack is an {\em ensemble of all attacks}, including those designed above and some existing ones.
%Note that we use an ensemble of attacks including the attacks described above to evaluate the robustness of our method. 
We apply each attack in the ensemble with different hyper-parameters on each clean test input. 
If any of the attacks achieves the attack goal on an input, then the attack is considered to be successful on it. 

\section{Experiments}
\label{sec:experiment}

In this section, we perform experiments to evaluate the proposed method CPR and compare it with competitive baseline methods.
%~\footnote{The code for our work can be found at: \url{https://github.com/jfc43/stratified-adv-rej} }. 
The code for our work can be found at \url{https://github.com/jfc43/stratified-adv-rej}.
Our main findings are summarized as follows: {\bf 1)} CPR outperforms the baselines significantly in terms of the total robust loss with respect to different rejection losses, under both seen attacks and unseen attacks; {\bf 2)}  CPR usually has significantly higher robustness with rejection compared to the baselines for small to moderate $\alpha$ under both seen attacks and unseen attacks; 
{\bf 3)}  CPR also has strong performance under the traditional metrics such as robust accuracy with detection ($1 - \rerej_{\epsilon}(f, 0)$). 
% \begin{itemize}%[leftmargin=*, topsep=1pt, noitemsep]
% 	\item[{\bf 1)}] SATR outperforms the baselines significantly in terms of the total robust loss with respect to different rejection losses, under both seen attacks and unseen attacks;
% 	\item[{\bf 2)}] SATR usually has significantly higher robustness with rejection compared to the baselines for small to moderate $\alpha$ under both seen attacks and unseen attacks; 
% 	\item[{\bf 3)}] SATR also has strong performance under the traditional metrics. 
% 	% A small non-zero $\epsilon_0$ in SATR is required for training to ensure that SATR has overall good performance.
% \end{itemize}

\begin{table*}[!t]
    \centering
    \begin{adjustbox}{width=2\columnwidth,center}
		\begin{tabular}{l|l|c|c|c|c|c|c|c|c|c|c}
			\toprule
			\multirow{3}{0.08\linewidth}{Dataset} &  \multirow{3}{0.12\linewidth}{Method}  &  \multicolumn{5}{c|}{Total Robust Loss under Seen Attacks $\downarrow$} &  \multicolumn{5}{c}{Total Robust Loss under Unseen Attacks $\downarrow$} \\ \cline{3-12}
			  &  & \multicolumn{3}{c|}{Step Rej. Loss} & \multicolumn{2}{c|}{Ramp Rej. Loss} & \multicolumn{3}{c|}{Step Rej. Loss} & \multicolumn{2}{c}{Ramp Rej. Loss} \\ \cline{3-12}
			  &  & $\alpha_0=0.0$  & $\alpha_0=0.05$ & $\alpha_0=0.1$ & $t=2$ & $t=4$ & $\alpha_0=0.0$  & $\alpha_0=0.05$ & $\alpha_0=0.1$ & $t=2$ & $t=4$ \\  \hline \hline
			\multirow{4}{0.08\linewidth}{MNIST} 
            & AT+CR & 0.084 & 0.085 & 0.085 & 0.097 & 0.087 & 1.000 & 1.000 & 1.000 & 1.000 & 1.000 \\
            & TRADES+CR & 0.060 & 0.061 & 0.061 & \textbf{0.075} & 0.065 & 1.000 & 1.000 & 1.000 & 1.000 & 1.000 \\
            & CCAT & 0.168 & 0.926 & 1.000 & 0.945 & 0.894 & 0.245 & 0.994 & 1.000 & 0.955 & 0.914 \\
            & RCD & 0.135 & 0.135 & 0.135 & 0.135 & 0.135 & 1.000 & 1.000 & 1.000 & 1.000 & 1.000 \\
            & ATRR & 0.088 & 0.088 & 0.089 & 0.107 & 0.095 & 1.000 & 1.000 & 1.000 & 1.000 & 1.000 \\ \cline{2-12}
            & AT+CPR (Ours) & \textbf{0.039} & \textbf{0.039} & \textbf{0.040} & 0.133 & 0.059 & \textbf{0.096} & \textbf{0.097} & \textbf{0.098} & \textbf{0.187} & \textbf{0.116} \\
            & TRADES+CPR (Ours) & 0.042 & 0.042 & 0.042 & 0.130 & \textbf{0.058} & 0.133 & 0.133 & 0.133 & 0.208 & 0.145 \\ \hline \hline
			\multirow{4}{0.08\linewidth}{SVHN}
			& AT+CR & 0.539 & 0.539 & 0.539 & 0.542 & 0.540 & 0.882 & 0.882 & 0.882 & 0.882 & 0.882 \\
                & TRADES+CR & 0.471 & 0.472 & 0.472 & 0.475 & 0.473 & 0.874 & 0.874 & 0.874 & 0.874 & 0.874 \\
                & CCAT & 0.547 & 1.000 & 1.000 & 0.977 & 0.956 & 0.945 & 1.000 & 1.000 & 0.999 & 0.998 \\
                & RCD & 0.662 & 0.662 & 0.662 & 0.662 & 0.662 & 0.903 & 0.903 & 0.903 & 0.903 & 0.903 \\
                & ATRR & 0.552 & 0.552 & 0.552 & 0.566 & 0.556 & 0.885 & 0.885 & 0.885 & 0.891 & 0.887 \\ \cline{2-12}
                & AT+CPR (Ours) & 0.442 & 0.442 & 0.442 & 0.454 & 0.444 & 0.853 & 0.853 & 0.853 & 0.856 & 0.854 \\
                & TRADES+CPR (Ours) & \textbf{0.380} & \textbf{0.380} & \textbf{0.380} & \textbf{0.390} & \textbf{0.381} & \textbf{0.813} & \textbf{0.813} & \textbf{0.813} & \textbf{0.816} & \textbf{0.814} \\ \hline \hline
			\multirow{4}{0.08\linewidth}{CIFAR-10}
			& AT+CR & 0.500 & 0.501 & 0.501 & 0.507 & 0.503 & 0.895 & 0.895 & 0.895 & 0.895 & 0.895 \\
                & TRADES+CR & 0.500 & 0.500 & 0.500 & 0.503 & 0.501 & 0.849 & 0.849 & 0.849 & 0.849 & 0.849 \\
                & CCAT & 0.723 & 1.000 & 1.000 & 0.984 & 0.969 & 0.912 & 1.000 & 1.000 & 0.998 & 0.997 \\
                & RCD & 0.533 & 0.533 & 0.533 & 0.533 & 0.533 & 0.905 & 0.905 & 0.905 & 0.905 & 0.905 \\
                & ATRR & 0.512 & 0.513 & 0.513 & 0.518 & 0.515 & 0.887 & 0.887 & 0.887 & 0.887 & 0.887 \\ \cline{2-12}
                & AT+CPR (Ours) & 0.433 & 0.433 & 0.433 & 0.443 & 0.435 & 0.829 & 0.829 & 0.829 & 0.836 & 0.830 \\
                & TRADES+CPR (Ours) & \textbf{0.429} & \textbf{0.429} & \textbf{0.429} & \textbf{0.438} & \textbf{0.430} & \textbf{0.781} & \textbf{0.781} & \textbf{0.781} & \textbf{0.787} & \textbf{0.782} \\
			\bottomrule
		\end{tabular}
	\end{adjustbox}
	\caption[]{\small The total robust loss for different rejection loss functions under both seen and unseen attacks. The best result is \textbf{boldfaced}. }
	\label{tab:total-loss-results}
\end{table*}

\subsection{Experimental Setup}
We briefly describe the experimental setup here. The detailed setup can be found in Appendix~\ref{sec:detailed-setup}.

\mypara{Datasets. } 
We use the MNIST~\cite{lecun1998mnist}, SVHN~\cite{netzer2011reading}, and CIFAR-10~\cite{krizhevsky2009learning} datasets. Each dataset has a test set containing 10,000 images. Following~\cite{stutz2020ccat}, we compute the accuracy of the models on the first 9,000 images of the test set and compute the robustness of the models on the first 1,000 images of the test set. We use the last 1,000 images of the test set as a held-out validation set for selecting the hyper-parameters of the methods (\eg the rejection threshold).  

\mypara{Baselines. } 
We consider the following baselines: (1) AT + CR: adversarial training (AT)~\cite{madry2018towards} with Confidence-based Rejection; (2) TRADES + CR: TRADES~\cite{zhang2019theoretically} with Confidence-based Rejection; (3) CCAT~\cite{stutz2020ccat}; (4) RCD~\cite{sheikholeslami2021provably}; (5) ATRR~\cite{pang2022two}. 

\mypara{CPR Setup. } 
CPR requires a base model. We consider robust base models trained using the well-known standard adversarial training (AT)~\cite{madry2018towards} and TRADES~\cite{zhang2019theoretically}. Our experimental results show that CPR can boost the robustness of these models.

CPR has three hyper-parameters: the consistency radius $\tilde{\epsilon}$, the number of PGD steps $m$, and the PGD step size $\eta$. The $\widetilde{\epsilon}$ value controls the rejection rate of the selective classifier. We choose it such that CPR does not reject more than a small fraction of the correctly-classified clean inputs (however, it can reject a majority of the mis-classified inputs that are likely to be close to the decision boundary of the base model; rejecting such inputs is reasonable).
The rejection rate of the selective classifier $\bfg_{\widetilde{\epsilon}}$ on correctly-classified clean inputs from a distribution $\mathcal{D}$ is given by $\,\expec_{(\bfx, y) \sim \mathcal{D}}\big[ \indicator\{ \bfg_{\widetilde{\epsilon}}(\bfx \semic \bftheta) = \bot \} ~|~ \widehat{y}(\bfx) = y \big]$, 
which can be estimated using a labeled validation dataset.
We choose a large enough $\widetilde{\epsilon} > 0$ such that this rejection rate is approximately  $\,p_{rej}$, where $p_{rej}$ is a user-specified rejection rate.  The number of PGD steps $m$ also affects the robustness.
We use a validation set to select suitable $\tilde{\epsilon}$ and $m$ (details in Appendix~\ref{sec:cpr-hyper-selection}).
We do not tune the PGD step size $\eta$ and set it to a fixed value. 

% Specifically, on MNIST, we set $\tilde{\epsilon}=0.1$ such that $p_{rej} = 1\%$, and set $m=20$ and $\eta=0.01$. On SVHN and CIFAR-10, we set $\tilde{\epsilon}=0.0055$ such that $p_{rej} =5\%$, and set $\eta=0.001$ and $m=10$.
On MNIST, we set $\tilde{\epsilon}=0.1$ (such that $p_{rej} = 1\%$), $\,m=20$, and $\eta=0.01$.
On SVHN and CIFAR-10, we set $\tilde{\epsilon}=0.0055$ (such that $p_{rej} =5\%$), $\,m=10$, and $\eta=0.001$.

\mypara{Evaluation Metrics. }
We use the \textit{robustness curve} at adversarial budget $\epsilon$ (Eq.~(\ref{eq:robust_accuracy_rejection})) and the total robust loss to evaluate all the methods.  
The curve is calculated for $\alpha$ values from the set $\,\{0, 0.01, 0.05, 0.1, 0.15, 0.2, 0.25, 0.3, 0.5, 1\}$ since in practice, we are mainly interested in the robustness with rejection for small values of $\alpha$.
We plot the robustness curve, with $\alpha$ along the $x$-axis and the robustness with rejection along the $y$-axis.
% A larger value of robust accuracy corresponds to better performance.
% Referring to Fig.~\ref{fig:main-results}, at the right end of the curve ($\alpha = 1$), the robustness $\rarej_{\epsilon}(f, 1)$ corresponds to the standard adversarial robustness (without rejection).
% At the left end of the curve ($\alpha = 0$), the robustness $\rarej_{\epsilon}(f, 0)$ corresponds to the robustness with rejection defined using Eq. (\ref{eq:robust_error_rejection_tramer})~\cite{tramer2021detecting}.
We also evaluate the \textit{total robust loss} $\tlrej_\epsilon(f; \lrej)$ with respect to the ramp rejection loss (Eq. (\ref{eq:ramp_rejection_loss})) for $t \in \{2, 4\}$, and the step rejection loss (Eq. (\ref{eq:step_rejection_loss})) for $\alpha_0 \in \{0, 0.05, 0.1\}$.
This gives a single metric that summarizes the robustness curve of the different methods.

% We also evaluate the \textit{total robust loss} $\tlrej_\epsilon(f; \lrej)$ with respect to a few different instantiations of the rejection loss $\lrej(r)$. This gives a single metric that summarizes the robustness curve of the different methods. The total robust loss is computed using Eq. (\ref{eq:total_loss_compute}) (from Lemma~\ref{lemma_total_loss}) for specific choices of $\lrej$. The integral is simplified using the product rule (or integration by parts) and then numerically approximated using the trapezoidal rule, as discussed in Appendix~\ref{sec:appendix_total_robust_loss}.
% We consider the following rejection loss functions for evaluation: 1) the step rejection loss:
% \begin{align}
% \label{eq:step_rejection_loss}
% \lrej(r) ~=~ \ind\{r \,\leq\, \alpha_0\epsilon\}, ~~r \in [0, \epsilon]
% \end{align}
% for $\alpha_0 \in \{0, 0.1, 0.5\}$, and 2) the ramp rejection loss:
% \begin{align}
% \label{eq:ramp_rejection_loss}
% \lrej(r) ~=~ \left(1 \,-\, \frac{r}{\epsilon}\right)^t, ~~r \in [0, \epsilon]
% \end{align}
% for $t \in \{1, 2\}$.

\mypara{Evaluation.} 
We consider $\ell_\infty$-norm bounded attacks and generate adversarial examples to compute the robustness with rejection metric via \textit{an ensemble of adversarial attacks}. The worst-case robustness is reported under the attack ensemble. A detailed discussion of the unified approach for designing strong adaptive attacks for CPR and all the baseline methods is given in Appendix~\ref{sec:app_adaptive_attacks}. We consider both \textit{seen} attacks and \textit{unseen} attacks. Suppose $\epsilon'$ is the attack perturbation budget for testing. For seen attacks, the perturbation budget $\epsilon'=\epsilon$ (on MNIST, $\,\epsilon'=0.3$, while on SVHN and CIFAR-10, $\,\epsilon'=\frac{8}{255}$). For unseen attacks, the perturbation budget $\epsilon'>\epsilon$ (on MNIST, $\,\epsilon'=0.4$, while on SVHN and CIFAR-10, $\,\epsilon'=\frac{16}{255}$).
Finally, we use the same approach to set the rejection threshold for all the baselines. Specifically, on MNIST, we set the threshold such that only 1\% of clean correctly-classified validation inputs are rejected. On SVHN and CIFAR-10, we set the threshold such that only 5\% of clean correctly-classified validation inputs are rejected.

\begin{figure*}[!htb]
	\centering
	\includegraphics[width=0.4\linewidth]{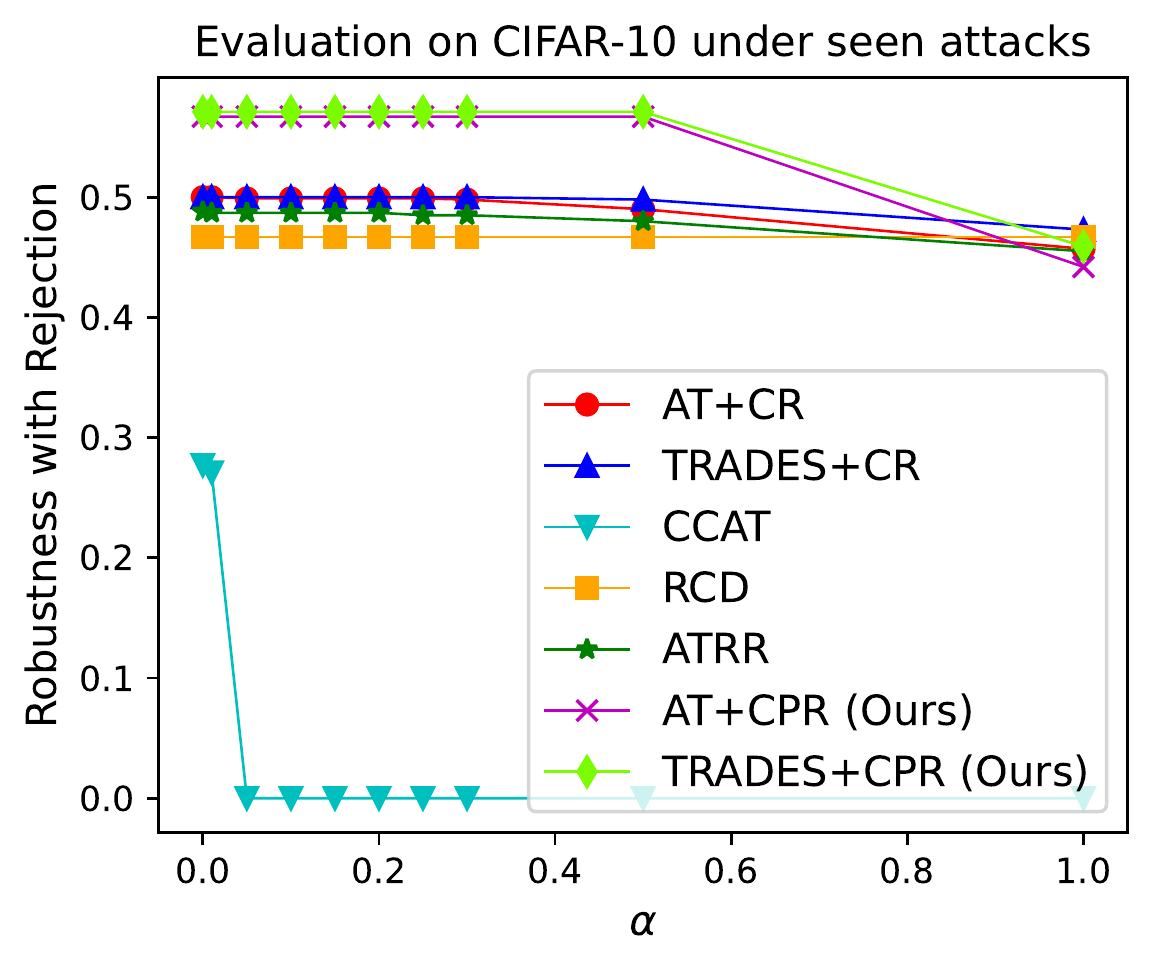}
	\includegraphics[width=0.4\linewidth]{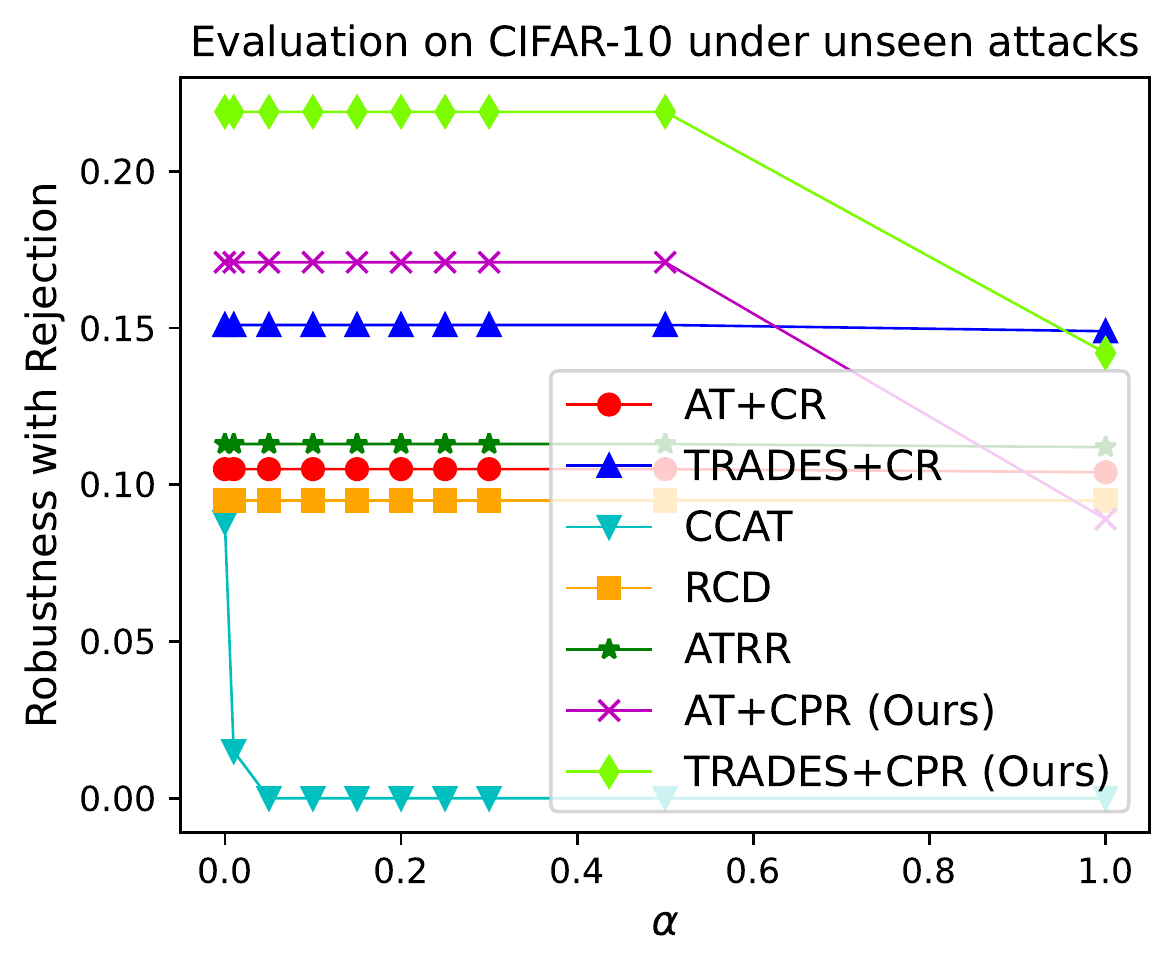}
        \vspace{-3mm}
	\caption{\small The robustness curves of all methods on CIFAR-10 for both seen and unseen attacks.  More results are in Appendix~\ref{sec:full-rob-curve-results}}
	\label{fig:main-results}
	%\vspace{-3mm}
\end{figure*}

\begin{table*}[htb]
    \centering
    \begin{adjustbox}{width=1.8\columnwidth,center}
		\begin{tabular}{l|l|c|c|c|c|c|c}
			\toprule
			\multirow{3}{0.08\linewidth}{Dataset} &  \multirow{3}{0.08\linewidth}{Method} & \multicolumn{3}{c|}{Robustness with Reject at $\alpha=0$ $\downarrow$} & \multicolumn{3}{c}{Robustness with Reject at $\alpha=1$ $\downarrow$} \\  \cline{3-8}
           & & \multicolumn{3}{c|}{Outer Attack} & \multicolumn{3}{c}{Inner Attack (with Ensemble Outer Attacks)} \\ \cline{3-8}
          & & AutoAttack & HCMOA & CHCMOA & LCIA & CLCIA & PDIA \\ \hline \hline
			\multirow{2}{0.12\linewidth}{MNIST}  
			& AT+CPR & 97.60 & 97.80 & \textbf{96.10} & 50.90 & \textbf{12.30} & 65.80 \\
            & TRADES+CPR & 98.10 & 98.40 & \textbf{95.80} & 7.60 & 5.10 & \textbf{0.40} \\ \hline 
			\multirow{2}{0.12\linewidth}{{SVHN}}
			& AT+CPR & 64.20 & 57.50 & \textbf{56.70} & \textbf{42.60} & 49.40 & 43.90 \\
            & TRADES+CPR & 71.10 & 63.40 & \textbf{62.90} & \textbf{51.00} & 56.10 & 54.30 \\ \hline 
			\multirow{2}{0.12\linewidth}{CIFAR-10}
			& AT+CPR & 60.70 & 57.50 & \textbf{57.40} & \textbf{44.40} & 48.30 & 48.50 \\
            & TRADES+CPR & 61.50 & 57.40 & \textbf{57.20} & \textbf{45.90} & 51.10 & 53.10 \\ 
			\bottomrule
		\end{tabular}
	\end{adjustbox}
	\caption[]{\small Ablation study on the outer and inner adaptive attacks for CPR. Refer to Appendix~\ref{sec:app_adaptive_attacks} for more details on these attacks and the choice of metrics. All values are in percentage, and smaller values correspond to a stronger attack. \textbf{Bold} values show the strongest attacks. }
    \label{tab:cpr-seen-attack-ablation}
\end{table*}

% \begin{figure}[htb]
% 	\begin{center}
% 	\includegraphics[width=0.5\linewidth]{figures/ablation/cifar10_ablation_result_seen_attack.pdf}
% 	\caption{\small Ablation study results for SATR on CIFAR-10. }
% 	\label{fig:ablation-seen-attack-results}
%     \end{center} 
% \end{figure}
% replace "robustness with rejection" with $\rarej_{\epsilon}(f, \alpha)$ to reduce space if needed.

\subsection{Results}
\label{sec:results} 

\mypara{Evaluating the Total Robust Loss. } Table~\ref{tab:total-loss-results} compares the total robust loss of different methods for different rejection loss functions under both seen attacks and unseen attacks. 
The proposed method CPR outperforms the baselines significantly in almost all cases. 
% The only exception is that CPR is worse than TRADES+CR on MNIST when using the ramp rejection loss with $t=2$ under seen attacks. 
The only exception is on MNIST for the ramp rejection loss with $t=2$ under seen attacks, where CPR is worse than TRADES+CR and some baselines.
This is because small perturbations on MNIST are easy to correctly classify and the ramp rejection loss penalizes rejecting large perturbations more, while CPR tends to reject large perturbations. 
In all other cases, CPR performs significantly better than the baselines. For instance, under unseen attacks, CPR reduces the total robust loss (with respect to different rejection losses) by at least 60.8\%, 6.6\% and 7.3\% on MNIST, SVHN and CIFAR-10, respectively. Under seen attacks, CPR reduces the total robust loss by at least 18.0\% and 12.8\% on SVHN and CIFAR-10, respectively. 

\mypara{Evaluating the Robustness Curve. } Figure~\ref{fig:main-results} compares the robustness with rejection $\rarej_{\epsilon}(f, \alpha)$ of the different methods as a function of $\alpha$ under both seen and unseen attacks on CIFAR-10 (complete results on all the datasets can be found in Appendix~\ref{sec:full-rob-curve-results}). Our method CPR usually has significantly higher robustness with rejection compared to the baselines for small to moderate $\alpha$. The robustness with rejection of CPR only drops for large $\alpha$ values, which suffers less rejection loss and thus leads to smaller total robust loss (as predicted by our analysis). We also note that CCAT has worse performance than other methods since our adaptive attacks are stronger than the PGD-with-backtracking attack used in CCAT paper~\cite{stutz2020ccat} (see Appendix~\ref{sec:attacking-ccat}).

%which are not likely to be of practical interest. The baseline method CCAT has much higher robustness with rejection for $\alpha=0$ compared to other baseline methods under unseen attacks on MNIST; however, its robustness with rejection drops sharply for larger $\alpha$ values. This suggests that CCAT rejects a large fraction of small input perturbations. Also, CCAT has worse performance than other methods on SVHN and CIFAR-10 under our strong adaptive attacks, which are stronger than the PGD with backtracking attack used in CCAT paper~\cite{stutz2020ccat} (see Appendix~\ref{sec:attacking-ccat}).

\mypara{Ablation Study on Attacks. }
We performed an ablation experiment to study the strength of each inner and outer attack in the attack ensemble. The results in Table~\ref{tab:cpr-seen-attack-ablation} show that for the outer attack, CHCMOA is consistently the best across all datasets. For the inner attack, LCIA is usually the best, and CLCIA and PDIA are strong on MNIST. 
We emphasize that we use an ensemble of all the attacks to get the strongest final evaluation.
% We emphasize that we ensemble all the attacks to get the strongest final evaluation.
%
% Note that the outer attack only affects $\rarej_{\epsilon}(f, 0)$, while the inner attack affects $\rarej_{\epsilon}(f, \alpha)$ for $\alpha > 0$. 
% Therefore, for the outer attacks we compare $\rarej_{\epsilon}(f, 0)$. For the inner attacks we compare $\rarej_{\epsilon}(f, 1)$, while fixing the outer attack to be the strongest ensemble outer attack.

\mypara{Ablation Study on Hyper-parameters. } 
We performed an ablation experiment to study the effect of the hyper-parameters $\tilde{\epsilon}$ and $m$ used by CPR.
The results in Appendix~\ref{sec:cpr-hyper-ablation} show that larger $\tilde{\epsilon}$ (consistency radius) leads to better robustness with rejection at $\alpha=0$. However, it also leads to lower robustness with rejection when $\alpha$ is large, which suggests that CPR rejects more perturbed inputs. Similarly, larger $m$ (number of PGD steps) also leads to better robustness with rejection at $\alpha=0$, but can lead to a lower robustness with rejection when $\alpha$ is large.
We set $m = 10$ in our main experiments, which is usually sufficient for good performance, and larger values lead to minimal improvements.

\mypara{Evaluating Traditional Metrics. } We also evaluate different methods on the traditional metrics, including accuracy with rejection, rejection rate on clean test inputs, an F1 score-like metric (harmonic mean of accuracy-with-rejection and $1 \,-$ rejection rate), and the robust accuracy with detection defined in \citet{tramer2021detecting}.  
The definitions of these metrics and the results are included in Appendix~\ref{sec:trad-metrics}. These results show that CPR has comparable performance to the baselines on clean test inputs, and also significantly outperforms the baselines w.r.t. robust accuracy with detection.

\section{Conclusion and Discussion}
\label{sec:discussion}

This work studied adversarially-robust classification with rejection in the practical setting where rejection carries a loss that is monotonically non-increasing with the perturbation magnitude. 
We proposed the total robust loss as a generalization of the robust error for selective classifiers where rejection carries a loss, and the robustness curve as a tool to study the total robust loss.
We provided an analysis of the setting and proposed a novel defense CPR for robustifying any given base model, which significantly outperforms previous methods under strong adaptive attacks.
% We identify the following potential directions for future work: 1) training algorithms that can achieve better performance under the proposed metrics; 2) consideration of other types of rejection losses, \eg $\lrej(r, \bfx)$ that also depends on the input $\bfx$.
% In the future work, one can explore better training methods that achieve better performance under the proposed metrics and can also consider other kinds of rejection loss functions (\eg rejection loss functions that depend on the input).

\mypara{Limitations \& Future Work.}
In this work, we focused on rejection losses that are a function of perturbation magnitude. There could be other types of rejection losses, \eg $\lrej(r, \bfx)$ that also depend on the input $\bfx$. 
Our defense has an increased computational cost at prediction (test) time since it requires $m$ PGD steps to find $T(\bfx)$ in Algo.~\ref{alg:consistent-prediction-rej}, which requires $m$ forward and backward passes in addition to the two forward passes required to get the predictions of $\bfx$ and $T(\bfx)$.
Designing defense methods that have lower computational cost and better robustness than CPR, and tackling more general rejection losses are interesting future problems.
% This exploration is left as future work. 
%Besides, currently we choose the way to initialize model parameters of SATR in a heuristic manner. We may need more understanding about how to initialize its parameters to get better results.
% Another potential direction is to design better defense methods with better performance and less computational cost in the stratified rejection setting.
% Another potential direction is to design defense methods that have better robustness and lower computational cost than CPR.

%\mypara{Societal Impacts.} 
% Our method can learn models with small costs due to mis-classification and rejection facing adversarial input perturbations, which can help build robust intelligent systems for applications. The proposed stratified rejection setting may inspire the development of improved learning methods along this line, which may have a positive impact in practice. We foresee no immediate negative societal impact.

% Acknowledgements should only appear in the accepted version.
\section*{Acknowledgements}
The work is partially supported by Air Force Grant FA9550-18-1-0166, the National Science Foundation (NSF) Grants CCF-FMitF-1836978, IIS-2008559, SaTC-Frontiers-1804648, CCF-2046710 and CCF-1652140, and ARO grant number W911NF-17-1-0405. Jiefeng Chen, Jihye Choi, and Somesh Jha are partially supported by the DARPA-GARD problem under agreement number 885000.
Jayaram Raghuram is partially supported through the National Science Foundation grants CNS-2112562, CNS-2107060, CNS-2003129, CNS-1838733, CNS-1647152, and the US Department of Commerce grant 70NANB21H043.

% \textbf{Do not} include acknowledgements in the initial version of
% the paper submitted for blind review.

% If a paper is accepted, the final camera-ready version can (and
% probably should) include acknowledgements. In this case, please
% place such acknowledgements in an unnumbered section at the
% end of the paper. Typically, this will include thanks to reviewers
% who gave useful comments, to colleagues who contributed to the ideas,
% and to funding agencies and corporate sponsors that provided financial
% support.

% In the unusual situation where you want a paper to appear in the
% references without citing it in the main text, use \nocite
% \nocite{langley00}

\bibliography{paper}
\bibliographystyle{icml2023}

%%%%%%%%%%%%%%%%%%%%%%%%%%%%%%%%%%%%%%%%%%%%%%%%%%%%%%%%%%%%%%%%%%%%%%%%%%%%%%%
%%%%%%%%%%%%%%%%%%%%%%%%%%%%%%%%%%%%%%%%%%%%%%%%%%%%%%%%%%%%%%%%%%%%%%%%%%%%%%%
% APPENDIX
%%%%%%%%%%%%%%%%%%%%%%%%%%%%%%%%%%%%%%%%%%%%%%%%%%%%%%%%%%%%%%%%%%%%%%%%%%%%%%%
%%%%%%%%%%%%%%%%%%%%%%%%%%%%%%%%%%%%%%%%%%%%%%%%%%%%%%%%%%%%%%%%%%%%%%%%%%%%%%%
\newpage
\onecolumn
\appendix

\begin{center}
	\textbf{\LARGE Supplementary Material }
\end{center}

% \begin{center}
% 	\textbf{\large Stratified Adversarial Robustness with Rejection}
% \end{center}

In Section~\ref{sec:societal-impacts}, we discuss some potential societal impacts of our work. In Section~\ref{sec:proofs}, we provide detailed proofs for the theorems and also present some additional theoretical results. In Section~\ref{sec:appendix_total_robust_loss}, we discuss the calculation of the total robust loss for specific instantiations of the rejection loss $\lrej$. In Section~\ref{sec:detailed-setup}, we describe the detailed setup for the experiments. In Section~\ref{sec:app_adaptive_attacks}, we discuss the unified approach for designing adaptive attacks for the proposed method CPR and all the baseline methods. In Section~\ref{sec:additional-exp-results}, we present some additional experimental results.

\section{Societal Impacts} 
\label{sec:societal-impacts}
Our method can build robust models with small costs due to mis-classification and rejection facing adversarial input perturbations, which can help build robust intelligent systems for applications. The proposed stratified rejection setting, which takes into account the cost of rejection, may inspire the development of improved defense methods along this line, which can have a positive impact in practice. We foresee no immediate negative societal impact.

\section{Additional Theory and Proofs}
\label{sec:proofs}

\subsection{Alternate Definition of \texorpdfstring{$\rerej_{\epsilon}(f, \alpha)$}{R(f, a)}}
\label{sec:proof_prop1}

\begin{proposition}
\label{proposition1}
For any $\epsilon \geq 0$ and $\alpha \in [0, 1]$, the metric $\rerej_{\epsilon}(f, \alpha)$ can be equivalently defined as
\begin{align}
\label{eq:robust_error_rejection_alt}
 \rerej_{\epsilon}(f, \alpha) ~= \expec_{(\xb,y) \sim \calD} \Big[ \max_{\xb' \in \nei(\xb, \,\alpha\epsilon)} \ind\big\{ f(\xb') = \rejsym \big\} ~\vee \max_{\xb'' \in \nei(\xb, \epsilon) } \ind\big\{ f(\xb'') \not \in \{y, \rejsym\} \big\} \Big].
\end{align}
This definition allows us to interpret the metric $\rerej_{\epsilon}(f, \alpha)$ as consisting of two types of errors: 1) errors due to rejecting small perturbations within the $\alpha\epsilon$-neighborhood and 2) mis-classification errors within the $\epsilon$-neighborhood.
\end{proposition}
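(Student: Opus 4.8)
The plan is to prove the identity \emph{pointwise} in $(\xb, y)$: I will show that for every fixed clean pair the two integrands coincide, after which equality of the expectations follows immediately by integrating against $\calD$. Fix $(\xb,y)$ and abbreviate the three relevant quantities
\begin{align*}
A &:= \max_{\xb' \in \nei(\xb, \alpha\epsilon)} \ind\{ f(\xb') \neq y \}, \\
A' &:= \max_{\xb' \in \nei(\xb, \alpha\epsilon)} \ind\{ f(\xb') = \rejsym \}, \\
B &:= \max_{\xb'' \in \nei(\xb, \epsilon)} \ind\big\{ f(\xb'') \notin \{y, \rejsym\} \big\}.
\end{align*}
The original definition (Eq.~\eqref{eq:robust_error_rejection}) has integrand $A \vee B$ and the proposed one (Eq.~\eqref{eq:robust_error_rejection_alt}) has integrand $A' \vee B$, so the goal reduces to the Boolean identity $A \vee B = A' \vee B$.

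First I would decompose the event $\{f(\xb') \neq y\}$ according to whether the rejection symbol or a genuinely wrong class is output. Since $f$ maps into $\xoutputs = \outputs \cup \{\rejsym\}$, for any single point $\xb'$ we have $\ind\{f(\xb') \neq y\} = \ind\{f(\xb') = \rejsym\} \vee \ind\{f(\xb') \notin \{y,\rejsym\}\}$. Because taking the maximum over a set distributes over $\vee$ for $\{0,1\}$-valued quantities, applying $\max_{\xb' \in \nei(\xb,\alpha\epsilon)}$ to both sides gives $A = A' \vee C$, where $C := \max_{\xb' \in \nei(\xb,\alpha\epsilon)} \ind\{f(\xb') \notin \{y,\rejsym\}\}$ is the mis-classification term restricted to the \emph{inner} neighborhood.

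The one substantive step is to observe that $C$ is redundant in the presence of $B$. Since $\alpha \in [0,1]$, the neighborhoods are nested, $\nei(\xb, \alpha\epsilon) \subseteq \nei(\xb, \epsilon)$, and $C$ and $B$ are the maxima of the \emph{same} mis-classification indicator over the smaller and larger sets respectively; hence $C \le B$, so $C \vee B = B$. Combining, $A \vee B = (A' \vee C) \vee B = A' \vee (C \vee B) = A' \vee B$, which is the desired pointwise identity; taking $\expec_{(\xb,y)\sim\calD}[\cdot]$ of both sides finishes the argument.

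There is no real obstacle beyond this bookkeeping: the only idea is that enlarging the radius from $\alpha\epsilon$ to $\epsilon$ lets the inner-neighborhood mis-classification term $C$ be absorbed into $B$, so replacing the condition ``$\neq y$'' by the strictly weaker ``$= \rejsym$'' in the inner max costs nothing. I would close by remarking that this is exactly what justifies reading $\rerej_{\epsilon}(f,\alpha)$ as ``rejection errors on small ($\alpha\epsilon$) perturbations'' plus ``mis-classification errors on all ($\epsilon$) perturbations,'' as stated in the proposition.
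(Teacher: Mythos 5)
Your proof is correct and follows essentially the same route as the paper's: both decompose $\ind\{f(\xb')\neq y\}$ into the rejection and mis-classification indicators, use that $\max$ distributes over $\vee$ for $\{0,1\}$-valued functions, and absorb the inner-neighborhood mis-classification term into the outer one via $\nei(\xb,\alpha\epsilon) \subseteq \nei(\xb,\epsilon)$. Your only (cosmetic) difference is phrasing the argument pointwise in $(\xb,y)$ before integrating, whereas the paper manipulates the terms inside the expectation directly.
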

\begin{proof}
Consider the original definition of $\rerej_{\epsilon}(f, \alpha)$ in Eq. (\ref{eq:robust_error_rejection})
\begin{align}
\label{eq:robust_error_rejection_repl}
 \rerej_{ \epsilon}(f, \alpha) ~= \expec_{(\xb,y) \sim \calD} \Big[ \max_{\xb' \in \nei(\xb, \,\alpha\epsilon)} \ind\big\{ f(\xb') \neq y \big\} ~\vee \max_{\xb'' \in \nei(\xb, \epsilon) } \ind\big\{ f(\xb'') \not \in \{y, \rejsym\} \big\} \Big].
\end{align}
The error in the first term inside the expectation can be split into the error due to rejection and the error due to mis-classification, \ie
\begin{equation*}
\ind\big\{ f(\xb') \neq y \big\} ~=~ \ind\big\{ f(\xb') = \rejsym \big\} \,\vee\, \ind\big\{ f(\xb') \notin \{y, \rejsym\} \big\}.
\end{equation*}
The maximum of this error over the $\alpha\epsilon$-neighborhood can be expressed as
\begin{align*}
\max_{\xb' \in \nei(\xb, \,\alpha\epsilon)} \ind\big\{ f(\xb') \neq y \big\} ~=\, \max_{\xb' \in \nei(\xb, \,\alpha\epsilon)} \ind\big\{ f(\xb') = \rejsym \big\} ~\vee \max_{\xb' \in \nei(\xb, \,\alpha\epsilon)} \!\!\ind\big\{ f(\xb') \notin \{y, \rejsym\} \big\},
\end{align*}
which is easily verified for binary indicator functions.
Substituting the above result into Eq. (\ref{eq:robust_error_rejection_repl}), we get
\begin{align*}
\rerej_{ \epsilon}(f, \alpha) ~&= \expec_{(\xb,y) \sim \calD} \Big[ \max_{\xb' \in \nei(\xb, \,\alpha\epsilon)} \ind\big\{ f(\xb') = \rejsym \big\} ~\vee \max_{\xb' \in \nei(\xb, \,\alpha\epsilon)} \!\!\ind\big\{ f(\xb') \notin \{y, \rejsym\} \big\} ~\vee \max_{\xb'' \in \nei(\xb, \epsilon) } \ind\big\{ f(\xb'') \not \in \{y, \rejsym\} \big\} \Big] \\
&= \expec_{(\xb,y) \sim \calD} \Big[ \max_{\xb' \in \nei(\xb, \,\alpha\epsilon)} \ind\big\{ f(\xb') = \rejsym \big\} ~\vee \max_{\xb'' \in \nei(\xb, \epsilon) } \ind\big\{ f(\xb'') \not \in \{y, \rejsym\} \big\} \Big].
\end{align*}
In the last step, we combined the second and third terms inside the expectation using the observation that
\begin{align*}
\max_{\xb' \in \nei(\xb, \,\alpha\epsilon)} \!\!\ind\big\{ f(\xb') \notin \{y, \rejsym\} \big\} ~\vee \max_{\xb' \in \nei(\xb, \epsilon)} \!\!\ind\big\{ f(\xb') \notin \{y, \rejsym\} \big\} ~=~ \max_{\xb' \in \nei(\xb, \epsilon)} \ind\big\{ f(\xb') \notin \{y, \rejsym\} \big\}.
\end{align*}
This shows the equivalence of the two definitions of $\rerej_{ \epsilon}(f, \alpha)$.
\end{proof}
The definition in Eq. (\ref{eq:robust_error_rejection_alt}) serves as motivation for the first adaptive attack to create adversarial examples $\bfx'$ within the neighborhood $\nei(\bfx, \alpha\epsilon)$ that are rejected by the defense method. 

\subsection{Proof of Lemma~\ref{corr_total_robust_loss}}

We first prove a more general lemma in Lemma~\ref{lemma_total_loss}, from which integration by parts gives Lemma~\ref{corr_total_robust_loss}.
Note that the step rejection loss and ramp rejection loss satisfy the conditions in Lemma~\ref{corr_total_robust_loss} which gives them a simpler expression to compute the total robust loss. 

\label{sec:proof_lemma1}
\begin{lemma}\label{lemma_total_loss}
Let $s(\alpha) :=  \rerej_{\epsilon}(f, \alpha)$ and assume it is right-semicontinuous. 
For any monotonically non-increasing $\lrej: [0, \infty) \rightarrow [0,1]$, the total robust loss can be computed by:
\begin{align}
\label{eq:total_loss_compute}
\tlrej_\epsilon(f; \lrej) ~=~ \rerej_{\epsilon}(f, 0) ~+~ \left(\lrej(0) \,-\, 1\right) \,p_\textrm{rej} 
~+~ \int_0^1 \lrej(\alpha \epsilon) ~\mathrm{d} s(\alpha),
\end{align}
where the integral is the Riemann–Stieltjes integral, and 
\begin{align*}
  p_\textrm{rej} \,:=\, \Pr\Big[ f(\xb) = \rejsym ~\wedge~ \big(\forall \xb' \in \nei(\xb, \epsilon),~f(\xb') \in \{y, \rejsym\} \big) \Big]
\end{align*}
is the probability that the clean input $(\bfx, y)$ is rejected and no perturbations of $\bfx$ within the $\epsilon$-ball are misclassified.
% is the probability of the event that no perturbation in the $\epsilon$-ball is misclassified but the clean input $(\bfx, y)$ is rejected.
% \Pr\big[ f(\xb') \in \{y, \rejsym\} ~\forall \xb' \in \nei(\xb, \epsilon) ~\wedge~ f(\xb) = \rejsym \big]
\end{lemma}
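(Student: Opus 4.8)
The plan is to prove the identity \eqref{eq:total_loss_compute} pointwise in the data and then aggregate by a Tonelli argument. First I would fix a labeled point $(\xb,y)$ and reduce the integrand of the total robust loss from Definition~\ref{def:total-loss}. Let $M$ be the misclassification indicator $\max_{\xb''\in\nei(\xb,\epsilon)}\ind\{f(\xb'')\notin\{y,\rejsym\}\}$, and on the event $M=0$ let $r^\star := \inf\{d(\xb,\xb') : \xb'\in\nei(\xb,\epsilon),\, f(\xb')=\rejsym\}$ be the smallest radius at which a perturbation is rejected. When $M=1$ the integrand equals $1$ (a misclassified perturbation forces the term $\ind\{f(\xb')\notin\{y,\rejsym\}\}=1$ and everything lies in $[0,1]$). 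When $M=0$ the integrand is $\max_{\xb':f(\xb')=\rejsym}\lrej(d(\xb,\xb'))$, which by monotonicity of $\lrej$ collapses to $\lrej(r^\star)$, with the convention that this is $0$ when no perturbation is rejected. Thus the per-point integrand is $1$ if $M=1$ and $\lrej(r^\star)$ if $M=0$; I would further split the $M=0$ case by whether $r^\star=0$ (the clean input itself is rejected, contributing $\lrej(0)$) or $r^\star\in(0,\epsilon]$.

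Next I would describe the per-point profile of $s(\alpha)=\rerej_{\epsilon}(f,\alpha)$ using the equivalent form in Proposition~\ref{proposition1}. For a fixed point the summand equals $M \vee \ind\{r^\star \le \alpha\epsilon\}$: it is constantly $1$ when $M=1$ or when $r^\star=0$, constantly $0$ when no perturbation is rejected, and it has a single unit up-jump at $\alpha=r^\star/\epsilon$ when $r^\star\in(0,\epsilon]$. Consequently $s(0)=\Pr[M=1]+p_\textrm{rej}$, which is exactly $\rerej_{\epsilon}(f,0)$ (note $p_\textrm{rej}=\Pr[f(\xb)=\rejsym \wedge M=0]$), and the \emph{increasing} part of $s$ on $(0,1]$ is carried only by the points with $r^\star\in(0,\epsilon]$. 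The right-semicontinuity hypothesis guarantees each up-jump is located at $\alpha=r^\star/\epsilon$ with $s$ right-continuous there, so that the Riemann–Stieltjes measure $\mathrm{d}s$ on the half-open interval $(0,1]$ is precisely the law of $r^\star/\epsilon$ restricted to the event $\{M=0,\,r^\star\in(0,\epsilon]\}$.

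I would then evaluate the integral by this pushforward identification. Interchanging the expectation over $\calD$ with the $\alpha$-integral (a Tonelli step on the joint nonnegative measure, licensed by $0\le\lrej\le1$ and $s$ monotone of bounded variation) gives $\int_0^1 \lrej(\alpha\epsilon)\,\mathrm{d}s(\alpha)=\expec[\ind\{M=0,\,r^\star\in(0,\epsilon]\}\,\lrej(r^\star)]$, the jump of $s$ at $\alpha=0$ being excluded by the $(0,1]$ convention. Assembling the per-point pieces, $\tlrej_\epsilon(f;\lrej)=\Pr[M=1]+\lrej(0)\,p_\textrm{rej}+\expec[\ind\{M=0,\,r^\star\in(0,\epsilon]\}\,\lrej(r^\star)]$; substituting $\Pr[M=1]=\rerej_{\epsilon}(f,0)-p_\textrm{rej}$ and writing $\lrej(0)\,p_\textrm{rej}=(\lrej(0)-1)\,p_\textrm{rej}+p_\textrm{rej}$ reproduces \eqref{eq:total_loss_compute}. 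Lemma~\ref{corr_total_robust_loss} then follows by integration by parts: with $\lrej(0)=1$ the correction term vanishes, and the boundary term $[\lrej(\alpha\epsilon)s(\alpha)]_0^1=\lrej(\epsilon)s(1)-\lrej(0)s(0)=-\rerej_{\epsilon}(f,0)$ cancels the $s(0)$ term, leaving $\tlrej_\epsilon(f;\lrej)=-\int_0^1 s(\alpha)\,\mathrm{d}\lrej(\alpha\epsilon)$.

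The main obstacle I expect is the bookkeeping at the two ``boundaries'': the endpoint $\alpha=0$, where clean rejections enter $\tlrej_\epsilon$ with weight $\lrej(0)$ but are absorbed into $s(0)$ with weight $1$ (whence the correction term $(\lrej(0)-1)p_\textrm{rej}$), and the jump discontinuities of $s$ together with the possibly non-attained infimum $r^\star$. Making the pushforward/Tonelli step fully rigorous—so that the single per-point up-jump sits at exactly $r^\star/\epsilon$ and is evaluated consistently by $\lrej$ even when $\lrej$ itself is discontinuous (as for the step loss)—is precisely where the right-semicontinuity assumption on $s$ does the real work.
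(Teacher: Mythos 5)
Your decomposition is essentially the paper's: you split the loss into the misclassification event (probability $s(0)-p_{\textrm{rej}}$, loss $1$), the clean-rejection event (probability $p_{\textrm{rej}}$, loss $\lrej(0)$), and the residual event where only perturbed inputs are rejected; your bookkeeping of the correction term $(\lrej(0)-1)\,p_{\textrm{rej}}$ and your integration-by-parts recovery of Lemma~\ref{corr_total_robust_loss} both match Appendix~\ref{sec:proof_lemma1} exactly. Where you genuinely diverge is in evaluating the residual term $L_3$. The paper never evaluates any single point's contribution exactly: for an arbitrary partition $0=a_0\le\cdots\le a_t=1$ it defines the events $\calR_i$ (a rejected perturbation at radius in $(a_{i-1}\epsilon,a_i\epsilon]$), observes that $s(a_i)-s(a_{i-1})$ is the probability of $\calR_i\setminus(\cup_{j<i}\calR_j)\setminus(\calW\cup\calC)$, and merely \emph{brackets} each such point's contribution in $[m_i,M_i]$, so that $L(s,\lrej)\le L_3\le U(s,\lrej)$ for every partition; existence of the Riemann--Stieltjes integral then forces $L_3=\int_0^1 \lrej(\alpha\epsilon)\,\mathrm{d}s(\alpha)$. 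You instead identify $\mathrm{d}s$ on $(0,1]$ with the pushforward law of $r^\star/\epsilon$ on the residual event and apply Tonelli --- a layer-cake argument that, if made rigorous, buys a cleaner probabilistic picture of where the mass of $\mathrm{d}s$ comes from.

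The soft spot is precisely the per-point reduction you assert and then defer. If the set of rejected radii for a given point is, say, $(r^\star,\epsilon]$ with the infimum not attained, then the integrand of the total robust loss is $\sup_{r>r^\star}\lrej(r)=\lim_{r\downarrow r^\star}\lrej(r)$, not $\lrej(r^\star)$, and the per-point profile of $s$ is $M\vee\ind\{r^\star<\alpha\epsilon\}$, not $M\vee\ind\{r^\star\le\alpha\epsilon\}$. Both mismatches occur exactly when $\lrej$ jumps at $r^\star$ --- e.g.\ the step loss with $\alpha_0\epsilon=r^\star$, the case of main interest in the paper --- and right-semicontinuity of the \emph{aggregate} $s$ neither forces per-point attainment of the infimum nor fixes how $\lrej$ is evaluated at a shared jump; note also that your pushforward identity lives naturally in the Lebesgue--Stieltjes framework, while the lemma is stated for the Riemann--Stieltjes integral. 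So the sentence ``right-semicontinuity does the real work'' names the right hypothesis but does not discharge it: to close your route you would have to verify that the two mismatches are governed by the same set of rejected radii and cancel in aggregate (roughly, that right-semicontinuity of $s$ rules out positive mass on left-continuous per-point jumps), or --- more simply --- fall back on the paper's two-sided Darboux bracket, which is insensitive to whether infima are attained and to where $\lrej$ jumps. In short: correct outline, identical boundary bookkeeping, a genuinely different and appealing evaluation of the key term, but that one step is asserted rather than proved.
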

\begin{proof}
Let $\calW$ denote the event that there exists $\xb' \in \nei(\xb, \epsilon)$ such that $\xb'$ is misclassified, i.e., $f(\xb') \not\in \{y, \rejsym\}$. 
Let $\calC$ denote the event that the clean input $\xb$ is rejected, i.e., $ f(\xb) = \rejsym$.  Clearly, $p_\textrm{rej} = \Pr[ \calC \setminus \calW]$ where $\setminus $ is the set minus. Finally, let $\calR$ denote the event that there exists $\xb' \in \nei(\xb, \epsilon)$ such that $\xb'$ is rejected. 

We only consider $\calW \cup \calC \cup \calR$, since otherwise $(\xb,y)$ contributes a loss 0 to $\tlrej_\epsilon(f; \lrej)$. The union can be partitioned into three disjoint subsets.
\begin{itemize}%[leftmargin=*, topsep=1pt, noitemsep]
    \item $\calW$ : Such an $(\xb,y)$ contributes a loss $1$. Since $\rerej_{\epsilon}(f, 0) =  \Pr[\calW \cup \calC]$, we have $\,\Pr[\calW] = \rerej_{\epsilon}(f, 0) - \Pr[\calC \setminus \calW] = \rerej_{\epsilon}(f, 0) - p_\textrm{rej}$. Then this subset contributes a loss $\,\rerej_{\epsilon}(f, 0) - p_\textrm{rej}$. 
    \item $\calC \setminus \calW$ : Such a data point contributes a loss $\lrej(0)$, given the assumption that $\lrej$ is monotonically non-increasing. Then this subset contributes a loss $\,\lrej(0) p_\textrm{rej}$.
    \item $\calR \setminus (\calW \cup \calC)$ : That is, there exists no $\xb' \in \nei(\xb, \epsilon)$ that is misclassified, the clean input $\xb$ is accepted, but there exists some $\xb' \in \nei(\xb, \epsilon)$ that is rejected. Let $L_3$ denote the loss contributed by this subset.
\end{itemize}
Now, it is sufficient to show that  
\[
L_3 = \,\int_0^1 \lrej(\alpha \epsilon) \,\mathrm{d} s(\alpha).
\]
For any positive integer $t$, let $a_0=0 \le a_1 \le \dots \le a_{t-1} \le a_t = 1$ be an arbitrary sequence on $[0, 1]$. Let
\begin{align*}
    M_i ~&=~ \sup \{\lrej(\alpha \epsilon), ~~a_{i-1} \le \alpha \le a_i\},
    \\
    m_i ~&=~ \inf \{\lrej(\alpha \epsilon), ~~a_{i-1} \le \alpha \le a_i\},
    \\
    U(s, \lrej) ~&=~ \sum_{i=1}^t \,M_i \,(s(a_i) - s(a_{i-1})),
    \\
    L(s, \lrej) ~&=~ \sum_{i=1}^t \,m_i \,(s(a_i) - s(a_{i-1})).
\end{align*}
%
% \jayaram{Should we denote $U$ and $L$ to be functions of $s$ and $\lrej$. Of course, since $s$ is a function of $f$, we can also denote $U$ and $L$ to be functions of $f$ and $\lrej$. So option 1: $U(f, \lrej)$ and $L(f, \lrej)$. Option 2: $U(s, \lrej)$ and $L(s, \lrej)$.}
%
Let $\calR_i$ denote the event that there exists $\xb'$ such that $\,a_{i-1}\epsilon < d(\xb', \xb) \le a_i\epsilon$ and $\xb'\,$ is rejected. 
Then $\,\calR = \cup_{i=1}^t \calR_i$, and
\begin{align*}
    s(a_i) \,-\, s(a_{i-1}) ~&=~ \rerej_{\epsilon}(f, a_i) \,-\, \rerej_{\epsilon}(f, a_{i-1})
    \\
    &=~ \Pr\big[\calR_i \setminus (\cup_{j=1}^{i-1} \calR_j) \setminus (\calW \cup \calC)\big].
\end{align*}
 
Since $\lrej$ is monotonically non-increasing, each data point in $\calR_i \setminus (\cup_{j=1}^{i-1} \calR_j) \setminus (\calW \cup \calC)$ should contribute a loss that is within $[m_i, M_i]$. 
Therefore,
\begin{align*}
    L(s, \lrej) ~\le~ L_3 ~\le~ U(s, \lrej)
\end{align*}
for any sequence $\{a_i\}_{i=0}^t$. When $s(\alpha)$ is right-semicontinous, the Riemann–Stieltjes integral exists, and $L_3 = \int_0^1 \lrej(\alpha \epsilon) \,\mathrm{d} s(\alpha)$. This completes the proof.
\end{proof}

\subsection{Proof of Theorem~\ref{thm:main}}
\label{sec:proof_thm1}

\begin{theorem}[Restatement of Theorem~\ref{thm:main}]
Consider binary classification. Let $f^*(\xb)$ be any classifier without a rejection option. For any $\delta \in [0,1]$ and $\epsilon \geq 0$, there exists a selective classifier $f_\delta$, whose robust error curve is bounded by: 
\begin{align}
    \rerej_{ \epsilon}(f_\delta, \alpha) \,\le\, \ore_{\epsilon'}(f^*), ~\forall \alpha \in [0,1]
\end{align}
where $\epsilon' = \max\{(\alpha+\delta)\epsilon, (1-\delta)\epsilon\}$.
Moreover, the bound is tight: for any $\alpha \in [0,1]$, there exist simple data distributions and $f^*$ such that any $f$ must have $\rerej_{ \epsilon}(f, \alpha) \ge  \ore_{\epsilon'}(f^*)$.
\end{theorem}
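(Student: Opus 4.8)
The plan is to prove the upper bound by a pointwise domination argument and then establish tightness through an explicit one-dimensional construction together with an adversarial ``overlap'' argument.

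For the upper bound, I would fix a clean labeled point $(\xb, y)$ and show that whenever it contributes a unit of error to $\rerej_{\epsilon}(f_\delta, \alpha)$ it also contributes to $\ore_{\epsilon'}(f^*)$; taking expectations then yields $\rerej_{\epsilon}(f_\delta, \alpha) \le \ore_{\epsilon'}(f^*)$. By the definition of the robust error curve there are two ways $(\xb,y)$ can err. First, if some $\xb' \in \nei(\xb, \alpha\epsilon)$ has $f_\delta(\xb') \neq y$: either $f_\delta(\xb') = f^*(\xb') \neq y$, in which case $\xb'$ itself is an $f^*$-misclassified witness in $\nei(\xb, \alpha\epsilon) \subseteq \nei(\xb, \epsilon')$; or $f_\delta(\xb') = \rejsym$, meaning $\xb' \in \nei(f^*, \delta\epsilon)$, so there is $\xb''$ with $d(\xb'',\xb') \le \delta\epsilon$ and $f^*(\xb'') \neq f^*(\xb')$. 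Here I would invoke \emph{binariness}: since the two predictions differ they take both class values, so exactly one of $f^*(\xb'), f^*(\xb'')$ disagrees with $y$, and that witness lies within $(\alpha+\delta)\epsilon \le \epsilon'$ of $\xb$. Second, if some $\xb'' \in \nei(\xb,\epsilon)$ has $f_\delta(\xb'') \notin \{y, \rejsym\}$, then $\xb'' \notin \nei(f^*, \delta\epsilon)$, so $f^*$ is constant on $\nei(\xb'', \delta\epsilon)$; I would then ``pull'' $\xb''$ along the segment toward $\xb$ by $\delta\epsilon$ (or all the way to $\xb$ if it is closer) to obtain $\xb^*$ with $f^*(\xb^*) = f^*(\xb'') \neq y$ and $d(\xb, \xb^*) \le (1-\delta)\epsilon \le \epsilon'$. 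This last step is the crucial one: it produces the $(1-\delta)\epsilon$ branch of $\epsilon'$ and is exactly where the norm-induced nature of $d$ is used (the segment and the claimed distances exist).

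For tightness, I would exhibit, for a given $\alpha$, a one-dimensional instance: $f^*$ a threshold classifier with boundary at the origin, and two equal-mass point masses $a$ (label $1$, at distance $p$ on the positive side) and $b$ (label $2$, at distance $q$ on the negative side). I would place $a$ strictly within $\epsilon'$ and $b$ strictly beyond $\epsilon'$ of the boundary, so that $\ore_{\epsilon'}(f^*) = \tfrac12$, while keeping $p + q \le (1+\alpha)\epsilon$ so that $\nei(a, \alpha\epsilon)$ overlaps $\nei(b, \epsilon)$. At any shared point $z$, avoiding error on $a$ forces $f(z) = 1$, whereas avoiding error on $b$ forces $f(z) \in \{2, \rejsym\}$; since these are incompatible, every selective classifier $f$ errs on at least one of the two points, giving $\rerej_{\epsilon}(f,\alpha) \ge \tfrac12 = \ore_{\epsilon'}(f^*)$. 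The subtlety is that the placement must be feasible in both branches of $\epsilon' = \max\{(\alpha+\delta)\epsilon, (1-\delta)\epsilon\}$; I would verify that taking $p$ just below $\min\{(\alpha+\delta)\epsilon,(1-\delta)\epsilon\}$ and $q$ just above $\epsilon'$ simultaneously satisfies $p<\epsilon'$, $q>\epsilon'$, and $p+q \le (1+\alpha)\epsilon$, checking each regime of the maximum separately.

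I expect the main obstacle to be the tightness construction rather than the upper bound: matching the lower bound \emph{exactly} to $\ore_{\epsilon'}(f^*)$ requires the overlap region and the ``one point inside, one point outside $\epsilon'$'' conditions to be jointly satisfiable, and a naive placement yields an empty feasible interval in one of the two regimes. Carefully selecting the distances relative to $\min\{(\alpha+\delta)\epsilon,(1-\delta)\epsilon\}$ and $\epsilon'$ is what makes both regimes go through, so I would organize the tightness argument as a short case split on whether $\alpha \ge 1-2\delta$.
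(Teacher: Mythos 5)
Your upper-bound argument is essentially the paper's own proof: you construct the same $f_\delta$ that rejects on $\nei(f^*,\delta\epsilon)$, split the error event into the inner ($\alpha\epsilon$) and outer ($\epsilon$) cases exactly as the paper does, and your ``pull $\xb''$ toward $\xb$ by $\delta\epsilon$'' step is precisely the paper's witness $\mathbf{z} = \delta\xb + (1-\delta)\xb''$, which is indeed where the norm-induced structure of $d$ enters. One nitpick: in the rejection subcase you only need that \emph{at least} one of $f^*(\xb'), f^*(\xb'')$ disagrees with $y$ --- which holds simply because they are distinct --- not ``exactly one''; binariness is not actually load-bearing in that step, and in fact the whole upper bound survives multiclass.

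Tightness is where you genuinely diverge. The paper fixes the optimal $\delta = (1-\alpha)/2$, so that $\epsilon' = (1+\alpha)\epsilon/2$, and uses a four-atom distribution with two opposite-label atoms at $\mp\frac{\alpha\epsilon}{4}$: a common point ($z=0$) lies in \emph{both} inner $\alpha\epsilon$-balls, so any value of $f(z)$ --- including $\rejsym$ --- errs on one of the two atoms, while $f^*(\xb)=\mathrm{sign}(\xb+\epsilon)$ is arranged so that exactly one atom (mass $\beta/2$) is vulnerable at radius $\epsilon'$. Your two-atom construction instead makes the inner $\alpha\epsilon$-ball of the near point overlap the \emph{outer} $\epsilon$-ball of the far point, pitting the ``must predict $1$'' inner constraint against the ``must lie in $\{2,\rejsym\}$'' outer constraint at the shared $z$. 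That mechanism is equally valid, and it buys something the paper does not attempt: tightness for general $\delta$, in both regimes of the max, rather than only at the minimizing $\delta$.

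The caveat you flagged is real but slightly worse than you suggest. Writing $m := \min\{(\alpha+\delta)\epsilon,(1-\delta)\epsilon\}$ and $M := \epsilon'$, your prescription ``$p$ just below $m$, $q$ just above $M$'' has an empty feasible region exactly when $m = 0$ (e.g.\ $\delta = 1$, or $\alpha=\delta=0$), since then $q > \epsilon' = (1+\alpha)\epsilon \geq p+q$ is unsatisfiable with strict inequalities; those corners require non-strict placements ($p = 0$, $q = \epsilon'$) together with choosing the boundary assignment of the threshold $f^*$ so that the far atom remains robust at radius exactly $\epsilon'$. At the paper's choice $\delta = (1-\alpha)/2$ one has $m = M = (1+\alpha)\epsilon/2 > 0$ whenever $\epsilon > 0$, so your placement goes through cleanly there, and your proposed case split on $\alpha \geq 1-2\delta$ is the right way to organize the remaining regimes.
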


\begin{proof}
For any $r>0$, let $\nei(f^*, r)$ denote the region within distance $r$ to the decision boundary of $f^*$:
\begin{align*}
    \nei(f^*, r) := \{\xb \in \inputs: \exists \xb' \in \nei(\bfx, r) \textrm{~and~} f^*(\xb') \neq f^*(\xb)\}.
\end{align*}
Consider a parameter $\delta \in [0, 1]$ and construct a selective classifier $f_\delta$ as follows:
\begin{align}
    f_\delta(\xb) ~:=~ 
    \begin{cases}
    \rejsym & \textrm{if~} \xb \in \nei(f^*, \delta\epsilon), 
    \\
    f^*(\xb) & \textrm{otherwise.}  
    \end{cases}
\end{align}
We will show that any sample $(\xb, y)$ contributing error to $\rerej_{\epsilon}(f_\delta, \alpha)$ must also contribute error to $\ore_{\epsilon'}(f^*)$, where $\epsilon' = \max\{ (\alpha + \delta)\epsilon,  (1- \delta) \epsilon \}$. This will prove that $\rerej_{ \epsilon}(f_\delta, \alpha) \le \ore_{\epsilon'}(f^*)$.
 
Consider the following two cases:
\begin{itemize}%[leftmargin=-0.01in]
    \item %[\textbf{(a)}] 
    Consider the first type of error in $\rerej_{\epsilon}(f_\delta, \alpha)$: $ \max_{\xb' \in \nei(\xb, \alpha \epsilon)} \ind\left[ f_\delta(\xb') \neq y\right] = 1$. This implies that there exists $\xb' \in \nei(\xb, \alpha \epsilon)$ such that $f_\delta(\xb') \neq y$. 
    So there are two subcases to consider:
    \begin{itemize}%[leftmargin=0.3in]
        \item[(1)] $\xb' \in \nei(f^*,\delta \epsilon)$: in this case $\,\xb \in \nei(f^*, (\delta + \alpha)\epsilon)$.
        \item[(2)] $f^*(\xb') \neq y$: in this case either $f^*(\xb) \neq y$, or $f^*(\xb) = y \neq f^*(\xb') $ and thus $\,\xb \in \nei(f^*, \alpha\epsilon )$.
    \end{itemize}  
    In summary, either $\,f^*(\xb) \neq y$ or $\xb \in \nei(f^*,  (\alpha+ \delta)\epsilon)$. 
 
    \item %[\textbf{(b)}]
    Next consider the second type of error in $\rerej_{ \epsilon}(f_\delta, \alpha)$: $\max_{\xb'' \in \nei(\xb, \epsilon) } \ind\left[ f_\delta(\xb'') \not \in \{y, \rejsym\} \right] = 1$. This means there exists an  $\xb'' \in \nei(\xb, \epsilon)$ such that $f_\delta(\xb'') \not\in \{y, \rejsym\}$, \ie $\xb'' \not\in \nei(f^*, \delta\epsilon)$ and $f^*(\xb'') \neq y$. This implies that {\em all} $\mathbf{z} \in \nei(\xb'', \delta \epsilon)$ should have $f^*(\mathbf{z}) = f^*(\xb'') \neq y$. 
    In particular, there exists $\mathbf{z} \in \nei(\xb'', \delta \epsilon)$ with $d(\mathbf{z}, \xb) \le  (1 - \delta)\epsilon$ and $f^*(\mathbf{z}) \neq y$. 
    It can be verified that $\,\mathbf{z} =  \delta \xb + (1 - \delta) \xb''$, which is a point on the line joining $\xb$ and $\xb''$, satisfies the above condition.
    In summary, either $f^*(\xb) \neq y$, or $f^*(\xb) = y \neq f^*(\mathbf{z})$ and thus $\,\xb \in \nei(f^*, (1 - \delta)\epsilon)$. 

\end{itemize}
Overall, a sample $(\xb, y)$ contributing error to $\rerej_{ \epsilon}(f_\delta, \alpha)$ must satisfy either $f^*(\xb) \neq y$ or $\xb \in \nei(f^*, \epsilon')$.
Clearly, such a sample also contributes an error to $\ore_{\epsilon'}(f^*)$. Therefore, we have
\begin{align}
    \rerej_{ \epsilon}(f_\delta, \alpha) \,\le\, \ore_{\epsilon'}(f^*), \forall \alpha \in [0,1].
\end{align}
 
To show that the bound is tight, consider the following data distribution. Let $\xb \in \mathbb{R}$ and $y \in \{-1, +1\}$,  $\, \alpha  \in [0, 1]$, and let $\beta \in (0, \frac{1}{2})$ be some constant. Let the distribution be: $(\xb,y)$ is $(-4\epsilon, -1)$ with probability $\frac{1-\beta}{2}$, $\,(- \frac{\alpha \epsilon}{4}, -1)$ with probability $\frac{\beta}{2}$, $\,(\frac{\alpha \epsilon}{4}, +1)$ with probability $\frac{\beta}{2}$, and $(4\epsilon, +1)$ with probability $\frac{1-\beta}{2}$. Let $\delta = \frac{1-\alpha}{2}, \,f^*(\xb) := \textrm{sign}(\xb + \epsilon)$. It is clear that $\ore_{\epsilon'}(f^*) = \ore_{(1 + \alpha )\epsilon/2}(f^*) = \frac{\beta}{2}$. It is also clear that any $f$ must have $\rerej_{ \epsilon}(f, \alpha) \ge \frac{\beta}{2}$ since the points $-\frac{\alpha \epsilon}{4}$ and $\frac{\alpha \epsilon}{4}$ have distance only $\frac{\alpha \epsilon}{2}$ but have different labels. 
\end{proof}

We note that the proof generalizes that of Theorem 5 in~\cite{tramer2021detecting}. In particular, our theorem includes the latter as a special case (corresponding to $\alpha = 0$ and $\delta = \frac{1}{2}$). 

\subsection{Comparing the Robustness Curves and Total Robust Losses of \texorpdfstring{$f_\delta$}{fdelta} and \texorpdfstring{$f^*$}{f*} in Theorem~\ref{thm:main}}
\label{sec:closed-form-solution}

Theorem~\ref{thm:main} investigates the feasibility of a good selective classifier (w.r.t.\ the robustness curve and the total robust loss), by constructing a classifier $f_\delta$. 
It is meaningful to perform a fine-grained analysis into when $f_\delta$ has a better total robust loss than the classifier $f^*$ without rejection. 

For simplicity, we assume $f^*$ has a 0 standard error on the clean data distribution. The analysis for the case with a nonzero standard error is similar.

\begin{theorem}
\label{thm:closed-form-solution}
Consider binary classification. Let $f^*(\xb)$ be any classifier without a rejection option that has $0$ standard error on the data distribution.  Suppose the data density for data points with a distance of $r$ to the decision boundary of $f^*$ is $p(r)$. Consider the selective classifier $f_\delta$ defined in Theorem~\ref{thm:main}:
\begin{align}
    f_\delta(\xb) ~:=~ 
    \begin{cases}
    \rejsym & \textrm{if~} \xb \in \nei(f^*, \delta\epsilon), 
    \\
    f^*(\xb) & \textrm{otherwise.}  
    \end{cases}
\end{align}
Then for $\delta \in [0, \frac{1}{2}]$, the total robust loss of $f_\delta$ with respect to the rejection loss $\lrej$ can be computed by: 
\begin{align}
    \label{eq:closed-form-total-robust-loss}
    \tlrej_\epsilon(f_\delta; \lrej) &= \int_{0}^{(1-\delta)\epsilon} p(r) \mathrm{d} r + \int_{(1-\delta)\epsilon}^{(1+\delta)\epsilon} \lrej(r-\delta\epsilon) \,p(r) \mathrm{d} r.
\end{align}
Also, the curve of robust error of $f_\delta$ can be computed by: 
\begin{align}
    \label{eq:closed-form-robust-error-curve}
    \rerej_{\epsilon}(f_\delta, \alpha) &= \begin{cases}
        \int_{0}^{(1-\delta)\epsilon} p(r) \mathrm{d} r  & \quad \text{if } \alpha \in [0, 1-2\delta], \\
      \int_{0}^{ (\alpha+\delta)\epsilon} p(r) \mathrm{d} r & \quad \text{if } \alpha \in (1-2\delta, 1].
  \end{cases}
\end{align}
%For the step rejection loss $\lrej(r) ~=~ \ind\{r \,\leq\, \alpha_0\epsilon\}$ with $\alpha_0 \in [0, 1]$, when $\delta=\frac{(1-\alpha_0)\epsilon}{2}$, the total robust loss achieves its minimum value  $\,\int_{r_0}^{\frac{(1+\alpha_0)\epsilon}{2}} p(r) \mathrm{d} r$. 
\end{theorem}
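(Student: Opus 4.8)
The plan is to reduce both formulas to a one-dimensional computation in the distance-to-boundary coordinate and then read them off by tracking, for a clean point at distance $r$ from the decision boundary of $f^*$, exactly when $f_\delta$ rejects or misclassifies a perturbation in the relevant ball. Since $f^*$ has zero standard error, every clean point $(\xb, y)$ satisfies $f^*(\xb) = y$, so I can parametrize the data purely by the distance $r$ of $\xb$ to the boundary, whose density is $p(r)$. The construction of $f_\delta$ says that a point $\xb'$ is rejected iff $\xb' \in \nei(f^*, \delta\epsilon)$ (distance at most $\delta\epsilon$ to the boundary) and is otherwise labeled by $f^*$; since $f^*$ is correct at $\xb$, a perturbation is \emph{misclassified} by $f_\delta$ exactly when it is not rejected and lies on the far side of the boundary.

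The key geometric ingredient I would establish first is a distance-bookkeeping fact, using that the signed distance to the boundary is $1$-Lipschitz for the norm-induced metric together with straight-line achievability toward or away from the nearest boundary point. For a clean point $\xb$ at distance $r$ on the $y$-side: (i) the nearest rejected point lies at distance exactly $\max\{0, r - \delta\epsilon\}$ from $\xb$; and (ii) the nearest wrong-label point that is \emph{not} rejected (signed distance strictly beyond $\delta\epsilon$ on the far side) lies at distance $r + \delta\epsilon$ from $\xb$, attained in the limit. These two facts convert the set-valued conditions ``there exists a rejected/misclassified perturbation in a ball'' into simple inequalities on $r$. This lemma is the main obstacle: for a general, possibly curved boundary rather than a hyperplane, statement (ii) needs the signed distance to be genuinely $1$-Lipschitz and the extremal configurations to be realized along segments, so I would isolate this as the single place where a regularity assumption on $\partial f^*$ (or an exact one-dimensional reduction) is invoked.

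For the robust error curve I would invoke the alternate form of $\rerej_{\epsilon}(f, \alpha)$ from Proposition~\ref{proposition1}, which splits the error into a rejection part (some $\xb' \in \nei(\xb, \alpha\epsilon)$ rejected) and a misclassification part (some $\xb'' \in \nei(\xb, \epsilon)$ misclassified). By fact (i) the rejection part fires iff $r \le (\alpha+\delta)\epsilon$, and by fact (ii) the misclassification part fires iff $r < (1-\delta)\epsilon$, where $\delta \le \tfrac12$ ensures a non-rejected wrong-label point is reachable precisely in this range. The union is $\{r \le \max\{(\alpha+\delta)\epsilon, (1-\delta)\epsilon\}\}$ up to a null set; splitting on whether $\alpha \le 1-2\delta$ (so $(\alpha+\delta)\epsilon \le (1-\delta)\epsilon$) or $\alpha > 1-2\delta$ and integrating $p(r)$ over the error region yields exactly the two cases of Eq.~\eqref{eq:closed-form-robust-error-curve}.

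For the total robust loss I would compute directly, from Definition~\ref{def:total-loss}, the per-point loss $\ell(r)$ as the maximum over $\nei(\xb,\epsilon)$ of the misclassification indicator and the rejection-weighted term $\lrej(d(\xb, \xb'))$. Three regimes appear: if $r < (1-\delta)\epsilon$ a wrong-label non-rejected perturbation exists, so $\ell(r) = 1$; if $(1-\delta)\epsilon \le r \le (1+\delta)\epsilon$ no misclassification is possible and, since $\lrej$ is non-increasing, the loss is maximized at the \emph{closest} rejected perturbation, at distance $r - \delta\epsilon$ by fact (i), giving $\ell(r) = \lrej(r - \delta\epsilon)$; and if $r > (1+\delta)\epsilon$ neither a rejected nor a misclassified perturbation lies within $\epsilon$, so $\ell(r) = 0$. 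Integrating $\ell(r)$ against $p(r)$ reproduces Eq.~\eqref{eq:closed-form-total-robust-loss}, and I would remark that this is consistent with feeding the curve of Eq.~\eqref{eq:closed-form-robust-error-curve} into Lemma~\ref{corr_total_robust_loss}. Finally, I would note that the boundary radii $r \in \{(1-\delta)\epsilon, (\alpha+\delta)\epsilon, (1+\delta)\epsilon\}$ where strict/non-strict inequalities differ form a null set under $p$, so they affect neither integral.
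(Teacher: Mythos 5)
Your proposal is correct and follows essentially the same route as the paper's proof: a casewise bookkeeping in the distance-to-boundary coordinate $r$ of when a rejected or misclassified perturbation exists within the relevant ball, giving per-point loss $1$ for $r \le (1-\delta)\epsilon$, loss $\lrej(r-\delta\epsilon)$ (attained at the closest rejected perturbation, by monotonicity of $\lrej$) for $(1-\delta)\epsilon < r \le (1+\delta)\epsilon$, loss $0$ beyond, and the same case split at $\alpha = 1-2\delta$ for the robust error curve. Your explicit isolation of the distance-bookkeeping lemma --- the $1$-Lipschitz signed distance plus straight-line achievability, which can fail for sufficiently curved boundaries at scale $\epsilon$ --- together with the null-set treatment of the boundary radii is in fact more careful than the paper's proof, which takes these geometric facts for granted in its one-dimensional picture.
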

\begin{proof} 
First consider the total robust loss. When $0 < r\leq (1-\delta)\epsilon$, the data point will contribute a loss of $1$ to the total robust loss. When $(1-\delta)\epsilon <r \leq (1+\delta)\epsilon$, the data point will contribute a loss of $\lrej(r-\delta\epsilon)$ to the total robust loss. When $r > (1+\delta)\epsilon$, the data point will contribute $0$ loss to the total robust loss. Thus, we have Eq. (\ref{eq:closed-form-total-robust-loss}).

For the curve of robust error, the data points with $0<r\leq (1-\delta)\epsilon$ will always contribute a loss of $1$. When $\alpha \in [0, 1-2\delta]$, no data points with $r \ge (1-\delta)\epsilon$ can contribute a loss, either by small perturbations to get a rejection or mis-classification, or by large perturbations to get a mis-classification. When $\alpha \in (1-2\delta, 1]$, data points with $r \in [(1-\delta)\epsilon, (\alpha + \delta) \epsilon]$ will contribute a loss of $1$ by small perturbations to get a rejection. Thus, we have Eq. (\ref{eq:closed-form-robust-error-curve}).  
\end{proof}

Now we are ready to compare $f_\delta$ to $f^*$. 

First consider the curve of robust error. It is easy to know that the curve of robust error of $f^*$ is $\rerej_{\epsilon}(f^*, \alpha) = \int_{0}^{\epsilon} p(r) \mathrm{d} r$ for $\alpha\in [0,1]$. When $\alpha \in [0, 1-\delta]$, we have $\rerej_{\epsilon}(f^*, \alpha)\geq \rerej_{\epsilon}(f_\delta, \alpha)$; when $\alpha \in (1-\delta, 1]$, we have $\rerej_{\epsilon}(f^*, \alpha) \leq \rerej_{\epsilon}(f_\delta, \alpha)$. When $\alpha \in [0, 1-2\delta]$, if $\int_{(1-\delta)\epsilon}^{\epsilon} p(r) \mathrm{d} r$ is large, then $f_\delta$ will have much lower robust error with rejection than $f^*$, since $\rerej_{\epsilon}(f^*, \alpha) - \rerej_{\epsilon}(f_\delta, \alpha) = \int_{(1-\delta)\epsilon}^{\epsilon} p(r) \mathrm{d} r$; when $\alpha \in (1-2\delta, 1-\delta]$, if $\int_{(\alpha+\delta)\epsilon}^{\epsilon} p(r) \mathrm{d} r$ is large, then $f_\delta$ will also have much lower robust error with rejection than $f^*$, since $\rerej_{\epsilon}(f^*, \alpha) - \rerej_{\epsilon}(f_\delta, \alpha) = \int_{(\alpha+\delta)\epsilon}^{\epsilon} p(r) \mathrm{d} r$. 

Now consider the total robust loss.
The total robust loss of $f^*$ is $\int_{0}^{\epsilon} p(r) \mathrm{d} r$, which is larger than that of $f_\delta$ by $ \int_{(1-\delta)\epsilon}^{\epsilon} p(r) \mathrm{d} r - \int_{(1-\delta)\epsilon}^{(1+\delta)\epsilon} \lrej(r-\delta\epsilon) \,p(r) \mathrm{d} r$. 
More precisely, both $f^*$ and $f_\delta$ will get mis-classification loss on some perturbations of points with distance in the range $[0, (1-\delta) \epsilon]$ from the decision boundary of $f^*$. This is because there always exist some large perturbations of these points crossing the boundary.
On the other hand, $f^*$ simply gets mis-classification loss from perturbations of the points with distance in the range $[(1-\delta)\epsilon, \epsilon]$. While for all points with distance larger than $(1-\delta)\epsilon$, $f_\delta$ can correctly classify all their small perturbations of magnitude at most $(1-2\delta)\epsilon$, and reject or correctly classify their larger perturbations. So it only gets rejection loss for large magnitudes, which can then potentially lead to smaller total robust losses than $f^*$ for monotonically non-increasing rejection losses. 

Therefore, for some data distributions, there exists a fixed $\delta$ such that $f_\delta$ can get small total robust losses with respect to a wide range of reasonable rejection losses. For example, consider the step rejection losses $\lrej(r)=\ind\{r \,\leq\, \alpha_0\epsilon\}$ with parameter in the range $\alpha_0\in [0,\bar{\alpha}_0]$ where $\bar{\alpha}_0\in [0,1]$. If we set $\delta=\frac{1-\bar{\alpha}_0}{2}$, then $1-2\delta=\bar{\alpha}_0$ and $\alpha \in [0, 1-2\delta]$. The total robust loss of $f_\delta$ with respect to these rejection losses is $\int_{0}^{\frac{(1+\bar{\alpha}_0)}{2}\epsilon  } p(r) \mathrm{d} r$. In contrast, the total robust loss of $f^*$ is $\int_{0}^{\epsilon} p(r) \mathrm{d} r$, which can be significantly larger than that of $f_\delta$. The total robust loss of $f_\delta$ is smaller than that of $f^*$ by the amount $\int_{\frac{(1+\bar{\alpha}_0)}{2}\epsilon  }^{\epsilon} p(r) \mathrm{d}r$. If the probability mass of points with $r \in [\frac{(1+\bar{\alpha}_0)}{2}\epsilon, \epsilon]$ is large, then the improvement is significant.

\section{Calculating the Total Robust Loss}
\label{sec:appendix_total_robust_loss}
In this section, we discuss the calculation of the total robust loss for specific instantiations of the rejection loss $\lrej$.
Given the curve of robust error $\,\{s(\alpha) := \rerej_{ \epsilon}(f, \alpha) ~:~ \alpha \in [0,1]\}$ and a specific choice of rejection loss $\lrej(r)$ that is monotonically non-increasing, we can use Eq. (\ref{eq:total_loss_compute}) from Lemma~\ref{lemma_total_loss} to calculate the total robust loss:
\begin{align}
\label{eq:copy_total_robust_loss}
    \tlrej_\epsilon(f; \lrej) ~=~ \rerej_{\epsilon}(f, 0) ~+~ (\lrej(0) \,-\, 1) \,p_\textrm{rej} ~+~ \int_0^1 \lrej(\alpha \epsilon) ~\mathrm{d} s(\alpha).
\end{align}
As discussed in Corollary~\ref{corr_total_robust_loss}, let us additionally choose the rejection loss to be differentiable almost everywhere, and satisfy the conditions $\lrej(0) = 1$ and $\lrej(\epsilon) = 0$. This is satisfied \eg by the ramp and step rejection losses defined in Eqs. (\ref{eq:ramp_rejection_loss}) and (\ref{eq:step_rejection_loss}).
Applying the product rule (or integration by parts), the integral term in the total robust loss can be expressed as
\begin{align*}
\int_0^1 \lrej(\alpha \epsilon) ~\mathrm{d} s(\alpha) ~&=~ \lrej(\epsilon)\,s(1) ~-~ \lrej(0)\,s(0) ~-~ \int_0^1 s(\alpha) ~\mathrm{d} \lrej(\alpha \epsilon) \\
&=~ -s(0) ~-~ \int_0^1 s(\alpha) ~\mathrm{d} \lrej(\alpha \epsilon).
\end{align*}
Substituting the above into Eq. (\ref{eq:copy_total_robust_loss}), the total robust loss simplifies into
\begin{align}
\label{eq:copy_total_robust_loss_simplified}
    \tlrej_\epsilon(f; \lrej) ~=~ -\int_0^1 s(\alpha) ~\mathrm{d} \lrej(\alpha \epsilon).
\end{align}
From the above expression, we next calculate the total robust loss for the ramp and step rejection losses.

\subsection{Ramp Rejection loss}
Recall that the ramp rejection loss is defined as
\begin{align*}
\lrej(r) ~=~ \left(1 \,-\, \frac{r}{\epsilon}\right)^t, ~~r \in [0, \epsilon]
\end{align*}
for some $t \geq 1$.
We have $\,\lrej(\alpha \epsilon) \,=\, (1 - \alpha)^t\,$ and
\begin{align*}
\mathrm{d} \lrej(\alpha \epsilon) ~=\, -t \,(1 \,-\, \alpha)^{t - 1} \,\mathrm{d} \alpha.
\end{align*}
Substituting the above into Eq. (\ref{eq:copy_total_robust_loss_simplified}) gives the total robust loss
\begin{align*}
\tlrej_\epsilon(f; \lrej) ~=~ t \int_0^1 s(\alpha) \,(1 \,-\, \alpha)^{t - 1} \,\mathrm{d} \alpha.
\end{align*}
For the special case $t = 1$, this reduces to the area under the robust error curve $\,\int_0^1 s(\alpha) \,\mathrm{d} \alpha$. \\
For the special case $t = 2$, this reduces to $\,2 \int_0^1 s(\alpha) \,(1 - \alpha) \,\mathrm{d} \alpha~~$ (and so on for larger $t$). \\
In our experiments, we calculate the total robust loss for $t \in \{1, 2, 3, 4\}$.
% For the special case $t = 3$, this reduces to $\,3 \int_0^1 s(\alpha) \,(1 - \alpha)^2 \,\mathrm{d} \alpha$. \\
% For the special case $t = 4$, this reduces to $\,4 \int_0^1 s(\alpha) \,(1 - \alpha)^3 \,\mathrm{d} \alpha$.
%
Since we calculate the robust error curve only for a finite set of $\alpha$ values, we approximate the above integrals using the trapezoidal rule. We use finely-spaced $\alpha$ values in $[0, 1]$ with a spacing of $0.01$, and use linear interpolation to obtain intermediate (missing) values of the robust error curve.

\subsection{Step Rejection loss}
Recall that the step rejection loss is defined as
\begin{align*}
\lrej(r) ~=~ \ind\{r \,\leq\, \alpha_0\epsilon\}, ~~r \in [0, \epsilon]
\end{align*}
for some $\alpha_0 \in [0, 1]$. In this case, there are two ways to calculate the total robust loss, both leading to the same result.

\mypara{Approach 1}
For the step rejection loss, the derivative can be defined by appealing to the Dirac delta function $\delta(x)$. Specifically, $\lrej(\alpha \epsilon) = \ind\{\alpha \,\leq\, \alpha_0\}$ and
\begin{align*}
\mathrm{d} \lrej(\alpha \epsilon) ~= \,-\,\delta(\alpha - \alpha_0) \,\mathrm{d} \alpha,
\end{align*}
where the negative sign arises because the step loss drops from $1$ to $0$ at $\alpha = \alpha_0$.
Substituting the above into Eq. (\ref{eq:copy_total_robust_loss_simplified}) gives the total robust loss
\begin{align*}
\tlrej_\epsilon(f; \lrej) ~=~ \int_0^1 s(\alpha) \,\delta(\alpha - \alpha_0) \,\mathrm{d} \alpha ~=~ s(\alpha_0) ~=~ \rerej_{\epsilon}(f, \alpha_0).
\end{align*}

\mypara{Approach 2}
We directly use the definition of the total robust loss in Eq. (\ref{eq:copy_total_robust_loss}), which for the step rejection loss is
\begin{align*}
    \tlrej_\epsilon(f; \lrej) ~&=~ \rerej_{\epsilon}(f, 0) ~+~ \int_0^1 \ind\{\alpha \,\leq\, \alpha_0\}
    ~\mathrm{d} s(\alpha) \\
    &=~ s(0) ~+~ \int_0^{\alpha_0} 1 \,\mathrm{d} s(\alpha) ~=~ s(0) ~+~ s(\alpha_0) ~-~ s(0) \\
    &=~ s(\alpha_0) ~=~ \rerej_{\epsilon}(f, \alpha_0).
\end{align*}
For the step rejection loss, the total robust loss is equal to the value of the robust error curve at the point $\alpha = \alpha_0$.

\section{Experimental Setup}
\label{sec:detailed-setup}

We ran all our experiments with PyTorch and NVDIA GeForce RTX 2080Ti GPUs. We ran all the experiments once with fixed random seeds. 
The code for our work can be found at \url{https://github.com/jfc43/stratified-adv-rej}.

\subsection{Datasets}
\noindent{\bf MNIST. }
MNIST~\cite{lecun1998mnist} is a large dataset of handwritten digits with 10 classes. Each digit has 5,500 training images and 1,000 test images. Each image is 28$\times$28 grayscale. We normalize the range of pixel values to [0,1].

\noindent{\bf SVHN. }
SVHN~\cite{netzer2011reading} is a real-world image dataset of 32$\times$32 color images with 10 classes (one for each digit). It is obtained from house numbers in Google Street View images. The training set has 73,257 images and the original test set has 26,032 images. We use the first 10,000 images from the original test set as the test set for our experiments. We normalize the range of pixel values to [0,1].

\noindent{\bf CIFAR-10. } 
CIFAR-10~\cite{krizhevsky2009learning} is a dataset of 32$\times$32 color images with ten classes, each consisting of 5,000 training images and 1,000 test images. The classes correspond to categories such as dogs, frogs, ships, trucks, etc. We normalize the range of pixel values to [0,1].

\subsection{Baseline Methods}
\label{sec:baseline-details}

 We consider the following five baseline methods: (1) AT+CR: adversarial training (AT)~\cite{madry2018towards} with Confidence-based Rejection; (2)TRADES+CR: TRADES~\cite{zhang2019theoretically} with Confidence-based Rejection; (3) CCAT: confidence-calibrated adversarial training~\cite{stutz2020ccat}; (4) RCD: robust classification with detection~\cite{sheikholeslami2021provably}; (5) ATRR: adversarial training with rectified rejection~\cite{pang2022two}. We provide their training details below. 
 
\mypara{AT+CR. } We use the standard adversarial training (AT) proposed in \cite{madry2018towards} to train the base model. On MNIST, we use LetNet network architecture~\cite{CunBDHHHJ89} and train the network for 100 epochs with a batch size of 128. We use standard stochastic gradient descent (SGD) starting with a learning rate of 0.1. The learning rate is multiplied by 0.95 after each epoch. We use a momentum of 0.9 and do not use weight decay for SGD. We use the PGD attack to generate adversarial training examples with $\epsilon=0.3$, a step size of 0.01, 40 steps and a random start. On SVHN, we use ResNet-20 network architecture~\cite{HeZRS16} and train the network for 200 epochs with a batch size of 128. We use standard SGD starting with a learning rate of 0.1. The learning rate is multiplied by 0.95 after each epoch. We use a momentum of 0.9 and do not use weight decay for SGD. We use the PGD attack to generate adversarial training examples with $\epsilon=\frac{8}{255}$, a step size of $\frac{2}{255}$, 10 steps and a random start. On CIFAR-10, we use ResNet-20 network architecture and train the network following the suggestions in~\cite{pang2021bag}. Specifically, we train the network for 110 epochs with a batch size of 128 using SGD with Nesterov momentum and learning rate schedule. We set momentum 0.9 and $\ell_2$ weight decay with a coefficient of $5\times 10^{-4}$. The initial learning rate is 0.1 and it decreases by 0.1 at 100 and 105 epoch respectively. We augment the training images using random crop and random horizontal flip. We use the PGD attack to generate adversarial training examples with $\epsilon=\frac{8}{255}$, a step size of $\frac{2}{255}$, 10 steps and a random start. 

 \mypara{TRADES+CR. } TRADES is an adversarial training method proposed in~\cite{zhang2019theoretically}. We follow their original setup to train the models on MNIST and CIFAR-10. The setup for SVHN is the same as that for CIFAR-10.
 
\mypara{CCAT. } We follow the original training settings for CCAT in~\cite{stutz2020ccat} and train models on MNIST, SVHN and CIFAR-10 using standard stochastic gradient descent (SGD). On all datasets, we set $\rho=10$. On MNIST, we use LetNet network architecture and train the network for 100 epochs with a batch size of 100 and a learning rate of 0.1. On SVHN, we use ResNet-20 network architecture and train the network for 200 epochs with a batch size of 100 and a learning rate of 0.1. On CIFAR-10, we use ResNet-20 network architecture and train the network for 200 epochs with a batch size of 100 and a learning rate of 0.075. We augment the training images using random crop and random horizontal flip. On all datasets, we use learning rate schedule and the learning rate is multiplied by 0.95 after each epoch. We use a momentum of 0.9 and do not use weight decay for SGD. We use the PGD attack with backtracking to generate adversarial training examples: we use a learning rate of 0.005, a momentum of 0.9, a learning rate factor of 1.5, 40 steps and a random start. We randomly switch between random initialization and zero initialization. We train on 50\% clean and 50\% adversarial examples per batch. 

\mypara{RCD. } We consider a training loss proposed in \cite{sheikholeslami2021provably} (see their Equation 14) to train a neural network for (k+1)-class classification with the (k+1)-th class dedicated to the detection task. \cite{sheikholeslami2021provably} use interval bound propagation (IBP) technique~\cite{GowalDSBQUAMK19} to bound the training loss and then train the network by minimizing the tractable upper bound on the training loss (see their Equation 18) to achieve verified robustness. Since we don't consider verified robustness in our paper, we use the PGD attack to solve the inner maximization problems of the training objective instead. For readers' convenience, we describe their training objective here. Suppose the logits of the network is $\,\widetilde{\mathbf{h}}(\xb \semic \bftheta) = [\widetilde{h}_1(\xb \semic \bftheta), \cdots, \widetilde{h}_{k+1}(\xb \semic \bftheta)]$ and the softmax output of the network is $\,\mathbf{h}(\xb \semic \bftheta) = [h_1(\xb \semic \bftheta), \cdots, h_{k+1}(\xb \semic \bftheta)]$. The softmax output of the network is obtained by applying the softmax function to the logits. Then the training objective is
\begin{align}
\min_{\bftheta} ~ \expec_{(\bfx, y) \sim \calD}\bigg[ \max_{\bfx^\prime \,\in\, \nei(\bfx, \epsilon)}   - \log[h_y(\bfx^\prime \semic \bftheta)] 
~+~ \lambda_1 \,L_{\text{robust}}^{\text{abstain}}(\bfx, y; \bftheta)
~-~ \lambda_2 \,\log[h_y(\bfx \semic \bftheta)] 
\bigg].
\end{align} 
where 
\begin{align}
    L_{\text{robust}}^{\text{abstain}}(\bfx, y; \bftheta) = \max_{\bfx^\prime \,\in\, \nei(\bfx, \epsilon)} \min \bigg\{-\log(\frac{e^{\widetilde{h}_y(\bfx^\prime \semic \bftheta)}}{\sum_{i\in \mathcal{I}\setminus \{k+1\}} e^{\widetilde{h}_i(\bfx^\prime \semic \bftheta)}}), -\log(\frac{e^{\widetilde{h}_{k+1}(\bfx^\prime \semic \bftheta)}}{\sum_{i\in \mathcal{I}\setminus \{y\}} e^{\widetilde{h}_i(\bfx^\prime \semic \bftheta)}}) \bigg\},
\end{align}
and $\mathcal{I}=\{ 1, 2, \dots, k+1 \}$. \cite{sheikholeslami2021provably} always set $\lambda_1=1$ and to keep high clean accuracy, they set a larger $\lambda_2$ (e.g. they set $\lambda_2=2$ on MNIST and set $\lambda_2=2.9$ on CIFAR-10) since it is hard to get high clean accuracy when using the IBP technique to train models. Since we don't use the IBP technique, we simply set $\lambda_1=1$ and $\lambda_2=1$ (our results show that setting $\lambda_2=1$ leads to better results than that of setting $\lambda_2 \geq 2$). We train models on MNIST, SVHN, and CIFAR-10 using standard stochastic gradient descent (SGD). We split each training batch into two sub-batches of equal size, and use the first sub-batch for the last two loss terms in the training objective and use the second sub-batch for the first loss term in the training objective. On MNIST, we use LetNet network architecture and train the network for 100 epochs with a batch size of 128. We use standard SGD starting with a learning rate of 0.1. The learning rate is multiplied by 0.95 after each epoch. We use a momentum of 0.9 and do not use weight decay for SGD. We use the PGD attack to solve the inner maximization problems with $\epsilon=0.3$, a step size of 0.01, 40 steps and a random start. On SVHN, we use ResNet-20 network architecture and train the network for 200 epochs with a batch size of 128. We use standard SGD starting with a learning rate of 0.05. The learning rate is multiplied by 0.95 after each epoch. We use a momentum of 0.9 and do not use weightf decay for SGD. We use the PGD attack to solve the inner maximization problems with $\epsilon=\frac{8}{255}$, a step size of $\frac{2}{255}$, 10 steps and a random start. On CIFAR-10, we use ResNet-20 network architecture and train the network for 110 epochs with a batch size of 128 using SGD with Nesterov momentum and learning rate schedule. We set momentum 0.9 and $\ell_2$ weight decay with a coefficient of $5\times 10^{-4}$. The initial learning rate is 0.1 and it decreases by 0.1 at 100 and 105 epoch respectively. We augment the training images using random crop and random horizontal flip. We use the PGD attack to solve the inner maximization problems with $\epsilon=\frac{8}{255}$, a step size of $\frac{2}{255}$, 10 steps and a random start. 

\mypara{ATRR. } We follow the original training settings for ATRR in~\cite{pang2022two} and train models on MNIST, SVHN and CIFAR-10 using standard stochastic gradient descent (SGD). On MNIST, we use LetNet network architecture and train the network for 100 epochs with a batch size of 128. We use standard SGD starting with a learning rate of 0.1. The learning rate is multiplied by 0.95 after each epoch. We use a momentum of 0.9 and do not use weight decay for SGD. We use the PGD attack to generate adversarial training examples with $\epsilon=0.3$, a step size of 0.01, 40 steps and a random start. On SVHN, we use ResNet-20 network architecture and train the network for 200 epochs with a batch size of 128. We use standard SGD starting with a learning rate of 0.1. The learning rate is multiplied by 0.95 after each epoch. We use a momentum of 0.9 and do not use weight decay for SGD. We use the PGD attack to generate adversarial training examples with $\epsilon=\frac{8}{255}$, a step size of $\frac{2}{255}$, 10 steps and a random start. On CIFAR-10, we use ResNet-20 network architecture and train the network for 110 epochs with a batch size of 128 using SGD with Nesterov momentum and learning rate schedule. We set momentum 0.9 and $\ell_2$ weight decay with a coefficient of $5\times 10^{-4}$. The initial learning rate is 0.1 and it decreases by 0.1 at 100 and 105 epoch respectively. We augment the training images using random crop and random horizontal flip. We use the PGD attack to generate adversarial training examples with $\epsilon=\frac{8}{255}$, a step size of $\frac{2}{255}$, 10 steps and a random start.

\section{Designing Adaptive Attacks}
\label{sec:app_adaptive_attacks}
To compute the robust accuracy with rejection $\rarej_{\epsilon}(f, \alpha)$ for a given $\epsilon > 0$ and $\alpha \in [0, 1]$, we need to generate two adversarial examples $\bfx^\prime \in \nei(\bfx, \alpha\epsilon)$ and $\bfx'' \in \nei(\bfx, \epsilon)$ for each clean input $(\bfx, y)$. We call the attack for generating $\bfx^\prime \in \nei(\bfx, \alpha\epsilon)$ the inner attack, and the attack for generating $\bfx'' \in \nei(\bfx, \epsilon)$ the outer attack. 
%We briefly discussed the adaptive attacks for our method SATR in Section~\ref{sec:adaptive_attacks_main}.

For both the inner attack and outer attack, we use an ensemble of attacks and report the worst-case robustness. For the inner attack, if any of the attacks in the inner-attack ensemble finds an adversarial example $\bfx^\prime \in \nei(\bfx, \alpha\epsilon)$ that is rejected by the model, then we consider the inner attack to be successful on the clean input $(\bfx, y)$. For the outer attack, if any of the attacks in the outer-attack ensemble finds an adversarial example $\bfx'' \in \nei(\bfx, \epsilon)$ that is accepted and misclassified by the model, then we consider the outer attack to be successful on the clean input $(\bfx, y)$.  

In this section, we design adaptive attacks to generate $\bfx^\prime$ and $\bfx''$ for the different methods using the same underlying principles to ensure fair comparison. 
The following ideas are applied to design the attack loss functions for all the methods.
Whenever the attack objective is to maximize a probability (or a probability-like) term $p \in [0, 1]$, we use the loss function $\log(p) \in  (-\infty, 0]$. 
Similarly, the loss function $-\log(p) \in [0, \infty)$ is used to minimize a probability (or a probability-like) term $p$.
% Suppose the softmax prediction of the classifier on an input is $[p_1, \cdots, p_k]$.
% In order to minimize the confidence of the classifier $\,\max_i \,p_i$, we use the $\ell_2$ norm of the softmax predictions $\sum_i \,p^2_i$ as a smooth surrogate loss function~\footnote{The entropy of the class predictions can also be used as a smooth loss function, which can be maximized in order to minimize the confidence.}. Minimizing the $\ell_2$ norm of the softmax predictions pushes the individual class predictions to be equal to $1 / k$ (\ie uniform). 

We first introduce the attack objectives for all the methods below, and then discuss how we solve the attack objectives. 

\subsection{Attack Objectives}
\label{sec:attack_obj}
In this section, we describe the attack objectives used to generate the adversarial examples $\bfx^\prime \in \nei(\bfx, \alpha\epsilon)$ (\ie the inner attack) and the adversarial examples $\bfx'' \in \nei(\bfx, \epsilon)$ (\ie the outer attack) from a clean input $(\bfx, y)$ for the different methods compared. The goal of the adversary to generate $\bfx^\prime$ is to make the defense method reject $\bfx^\prime$. The goal of the adversary to generate $\bfx''$ is to make the defense method accept and incorrectly classify $\bfx''$ into a class other than $y$. We next discuss the inner and outer attack objectives for the different methods considered.

\mypara{Confidence-based Rejection. } The methods AT+CR, TRADES+CR and CCAT~\cite{stutz2020ccat} use the classifier's confidence (\ie maximum softmax probability) as the score for rejection. Suppose the softmax output of the classifier is $\mathbf{h}(\xb \semic \bftheta) = [h_1(\xb \semic \bftheta), \cdots, h_k(\xb \semic \bftheta)]$, then the score for acceptance is the confidence given by $\,h_{\max}(\bfx \semic \bftheta) = \max_j h_j(\bfx \semic \bftheta)$. We use the log-sum-exp approximation of the max function to define a smooth inner attack objective that minimizes the confidence score $h_{\max}(\bfx \semic \bftheta)$. We use the fact that 
\[
\frac{1}{\tau} \,\log\Big( \sum_{i=1}^k e^{\tau s_i} \Big) ~\approx~ \max_{i \in [k]} s_i
\]
where the approximation becomes better for larger values of the temperature constant $\tau > 0$.
We would like to approximate the exact inner attack objective given by
\begin{align}
\label{eq:confidence_attack_exact}
\bfx^\prime ~&=~ \argmax_{\bfz \in \nei(\bfx, \alpha\epsilon)} -\, \log h_{\max}(\bfz \semic \bftheta). 
\end{align}
This attack aims to find an adversarial input $\bfx'$ that minimizes the confidence, thus causing the input to be rejected by methods using confidence-based rejection.
Let $\widetilde{\mathbf{h}}(\xb \semic \bftheta) = [\widetilde{h}_1(\xb \semic \bftheta), \cdots, \widetilde{h}_k(\xb \semic \bftheta)]$ denote the logits corresponding to the classifier prediction, with $\widetilde{h}_{\max}(\bfx \semic \bftheta)$ being the maximum logit.
The attack objective~(\ref{eq:confidence_attack_exact}) can be approximated as
\begin{align}
\label{eq:confidence_attack_approx}
\bfx^\prime ~&=~ \argmax_{\bfz \in \nei(\bfx, \alpha\epsilon)}  -\log h_{\max}(\bfz \semic \bftheta) \nonumber \\
~&=~ \argmax_{\bfz \in \nei(\bfx, \alpha\epsilon)} -\widetilde{h}_{\max}(\bfz \semic \bftheta) ~+~ \log\Big( \sum_{i=1}^k e^{\widetilde{h}_i(\bfz \semic \bftheta)} \Big) \nonumber \\
&\approx~ \argmax_{\bfz \in \nei(\bfx, \alpha\epsilon)} -\frac{1}{\tau} \,\log\Big( \sum_{i=1}^k e^{\tau \widetilde{h}_i(\bfz \semic \bftheta)} \Big) ~+~ \log\Big( \sum_{i=1}^k e^{\widetilde{h}_i(\bfz \semic \bftheta)} \Big).
\end{align}
We name this attack with objective~(\ref{eq:confidence_attack_approx}) Low Confidence Inner Attack (\textbf{LCIA}). In our experiments, we set $\tau=100$.  

For the outer attack, we use the multi-targeted PGD approach. Specifically, for each target label $j\neq y$, we generate an adversarial example $\bfx_j'' \in \nei(\bfx, \epsilon)$ via the following objective:
\begin{align}
\label{eq:adaptive_attack_outer_conf_multitargeted}
 \bfx_j'' ~=~ \argmax_{\bfz \in \nei(\bfx, \epsilon)} \,  \log h_j(\bfz \semic \bftheta).
\end{align}
Then we select the strongest adversarial example $\bfx''$ via: 
\begin{align}
\label{eq:adaptive_attack_outer_conf_multitargeted1}
\bfx'' ~=~ \bfx''_{j^\star} ~~~~s.t.~~~~ j^\star ~=~ \argmax_{ j \in [k] \setminus \{y\} } \,  \log h_j(\bfx_j'' \semic \bftheta).
\end{align}
By solving this objective, the adversary attempts to find an adversarial example that is misclassified with high confidence. The goal of the adversary is to make the selective classifier accept and incorrectly classify the adversarial input. We name this attack with objective~(\ref{eq:adaptive_attack_outer_conf_multitargeted}) High Confidence Misclassification Outer Attack (\textbf{HCMOA}). Note that this HCMOA attack is stronger than the PGD attack with backtracking proposed in \cite{stutz2020ccat} for evaluating the robustness of CCAT (see Appendix~\ref{sec:attacking-ccat} for a comparison). 

\mypara{RCD. } The RCD method~\cite{sheikholeslami2021provably} trains a ($k+1$)-way classifier such that class $k + 1$ is treated as the rejection class.
Suppose the softmax output of the classifier is $\,\mathbf{h}(\xb \semic \bftheta) = [h_1(\xb \semic \bftheta), \cdots, h_{k+1}(\xb \semic \bftheta)]$. For the inner attack, we generate the adversarial example $\bfx^\prime \in \nei(\bfx, \alpha\epsilon)$ using the following objective:
\begin{align}
\label{eq:adaptive_attack_inner_RCD}
\bfx^\prime ~=~ \argmax_{\bfz \in \nei(\bfx, \alpha\epsilon)} \log h_{k+1}(\bfz \semic \bftheta).
\end{align}
The goal of the adversary is to make the method reject the adversarial input by pushing the probability of class $k + 1$ close to $1$. We name this attack with objective~(\ref{eq:adaptive_attack_inner_RCD}) RCD Inner Attack (\textbf{RCDIA}). 

For the outer attack, we use the multi-targeted PGD approach. Specifically, for each target label $j\notin \{y,k+1\}$, we generate the adversarial example $\bfx_j'' \in \nei(\bfx, \epsilon)$ via the following objective:
\begin{align}
\label{eq:adaptive_attack_outer_RCD_multitargeted}
 \bfx_j'' ~=~  \argmax_{\bfz \in \nei(\bfx, \epsilon)} \,\log h_j(\bfz \semic \bftheta).
\end{align}
Then we select the strongest adversarial example $\bfx''$ via: 
\begin{align}
\bfx'' ~=~ \bfx''_{j^\star} ~~~~s.t.~~~~ j^\star ~=~ \argmax_{j\notin \{y,k+1\}} \,\log h_j(\bfx_j'' \semic \bftheta).
\end{align}
Here, the goal of the adversary is to make the selective classifier accept and incorrectly classify the adversarial input, and this objective achieves this by increasing the probability of a class that is different from both the true class $y$ and the rejection class $k + 1$ to be close to 1. We name this attack with objective~(\ref{eq:adaptive_attack_outer_RCD_multitargeted}) RCD Outer Attack (\textbf{RCDOA}). 

\mypara{ATRR. } The ATRR method~\cite{pang2022two} uses a rectified confidence score for rejection. Suppose the softmax output of the classifier is $\mathbf{h}(\xb \semic \bftheta) = [h_1(\xb \semic \bftheta), \cdots, h_k(\xb \semic \bftheta)]$ and the auxiliary function is $A(\bfx;\phi) \in [0,1]$. 
The rectified confidence is defined by \citet{pang2022two} as the product of the auxiliary function and the classifier's confidence, \ie $\,A(\bfx;\phi) \,h_{\max}(\bfx \semic \bftheta)$. For the inner attack, we generate the adversarial example $\bfx^\prime \in \nei(\bfx, \alpha\epsilon)$ using the following objective:
\begin{align}
    \label{eq:adaptive_attack_inner_ATRR}
    \bfx^\prime &~=~ \argmax_{\bfz \in \nei(\bfx, \alpha\epsilon)} -\, \log A(\bfz; \phi) ~-~ \log h_{\max}(\bfz \semic \bftheta) \nonumber \\ 
     &\approx~ \argmax_{\bfz \in \nei(\bfx, \alpha\epsilon)} -\, \log A(\bfz; \phi) ~+~ \Big[\log\Big( \sum_{i=1}^k e^{\widetilde{h}_i(\bfz \semic \bftheta)} \Big) -\frac{1}{\tau} \,\log\Big( \sum_{i=1}^k e^{\tau \widetilde{h}_i(\bfz \semic \bftheta)} \Big) \Big].
\end{align}
Here, we use the log-sum-exp approximation of the max function $\,h_{\max}(\bfz \semic \bftheta) = \max_j h_j(\bfz \semic \bftheta)$ and set $\tau=100$.
The goal is to minimize the rectified confidence by either minimizing the auxiliary function (first term) or the classifier's confidence by pushing its predictions to be close to uniform (second term). By minimizing the rectified confidence score, the adversary attempts to make the ATRR method reject the perturbed input $\bfx^\prime$. We name this attack with objective~(\ref{eq:adaptive_attack_inner_ATRR}) ATRR Inner Attack (\textbf{ATRRIA}). 

For the outer attack, we use the multi-targeted PGD approach. Specifically, for each target label $j\neq y$, we generate the adversarial example $\bfx_j'' \in \nei(\bfx, \epsilon)$ via the following objective:
\begin{align}
\label{eq:adaptive_attack_outer_ATRR_multitargeted}
 \bfx_j'' ~=~ \argmax_{\bfz \in \nei(\bfx, \epsilon)} \,  \log \Big[ A(\bfz; \phi) \, h_j(\bfz \semic \bftheta) \Big].
\end{align}
Then we select the strongest adversarial example $\bfx''$ via: 
\begin{align}
\bfx'' ~=~ \bfx''_{j^\star} ~~~~s.t.~~~~ j^\star ~=~ \argmax_{j \in [k] \setminus \{y\} }  \,  \log \Big[ A(\bfx_j''; \phi) \, h_j(\bfx_j'' \semic \bftheta) \Big].
\end{align}
The goal of the adversary is to make the selective classifier accept and incorrectly classify the adversarial input. 
The objective achieves this by pushing the rectified confidence as well as the predicted probability of a class different from y close to $1$. This ensures that adversarial input is accepted as well as incorrectly classified. We name this attack with objective~(\ref{eq:adaptive_attack_outer_ATRR_multitargeted}) ATRR Outer Attack (\textbf{ATRROA}). 

\mypara{CPR (proposed defense).} The goal of the inner attack for CPR is to find $\bfx^\prime \in \nei(\bfx, \alpha\epsilon)$ such that the base model has different predictions on $\bfx^\prime$ and $T(\bfx^\prime)$, thus ensuring rejection. We consider three adaptive inner attacks that can achieve this goal. The first one is the Low-Confidence Inner Attack (\textbf{LCIA}) introduced in Section~\ref{sec:adaptive_attacks_main}. 
This attack aims to find $\bfx^\prime \in \nei(\bfx, \alpha\epsilon)$ where the base model has low confidence. 
Recall that the mapping $T(\bfx^\prime)$ attempts to minimize the base model's probability on the predicted class $\widehat{y}(\bfx^\prime)$.
So, if the base model has low confidence on $\bfx^\prime$, then it will very likely have even lower probability for $\widehat{y}(\bfx^\prime)$ on $T(\bfx^\prime)$, and thus have different predictions on $T(\bfx^\prime)$ and $\bfx^\prime$.

The second inner attack is a variant of LCIA, named Consistent-Low-Confidence Inner Attack (\textbf{CLCIA}), which attempts to find an adversarial example $\bfx^\prime$ by minimizing the confidence of the base model on both $\bfx^\prime$ and $T(\bfx^\prime)$.   
The CLCIA attack has the following objective:
\begin{align}
\label{eq:adaptive_attack_inner_cpr_bpda}
    \bfx^\prime ~=~ \argmax_{\bfz \in \nei(\bfx, \alpha\epsilon)} & \Big[ -\log h_{\max}(\bfz \semic \bftheta) ~-~ \log h_{\max}(T(\bfz) \semic \bftheta) \Big] \nonumber \\ 
    \approx~ \argmax_{\bfz \in \nei(\bfx, \alpha\epsilon)} & \Big[ -\frac{1}{\tau} \,\log\Big( \sum_{i=1}^k e^{\tau \widetilde{h}_i(\bfz \semic \bftheta)} \Big) ~+~ \log\Big( \sum_{i=1}^k e^{\widetilde{h}_i(\bfz \semic \bftheta)} \Big) \nonumber \\
    & -\frac{1}{\tau} \,\log\Big( \sum_{i=1}^k e^{\tau \widetilde{h}_i(T(\bfz) \semic \bftheta)} \Big) ~+~ \log\Big( \sum_{i=1}^k e^{\widetilde{h}_i(T(\bfz) \semic \bftheta)} \Big) \Big].
\end{align}
We have denoted the logits of the base classifier by $\,\widetilde{\bfh}(\bfx \semic \bftheta) = [\widetilde{h}_1(\bfx \semic \bftheta), \cdots, \widetilde{h}_k(\bfx \semic \bftheta)]$, and we apply the log-sum-exp approximation to the $\max$ function (as before) with $\tau = 100$.
We use the backward pass differentiable approximation (BPDA) method~\cite{athalye2018obfuscated} to solve this objective since $T(\bfx)$ does not have a closed-form expression and is not differentiable.

The third inner attack is a multi-targeted attack, also based on BPDA, 
which considers all possible target classes and attempts to find an $\xb^\prime$ such that the base model has high probability for the target class at $\xb^\prime$ and a low probability for the target class at $T(\xb^\prime)$ (thereby encouraging rejection). The attack objective is: for each target class $j=1,\dots, k$,
\begin{align}
\label{eq:adaptive_attack_inner_cpr_bpda_multitargeted}
    \bfx_j^\prime ~=~ \argmax_{\bfz \in \nei(\bfx, \alpha\epsilon)}  \Big[ \log h_j(\bfz \semic \bftheta) ~-~  \log h_j(T(\bfz) \semic \bftheta) \Big].
\end{align}
Then we select the strongest adversarial example $\bfx^\prime$ via: 
\begin{align}
\bfx^\prime ~=~ \bfx^\prime_{j^\star} ~~~~s.t.~~~~ j^\star ~=~ \argmax_{j \in [k]} \,\Big[ \log h_j(\bfx_j^\prime \semic \bftheta) ~-~  \log h_j(T(\bfx_j^\prime) \semic \bftheta) \Big].
\end{align}
We name this third inner attack Prediction-Disagreement Inner Attack (\textbf{PDIA}).

Given a clean input $(\bfx, y)$, the goal of the outer attack is to find $\bfx'' \in \nei(\bfx, \epsilon)$ such that the base model has consistent wrong predictions on both $\bfx''$ and $T(\bfx'')$. This ensures that $\bfx''$ is accepted and mis-classified. We consider two adaptive outer attacks that can achieve this goal. 
As discussed in Section~\ref{sec:adaptive_attacks_main}, the first outer attack is a multi-targeted attack based on BPDA with the following objective: for each target class $j \in [k] \setminus \{y\}$,
\begin{align}
\label{eq:adaptive_attack_outer_cpr_bpda_multitargeted}
 \bfx_j'' ~=~ \argmax_{\bfz \in \nei(\bfx, \epsilon)} \Big[ \log h_j(\bfz \semic \bftheta) \,+\, \log h_j(T(\bfz) \semic \bftheta) \Big].
\end{align}
Then we select the strongest adversarial example $\bfx''$ via: 
\begin{align}
\bfx'' ~=~ \bfx''_{j^\star} ~~~~s.t.~~~~ j^\star ~=~& \argmax_{j\in [k] \setminus \{y\}} \Big[ \log h_j(\bfx_j'' \semic \bftheta) \,+\, \log h_j(T(\bfx_j'') \semic \bftheta) \Big].
\end{align}
We name this outer attack Consistent High Confidence Misclassification Outer Attack (\textbf{CHCMOA}).

The second outer attack is the High Confidence Misclassification Outer Attack (\textbf{HCMOA}) that was discussed earlier for the methods based on confidence-based rejection.
The attack objective is given in Eqns. (\ref{eq:adaptive_attack_outer_conf_multitargeted}) and (\ref{eq:adaptive_attack_outer_conf_multitargeted1}).
The intuition for this attack is that if the base model has a high-confidence incorrect prediction on $\bfx''$, then it becomes hard for $T$ to change the incorrect prediction.

\subsection{Solving the Attack Objectives}
\label{sec:solving-attack-obj}

We use the PGD with momentum to solve the attack objectives, and use the PGD attack with multiple restarts for evaluating the robustness. Following \citep{stutz2020ccat}, we initialize the perturbation $\bfdelta$ uniformly over directions and norm as follows: 
\begin{align}
    \bfdelta ~=~ u\,\epsilon ~\frac{\bfdelta'}{\| \bfdelta' \|_\infty}, ~\bfdelta' \sim \mathcal{N}(\bfzero, \mathbf{I}), ~u \sim U(0,1)
\end{align}
where $\bfdelta'$ is sampled from the standard Gaussian and $u\in [0,1]$ is sampled from the uniform distribution. We also include zero initialization, \ie $\bfdelta=\bfzero$ as a candidate. We allocate one restart for zero initialization, and multiple restarts for the random initializations. We finally select the perturbation corresponding to the best objective value obtained throughout the optimization. 

When solving the inner attack objectives, we use a momentum factor of $0.9$, $200$ iterations, and $5$ random restarts. The base learning rate (\ie the attack step size) is varied over the set $\{0.1, 0.01, 0.005 \}$ for experiments on MNIST, and over the set $\{0.01, 0.005, 0.001 \}$ for experiments on SVHN and CIFAR-10. We report the worst-case results: for each clean input $\bfx$, if the PGD method with a particular base learning rate can find an $\bfx^\prime$ that is rejected, then we will use this $\bfx^\prime$ as the generated adversarial example and consider the inner attack to be successful.

When solving the outer attack objectives, we use a momentum factor of $0.9$, $200$ iterations, and $5$ random restarts. The base learning rate (\ie the attack step size) is varied over the set $\{0.1, 0.01, 0.005 \}$ for experiments on MNIST, and over the set $\{0.01, 0.005, 0.001 \}$ for experiments on SVHN and CIFAR-10. We report the worst-case results: for each clean input $\bfx$, if the PGD method with a particular base learning rate can find an $\bfx''$ that is accepted and misclassified, then we will use this $\bfx''$ as the generated adversarial example and consider the outer attack to be successful.

\mypara{BPDA approximation. }
Three of the adaptive attacks for CPR (namely CLCIA, PDIA, and CHCMOA) depend on $T(\bfx)$, which does not have a closed-form expression and is not differentiable. 
Therefore, it is not possible to calculate the exact gradient for these attack objectives.
We will use the BPDA approach~\cite{athalye2018obfuscated} to address this challenge, specifically using the straight-through estimator for the gradient.

Recall that $\,T(\bfx) \approx \argmax_{\widetilde{\xb} \in \nei(\xb, \widetilde{\epsilon})} \,\ell_{CE}(\widetilde{\xb}, \widehat{y}(\xb))$, and it is computed using Algorithm~\ref{alg:consistent-prediction-rej}.
For any of these attack objectives, let $\ell(T(\bfx))$ denote the terms dependent on $T(\cdot)$. For instance, $\,\ell(T(\bfx)) = \log h_j(T(\bfx) \semic \bftheta)\,$ for the CHCMOA attack.
Using the chain rule, we can express the gradient of $\ell$ as follows:
\begin{align}
\nabla_{\bfx} \ell(T(\bfx)) ~=~ \bfJ_{T}(\bfx)^{t} \,\nabla_{\bfu} \ell(\bfu) \big\vert_{\bfu = T(\bfx)}
\end{align}
where $\bfJ_{T}(\bfx)^t$ is the transpose of the $d \times d$ Jacobian of $T(\bfx)$.
For a small $\widetilde{\epsilon}$, we make the approximation that $\,T(\bfx) \approx \bfx\,$ during the backward pass, which in turn makes $\bfJ_{T}(\bfx)$ approximately equal to the identity matrix. This gives the following gradient estimate of the BPDA method, which we apply to solve the attack objectives of CLCIA, PDIA, and CHCMOA:
\begin{align}
\label{eq:gradient_bpda}
\nabla_{\bfx} \ell(T(\bfx)) ~=~ \nabla_{\bfu} \ell(\bfu) \big\vert_{\bfu = T(\bfx)}
\end{align}
During the forward pass, we perform an exact computation of $T(x)$ (using Algorithm~\ref{alg:consistent-prediction-rej}), but during the backward pass, we approximate the gradient of the attack objectives using Eqn.(\ref{eq:gradient_bpda}). 
We note that these adaptive attacks are more expensive to run because they require the computation of $T(\bfx)$ during each PGD step.

\subsection{Attack Ensemble}
\label{sec:attack-ensemble}

As discussed earlier, for each defense method, we consider an ensemble of inner and outer attacks and report the worst-case robustness with rejection under these attacks. We list the specific attacks in the ensemble for each defense method below: 

\mypara{Confidence-based Rejection.} The inner-attack ensemble only the includes Low Confidence Inner Attack (LCIA). The outer-attack ensemble includes the AutoAttack~\cite{croce2020reliable} and High Confidence Misclassification Outer Attack (HCMOA).

\mypara{RCD.}  The inner-attack ensemble only includes RCDIA. The outer-attack ensemble includes AutoAttack and RCDOA.

\mypara{ATRR.}  The inner-attack ensemble only includes ATRRIA. The outer-attack ensemble includes AutoAttack and ATRROA.

\mypara{CPR (proposed).} The inner-attack ensemble includes LCIA, CLCIA, and PDIA. The outer-attack ensemble includes AutoAttack, HCMOA and CHCMOA.
By including a number of strong attacks in the ensemble, we have attempted to perform a thorough evaluation of CPR.

\subsection{Evaluating Adaptive Attacks for CPR}
In Section~\ref{sec:results}, we perform an ablation study to compare the strength of the different adaptive inner and outer attacks for CPR. These results are in Table~\ref{tab:cpr-seen-attack-ablation}, and here we discuss the choice of metrics for this evaluation.
The outer attack only affects $\rarej_{\epsilon}(f, 0)$, while the inner attack affects $\rarej_{\epsilon}(f, \alpha)$ for $\alpha > 0$. 
Therefore, for the outer attacks we only need to compare $\rarej_{\epsilon}(f, 0)$. For the inner attacks we compare $\rarej_{\epsilon}(f, 1)$, while fixing the outer attack to be the strongest ensemble outer attack. 
This corresponds to the right end of the robustness curve, and gives a clear idea of the strength of the inner attack.

\iffalse
For the outer attack, we only need to compare $\rarej_{\epsilon}(f, 0)$, since it does not affect the metric for larger $\alpha$ values.
The inner attack affects $\rarej_{\epsilon}(f, \alpha)$ for $\alpha > 0$. We compare the strength of the inner attacks by evaluating $\rarej_{\epsilon}(f, 1)$, while fixing the outer attack to be the strongest ensemble outer attack.
\fi

\section{Additional Experimental Results}
\label{sec:additional-exp-results}

\subsection{Evaluating Traditional Metrics} 
\label{sec:trad-metrics}

We evaluate our method and the baselines on traditional metrics, including accuracy with rejection on clean test inputs, rejection rate on clean test inputs and robust accuracy with detection defined in~\cite{tramer2021detecting}.

The accuracy with rejection on clean test inputs is defined as: $\Pr_{(\xb,y) \sim \calD} \{ f(\xb) = y \,|\, f(\xb) \neq \rejsym \}$.

The rejection rate on clean test inputs is defined as: $\Pr_{(\xb,y) \sim \calD} \{f(\xb) = \rejsym \}$.

The robust accuracy with detection is defined as: 
\begin{align*}
  \rarej_\epsilon(f) ~:=&~ 1-\orerej_\epsilon(f) ~=~ 1 ~- \expec_{(\xb,y) \sim \calD} \bigg[
    \ind\{ f(\xb) \neq y \}  \vee \max_{\xb^\prime \in \nei(\xb, \epsilon)} \ind\big\{ f(\xb^\prime) \not \in \{y, \rejsym\} \big\} \bigg].
\end{align*}
In order to define a single metric that combines the accuracy with rejection and the rejection rate metrics on clean test inputs, we propose to use an F1 score like metric that is based on their harmonic mean:
% We further define a metric called F1 score to summarize the performance of different methods on clean test inputs: 
\begin{align*}
 \frac{2 \,\Pr_{(\xb,y) \sim \calD} \{ f(\xb) = y \,|\, f(\xb) \neq \rejsym \} \,\Pr_{(\xb,y) \sim \calD} \{f(\xb) \neq \rejsym \}}{\Pr_{(\xb,y) \sim \calD} \{ f(\xb) = y \,|\, f(\xb) \neq \rejsym \} \,+\, \Pr_{(\xb,y) \sim \calD} \{f(\xb) \neq \rejsym \}}
\end{align*}

The results for these metrics are given in Table~\ref{tab:eval-trad-metrics}. The results show that our method CPR has comparable performance to the baselines on clean test inputs, and also significantly outperforms the baselines on the robust accuracy with detection. 

\begin{table*}[htb]
    \centering
    \begin{adjustbox}{width=\columnwidth,center}
		\begin{tabular}{l|l|c|c|c|c|c}
			\toprule
			\multirow{2}{0.08\linewidth}{Dataset} &  \multirow{2}{0.08\linewidth}{Method} & \multicolumn{3}{|c|}{Clean Test Inputs} & Under Seen Attacks & Under Unseen Attacks \\  \cline{3-7}
			& & Acc. with Rej. $\uparrow$ & Rej. Rate $\downarrow$ & F1 Score $\uparrow$ & Robust Acc. with Det. $\uparrow$ & Robust Acc. with Det. $\uparrow$ \\ \hline \hline
			\multirow{4}{0.12\linewidth}{MNIST}  
			& AT & 98.81 & 0.00 & 99.40 & 84.70 & 0.00 \\
			& AT+CR & 99.55 & 1.79 & 98.87 & 91.60 & 0.00 \\
            & TRADES & 99.07 & 0.00 & \textbf{99.53} & 89.30 & 0.00 \\
			& TRADES+CR & 99.67 & 1.86 & 98.90 & 94.00 & 0.00 \\
			& CCAT & 99.90 & 1.82 & 99.03 & 83.20 & 75.50 \\
			& RCD & 99.02 & 0.00 & 99.51 & 86.50 & 0.00 \\
			& ATRR & 99.62 & 2.51 & 98.54 & 91.20 & 0.00 \\
			& AT+CPR (Ours)  & 99.60 & 1.99 & 98.80 & \textbf{96.10} & \textbf{90.40} \\ 
            & TRADES+CPR (Ours) & 99.63 & 1.63 & 98.99 & 95.80 & 86.70 \\ \hline 
			\multirow{4}{0.12\linewidth}{{SVHN}}
			& AT & 92.58 & 0.00 & 96.15 & 45.10 & 11.70 \\
			& AT+CR & 96.22 & 8.91 & 93.58 & 46.10 & 11.80 \\
            & TRADES & 92.19 & 0.00 & 95.94 & 52.00 & 12.30 \\
			& TRADES+CR & 95.47 & 9.06 & 93.15 & 52.90 & 12.60 \\
			& CCAT & 99.04 & 7.73 & 95.53 & 45.30 & 5.50 \\
			& RCD & 96.58 & 0.00 & \textbf{98.26} & 33.80 & 9.70 \\
			& ATRR & 96.14 & 8.98 & 93.51 & 44.80 & 11.50 \\
			& AT+CPR (Ours)  & 95.86 & 7.34 & 94.23 & 55.80 & 14.70 \\ 
            & TRADES+CPR (Ours) & 94.96 & 6.56 & 94.20 & \textbf{62.00} & \textbf{18.70} \\ \hline 
			\multirow{4}{0.12\linewidth}{CIFAR-10}
			& AT & 84.84 & 0.00 & 91.80 & 47.60 & 10.80 \\
			& AT+CR & 90.55 & 13.00 & 88.74 & 50.00 & 10.50 \\
            & TRADES & 82.12 & 0.00 & 90.18 & 48.70 & 15.20 \\
			& TRADES+CR & 86.57 & 9.59 & 88.45 & 50.00 & 15.10 \\
			& CCAT & 93.18 & 9.12 & 92.01 & 27.70 & 8.80 \\
			& RCD & 88.13 & 2.07 & \textbf{92.77} & 46.70 & 9.50 \\
			& ATRR & 89.36 & 12.09 & 88.63 & 48.80 & 11.30 \\
			& AT+CPR (Ours)  & 89.05 & 9.57 & 89.74 & 56.70 & 17.10 \\ 
            & TRADES+CPR (Ours) & 86.30 & 9.57 & 88.32 & \textbf{57.10} & \textbf{21.90} \\ 
			\bottomrule
		\end{tabular}
	\end{adjustbox}
	\caption[]{\small Evaluation of traditional metrics (percentages). Top-1 \textbf{boldfaced}. }
	\label{tab:eval-trad-metrics}
\end{table*}

\subsection{Evaluating Robustness Curve}
\label{sec:full-rob-curve-results}

 In Section \ref{sec:results}, we discussed the results of evaluating the robustness curve on the CIFAR-10 dataset. We present the complete results of evaluating the  
 robustness curve on all the datasets under both seen attacks and unseen attacks in Figure~\ref{fig:full-main-results}.
 The observations on MNIST and SVHN are similar to that of CIFAR-10, except that CCAT has much better robustness with rejection at $\alpha=0$ than the other baselines on MNIST under unseen attacks.

\begin{figure*}[htb]
	\centering
	\includegraphics[width=0.31\linewidth]{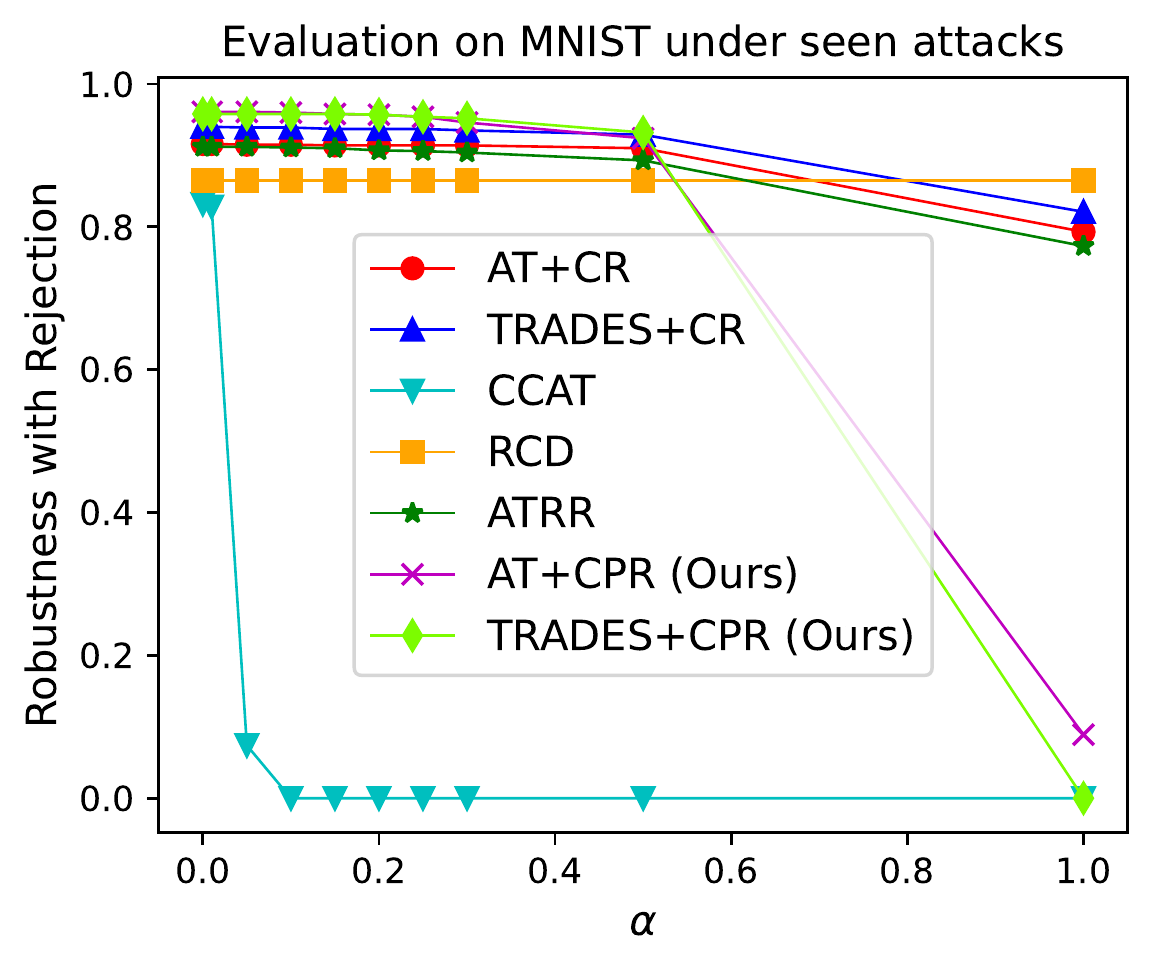}
	\includegraphics[width=0.31\linewidth]{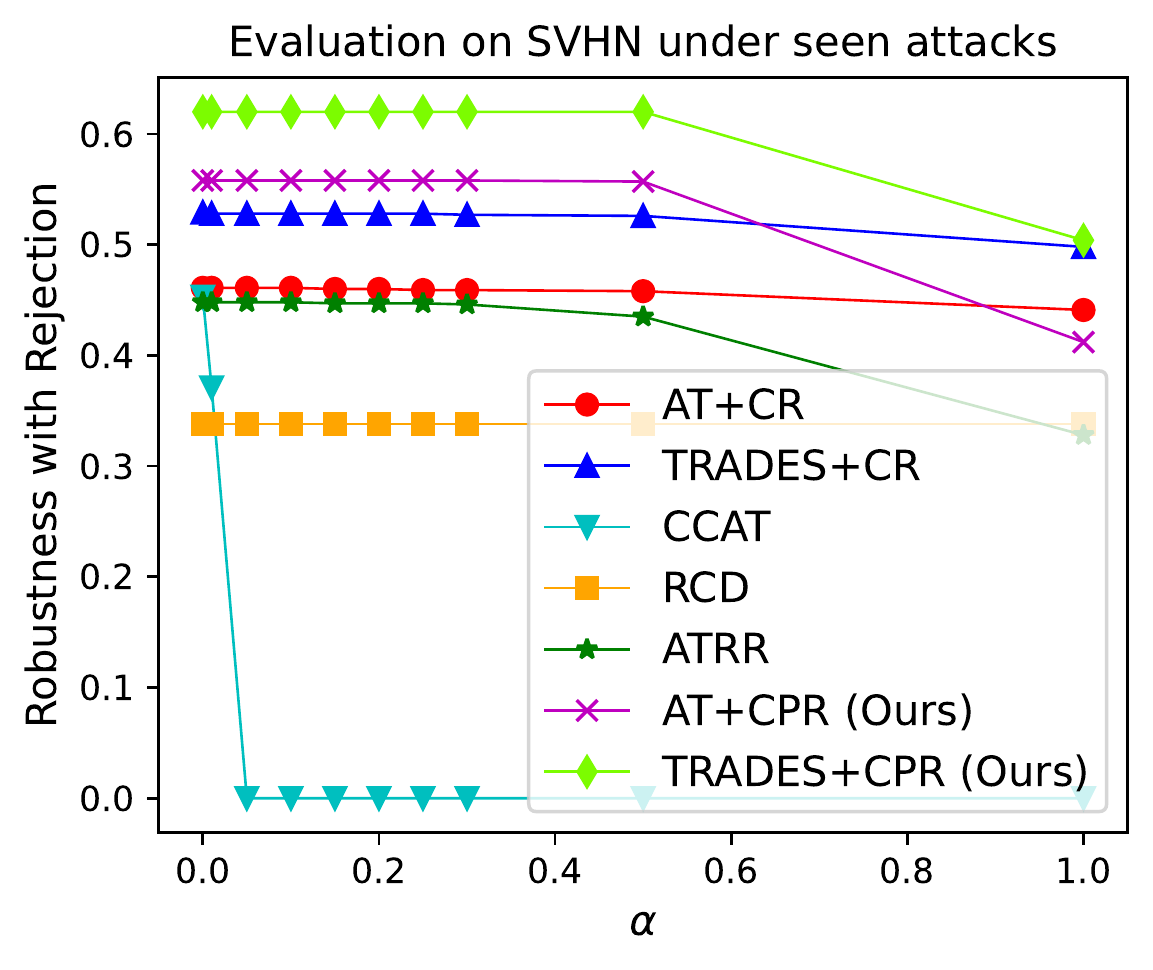}
	\includegraphics[width=0.31\linewidth]{figures/main/cifar10_result_seen_attack.pdf}
	\includegraphics[width=0.31\linewidth]{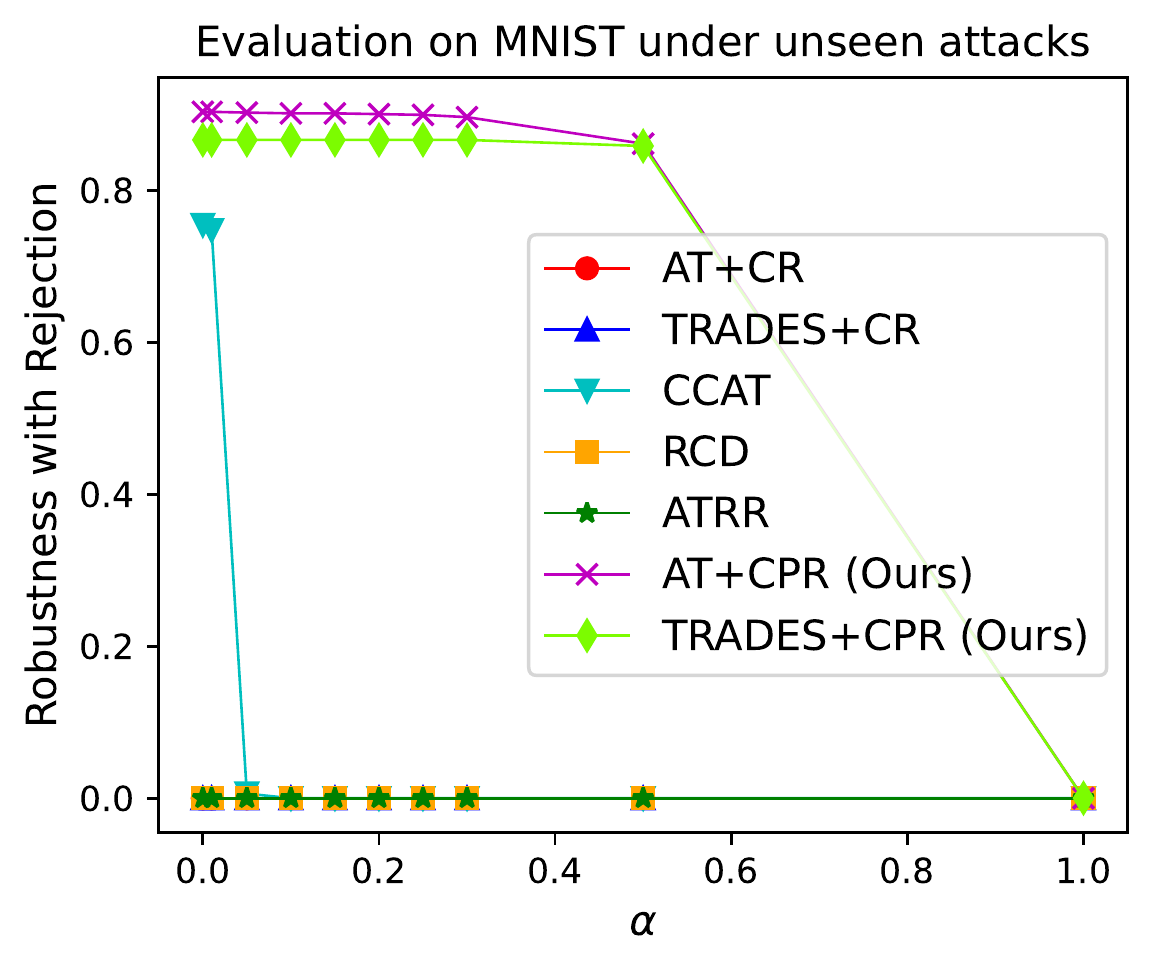}
	\includegraphics[width=0.31\linewidth]{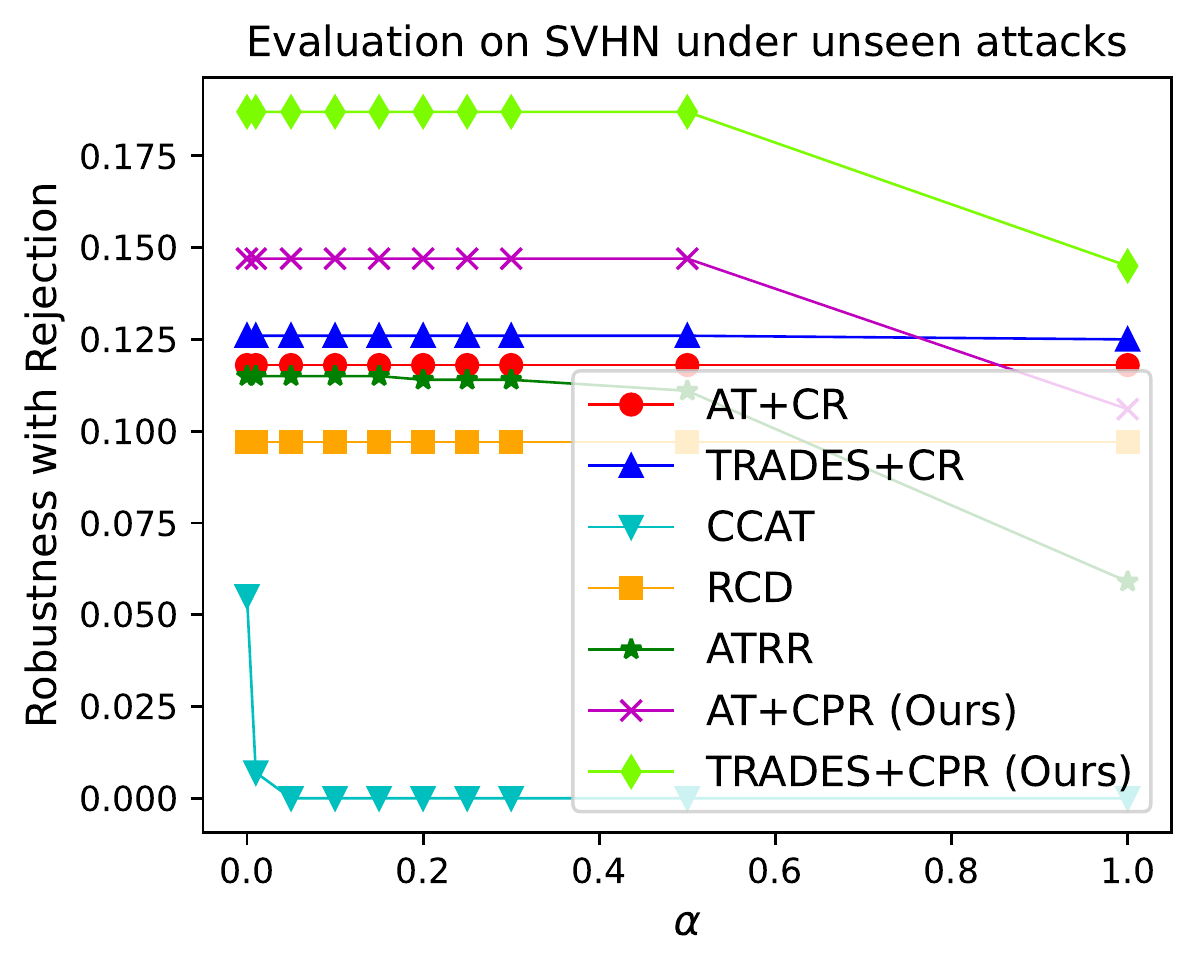}
	\includegraphics[width=0.31\linewidth]{figures/main/cifar10_result_unseen_attack.pdf}
	\caption{\small The robustness curve of our method CPR and the baselines. }
	\label{fig:full-main-results}
\end{figure*}

\subsection{Evaluating Total Robust Loss for More Rejection Loss Functions}

In this section, we evaluate the total robust loss of the different methods for some additional rejection loss functions. Specifically, we additionally consider the step rejection loss function with $\alpha_0\in \{0.01, 0.15, 0.2\}$ and the ramp rejection loss function with $t\in \{ 1, 3 \}$. These results are given in Table~\ref{tab:total-loss-additional-results}. 
Our findings are the same as those discussed in Section~\ref{sec:results}.

\begin{table}[htb]
    \centering
    \begin{adjustbox}{width=\columnwidth,center}
		\begin{tabular}{l|l|c|c|c|c|c|c|c|c|c|c}
			\toprule
			\multirow{3}{0.08\linewidth}{Dataset} &  \multirow{3}{0.12\linewidth}{Method}  &  \multicolumn{5}{c|}{Total Robust Loss under Seen Attacks $\downarrow$} &  \multicolumn{5}{c}{Total Robust Loss under Unseen Attacks $\downarrow$} \\ \cline{3-12}
			  &  & \multicolumn{3}{c|}{Step Rej. Loss} & \multicolumn{2}{c|}{Ramp Rej. Loss} & \multicolumn{3}{c|}{Step Rej. Loss} & \multicolumn{2}{c}{Ramp Rej. Loss} \\ \cline{3-12}
			  &  & $\alpha_0=0.01$  & $\alpha_0=0.15$ & $\alpha_0=0.2$ & $t=1$ & $t=3$ & $\alpha_0=0.01$  & $\alpha_0=0.15$ & $\alpha_0=0.2$ & $t=1$ & $t=3$ \\  \hline \hline
			\multirow{4}{0.08\linewidth}{MNIST} 
            & AT+CR & 0.084 & 0.086 & 0.086 & 0.117 & 0.090 & 1.000 & 1.000 & 1.000 & 1.000 & 1.000 \\
            & TRADES+CR & 0.060 & 0.063 & 0.063 & \textbf{0.095} & \textbf{0.068} & 1.000 & 1.000 & 1.000 & 1.000 & 1.000 \\
            & CCAT & 0.172 & 1.000 & 1.000 & 0.972 & 0.919 & 0.252 & 1.000 & 1.000 & 0.977 & 0.934 \\
            & RCD & 0.135 & 0.135 & 0.135 & 0.135 & 0.135 & 1.000 & 1.000 & 1.000 & 1.000 & 1.000 \\
            & ATRR & 0.088 & 0.090 & 0.093 & 0.131 & 0.099 & 1.000 & 1.000 & 1.000 & 1.000 & 1.000 \\ \cline{2-12}
            & AT+CPR (Ours) & \textbf{0.039} & \textbf{0.042} & \textbf{0.043} & 0.295 & 0.080 & \textbf{0.096} & \textbf{0.098} & \textbf{0.099} & \textbf{0.340} & \textbf{0.136} \\
            & TRADES+CPR (Ours) & 0.042 & 0.042 & 0.043 & 0.292 & 0.078 & 0.133 & 0.133 & 0.133 & 0.353 & 0.162 \\ \hline \hline
			\multirow{4}{0.08\linewidth}{SVHN}
			& AT+CR & 0.539 & 0.540 & 0.540 & 0.545 & 0.541 & 0.882 & 0.882 & 0.882 & 0.882 & 0.882 \\
                & TRADES+CR & 0.472 & 0.472 & 0.472 & 0.480 & 0.473 & 0.874 & 0.874 & 0.874 & 0.874 & 0.874 \\
                & CCAT & 0.629 & 1.000 & 1.000 & 0.988 & 0.967 & 0.993 & 1.000 & 1.000 & 1.000 & 0.999 \\
                & RCD & 0.662 & 0.662 & 0.662 & 0.662 & 0.662 & 0.903 & 0.903 & 0.903 & 0.903 & 0.903 \\
                & ATRR & 0.552 & 0.553 & 0.553 & 0.587 & 0.559 & 0.885 & 0.885 & 0.886 & 0.901 & 0.888 \\ \cline{2-12}
                & AT+CPR (Ours) & 0.442 & 0.442 & 0.442 & 0.479 & 0.447 & 0.853 & 0.853 & 0.853 & 0.863 & 0.854 \\
                & TRADES+CPR (Ours) & \textbf{0.380} & \textbf{0.380} & \textbf{0.380} & \textbf{0.409} & \textbf{0.384} & \textbf{0.813} & \textbf{0.813} & \textbf{0.813} & \textbf{0.823} & \textbf{0.814} \\ \hline \hline
			\multirow{4}{0.08\linewidth}{CIFAR-10}
			& AT+CR & 0.500 & 0.501 & 0.501 & 0.515 & 0.504 & 0.895 & 0.895 & 0.895 & 0.895 & 0.895 \\
                & TRADES+CR & 0.500 & 0.500 & 0.500 & 0.507 & 0.501 & 0.849 & 0.849 & 0.849 & 0.849 & 0.849 \\
                & CCAT & 0.729 & 1.000 & 1.000 & 0.992 & 0.976 & 0.985 & 1.000 & 1.000 & 0.999 & 0.998 \\
                & RCD & 0.533 & 0.533 & 0.533 & 0.533 & 0.533 & 0.905 & 0.905 & 0.905 & 0.905 & 0.905 \\
                & ATRR & 0.513 & 0.513 & 0.513 & 0.524 & 0.516 & 0.887 & 0.887 & 0.887 & 0.887 & 0.887 \\ \cline{2-12}
                & AT+CPR (Ours) & 0.433 & 0.433 & 0.433 & 0.464 & 0.437 & 0.829 & 0.829 & 0.829 & 0.849 & 0.832 \\
                & TRADES+CPR (Ours) & \textbf{0.429} & \textbf{0.429} & \textbf{0.429} & \textbf{0.457} & \textbf{0.433} & \textbf{0.781} & \textbf{0.781} & \textbf{0.781} & \textbf{0.800} & \textbf{0.783} \\
			\bottomrule
		\end{tabular}
	\end{adjustbox}
	\caption[]{\small The total robust losses for different rejection loss functions. The best result is \textbf{boldfaced}. }
	\label{tab:total-loss-additional-results}
\end{table}

\subsection{Comparing AT with Rejection and AT without Rejection}

We compare Adversarial Training (AT)  with Consistent Prediction-based Rejection (CPR) to AT  without rejection. Both of them use the same base model, which is trained by either standard Adversarial Training (AT) or TRADES. Since adversarial training without rejection accepts every input, the robust accuracy with rejection $\rarej_{\epsilon}(f, \alpha)$ is a constant (across $\alpha$), which is equal to the standard adversarial robustness. We use AutoAttack~\cite{croce2020reliable} to evaluate the adversarial robustness of adversarial training without rejection. 

From the results in Figure~\ref{fig:comparing-at-results}, we observe that AT with CPR is usually better (has higher robustness with rejection) than AT without rejection, especially under unseen attacks. 
Under seen attacks, AT without rejection is better than AT with CPR for large $\alpha$ since AT with CPR may reject large perturbations, which is considered to be an error by the robustness with rejection metric for large $\alpha$.  
However, if we allow rejecting large perturbations, AT with CPR is always better than AT without rejection, which suggests that including CPR can help improve the performance of adversarial training. 

\begin{figure}[htb]
	\centering
	\includegraphics[width=0.32\linewidth]{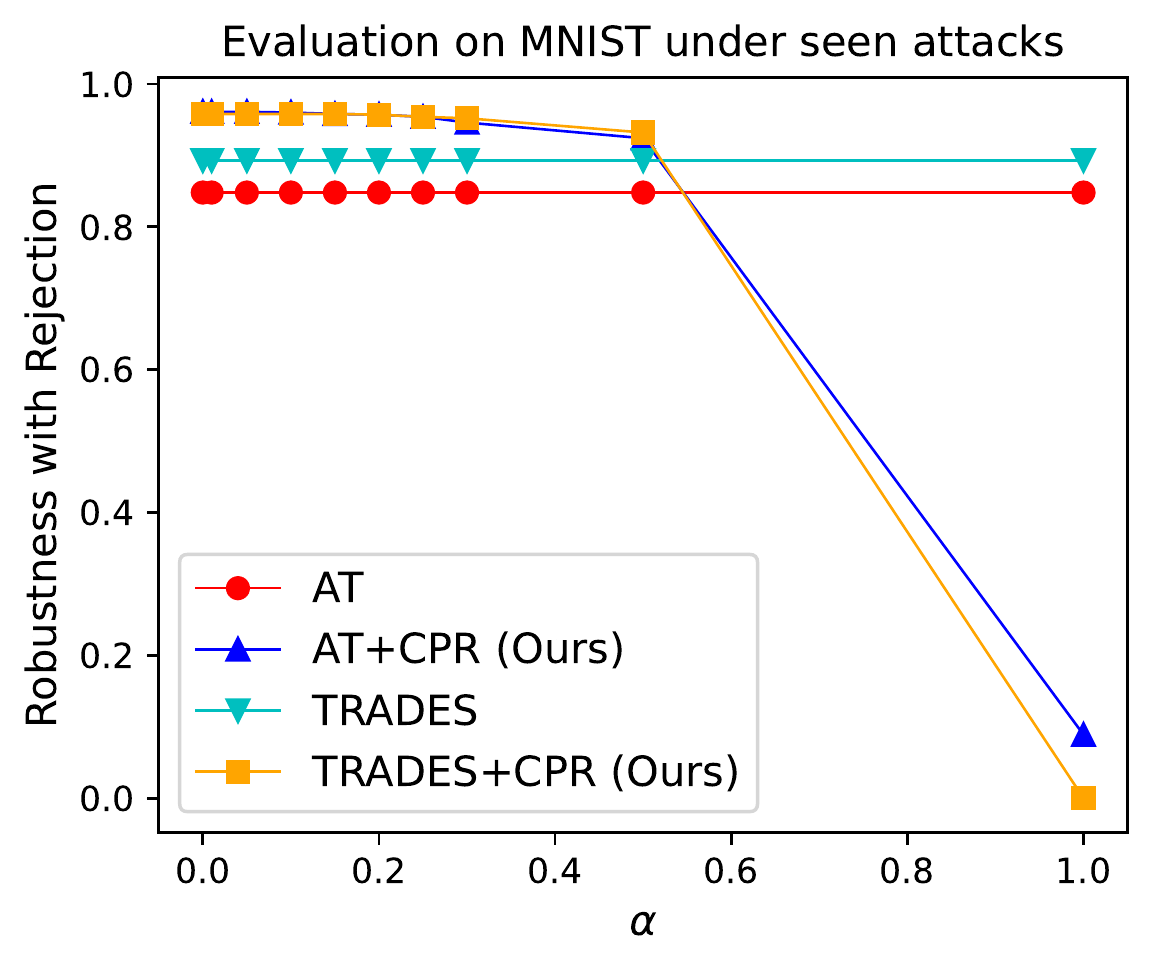}
	\includegraphics[width=0.32\linewidth]{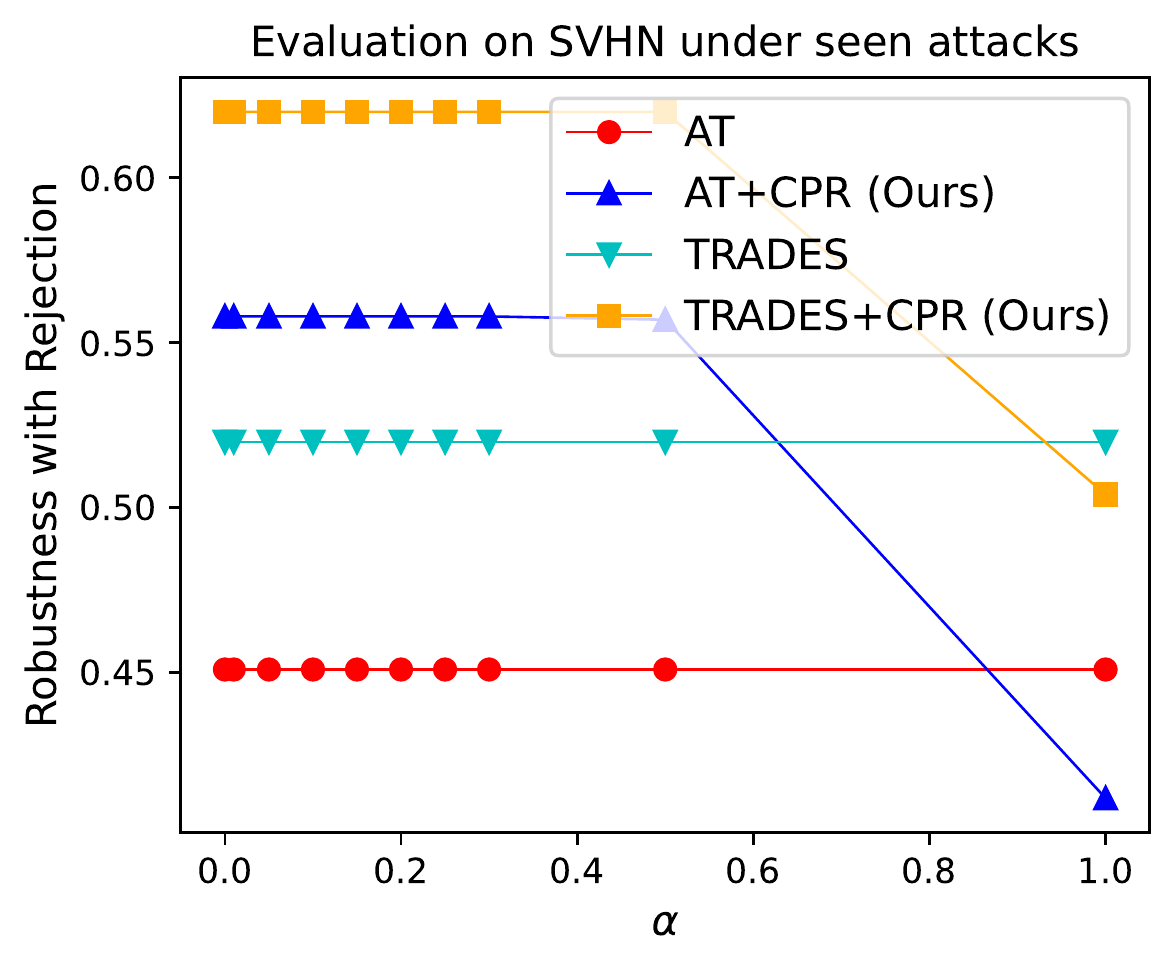}
	\includegraphics[width=0.32\linewidth]{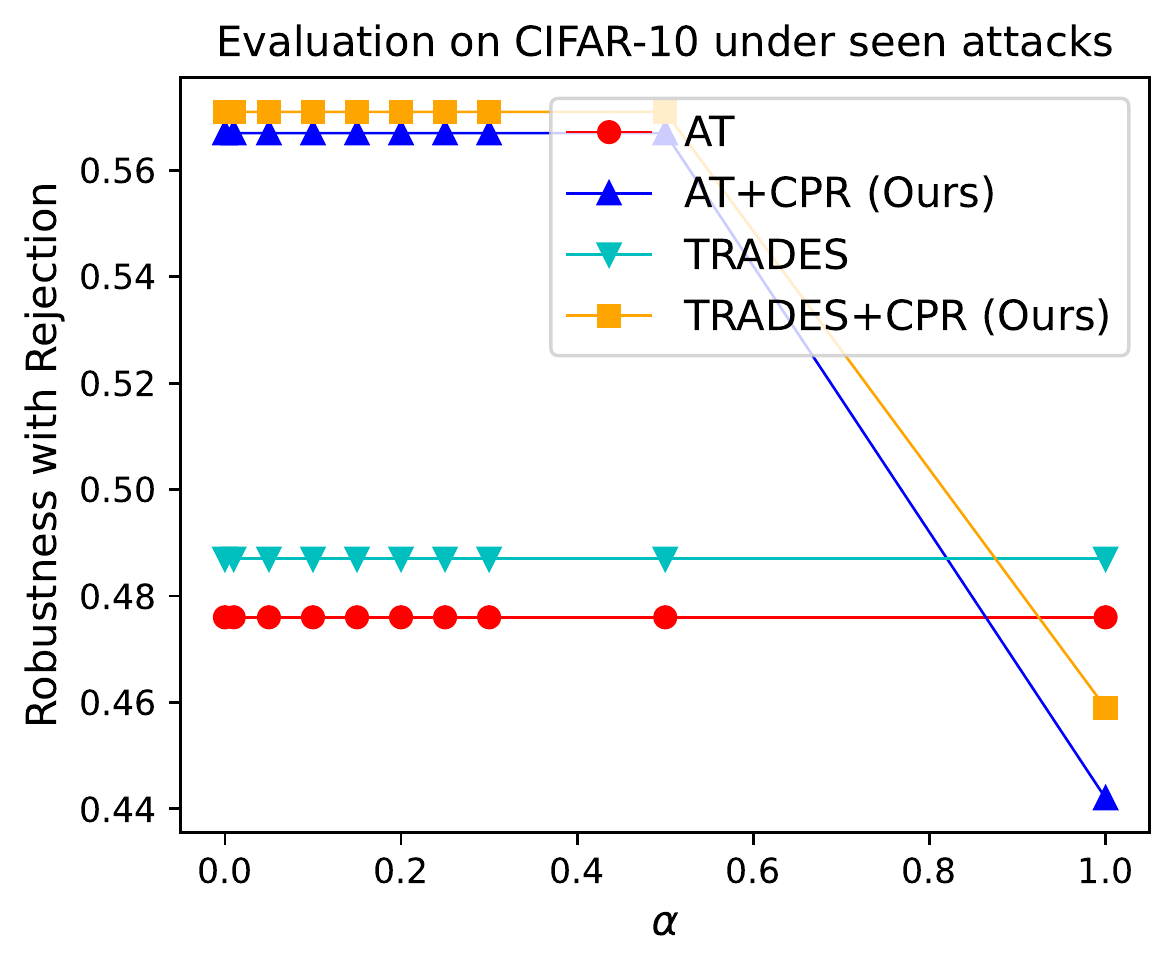}
	\includegraphics[width=0.32\linewidth]{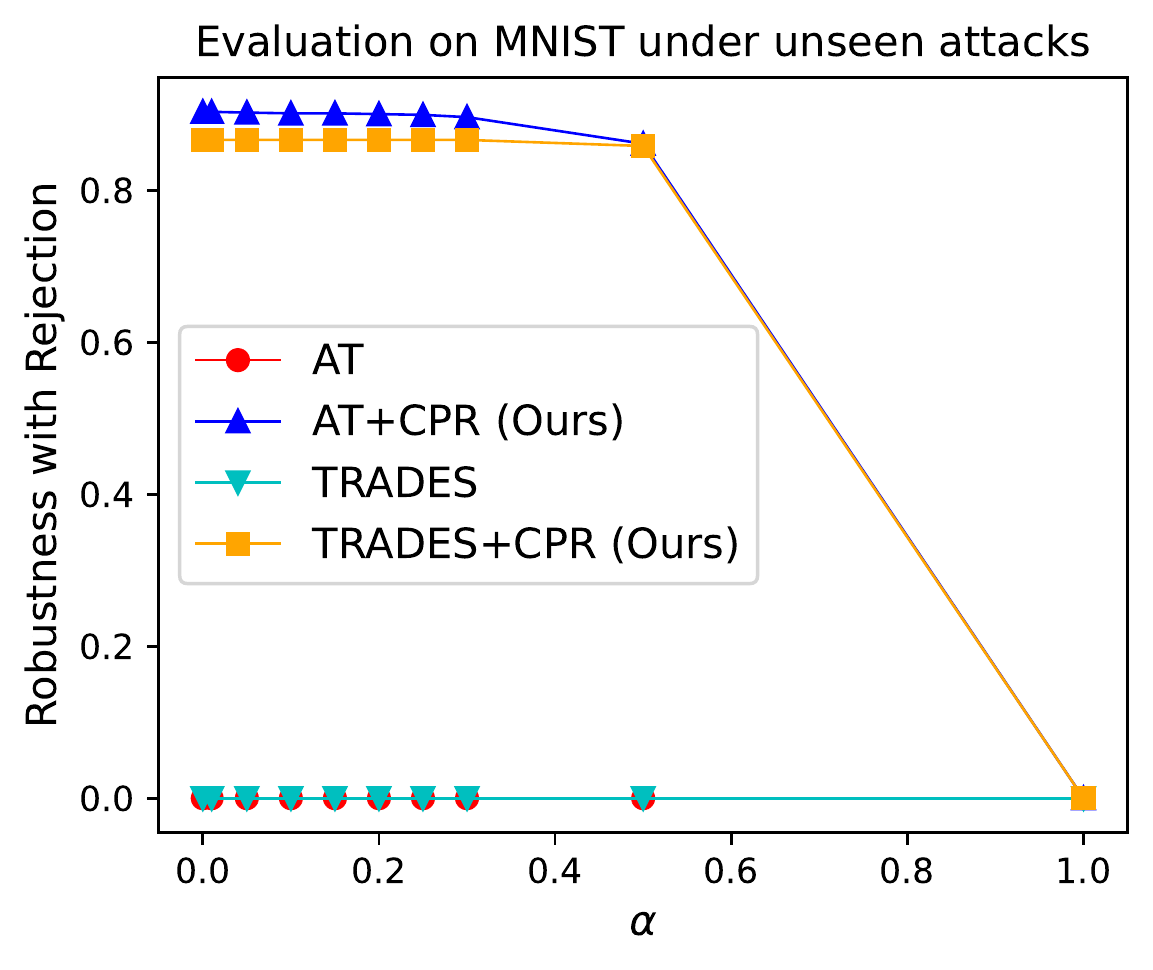}
	\includegraphics[width=0.32\linewidth]{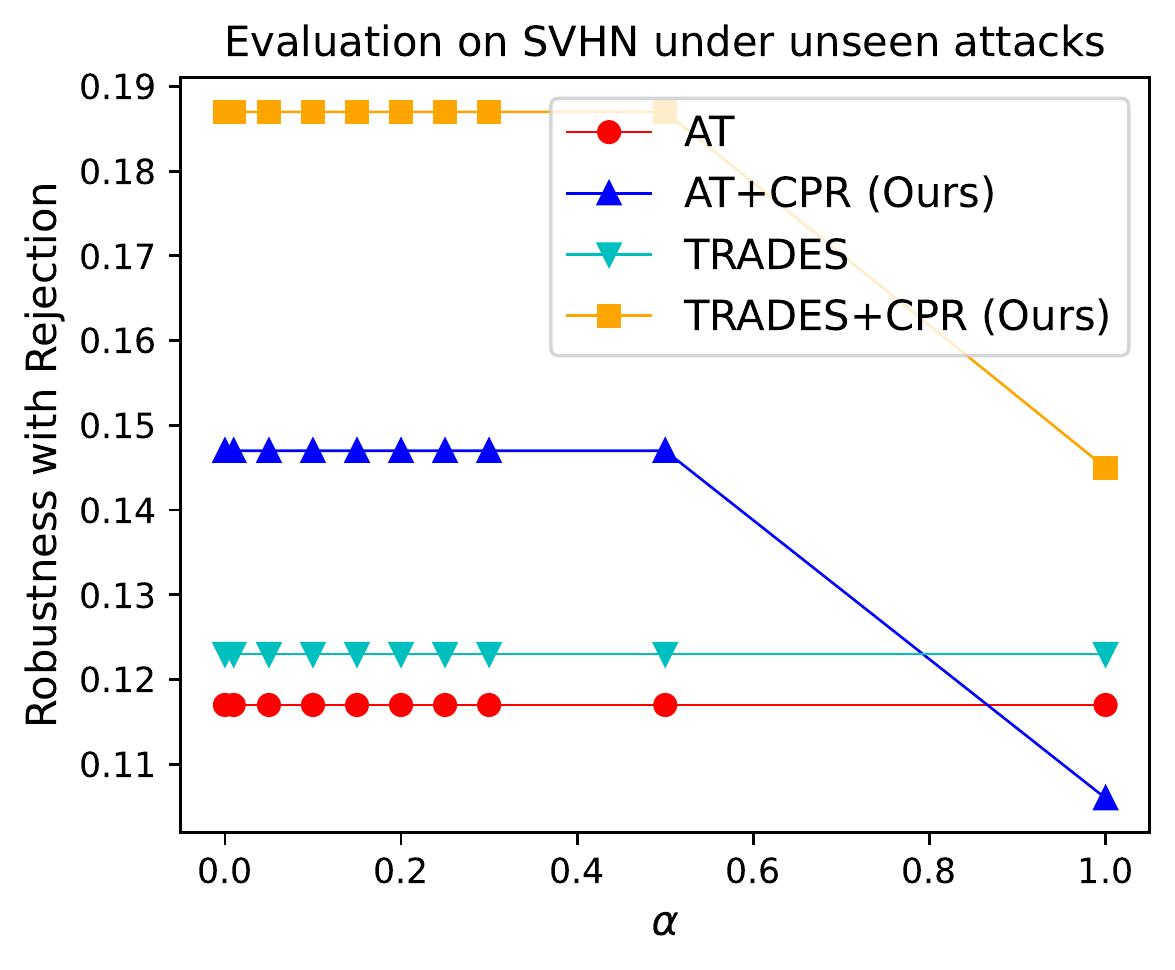}
	\includegraphics[width=0.32\linewidth]{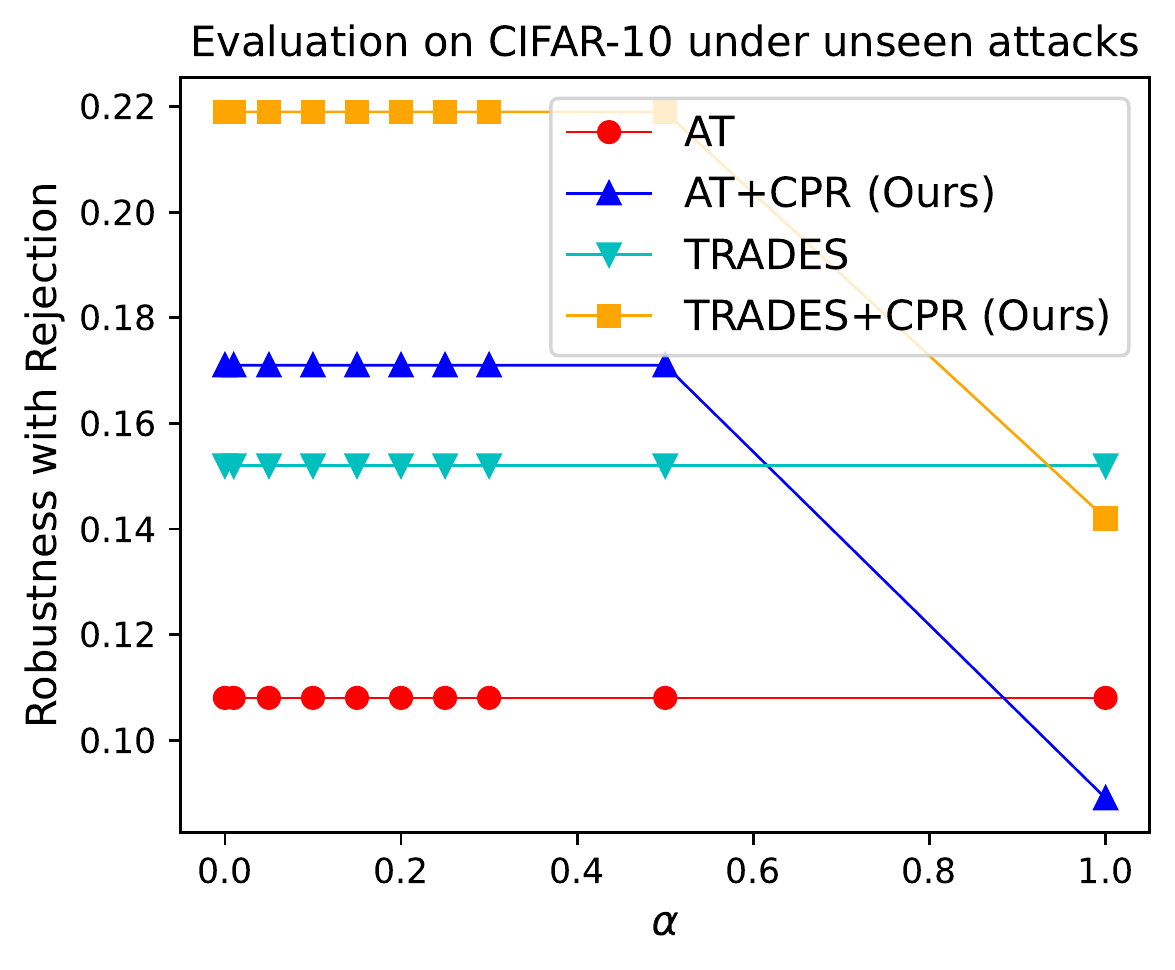}
	\caption{\small Results of comparing the robustness with rejection of AT with CPR and AT without CPR.  }
	\label{fig:comparing-at-results}
\end{figure}

\subsection{Ablation Study for CPR}
\label{sec:cpr-hyper-ablation}
We perform experiments to study the effect of the hyper-parameters $\tilde{\epsilon}$ and $m$ on the proposed method CPR. The results are shown in Figure~\ref{fig:ablation-vary-eps-results} and Figure~\ref{fig:ablation-vary-m-results}. From the results, we can see that larger $\tilde{\epsilon}$ leads to better robustness with rejection at $\alpha=0$. However, it also leads to lower robustness with rejection when $\alpha$ is large, which means CPR rejects more perturbed inputs. Lager $\tilde{\epsilon}$ will also lead to a larger rejection rate on the clean inputs. Besides, larger $m$ also leads to better robustness with rejection at $\alpha=0$, but may lead to lower robustness with rejection when $\alpha$ is large. Note that larger $m$ leads to more computational cost. We don't need to choose very large $m$ since the results show that as we increase $m$, the robustness with rejection at $\alpha=0$ improvement becomes minor. 
 
\begin{figure*}[htb]
	\centering
	\includegraphics[width=0.31\linewidth]{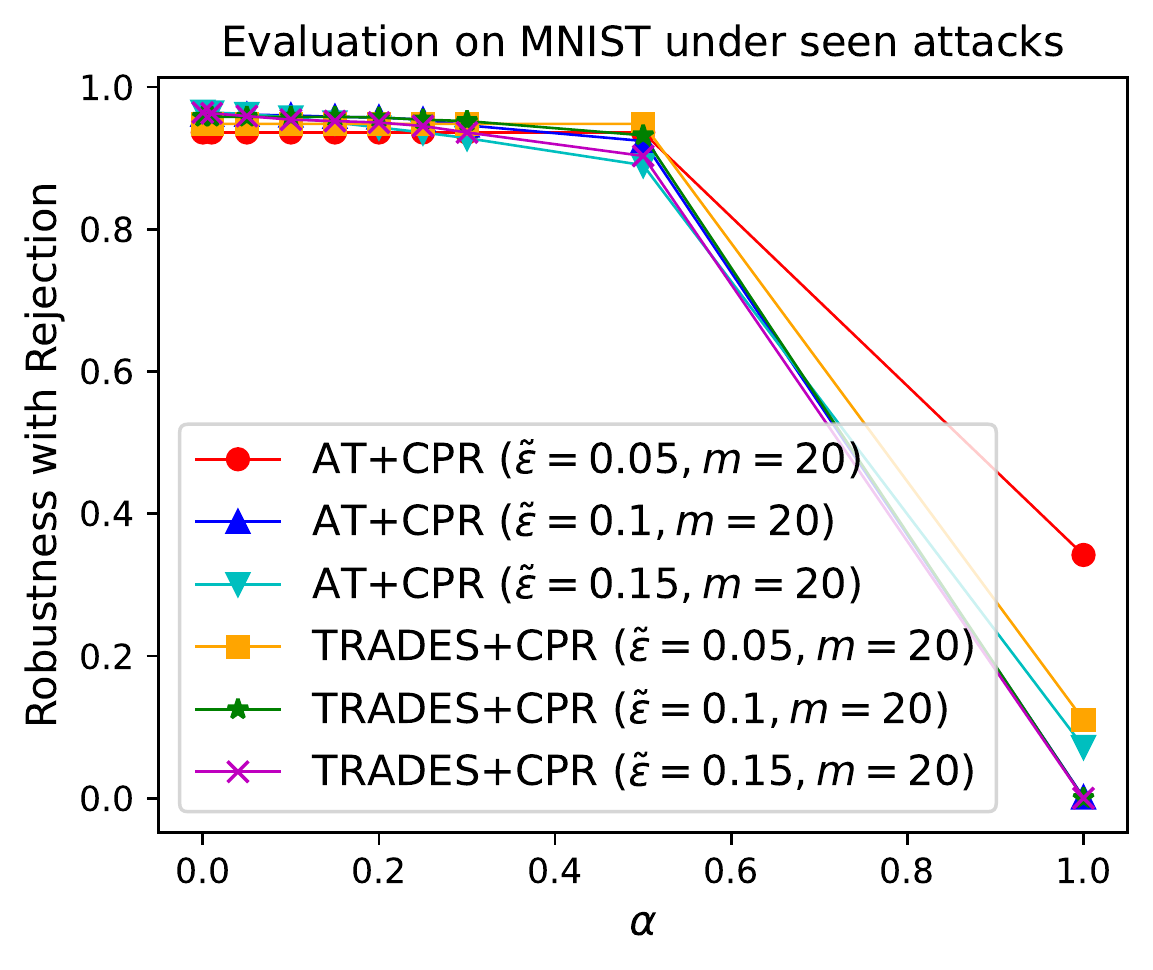}
	\includegraphics[width=0.31\linewidth]{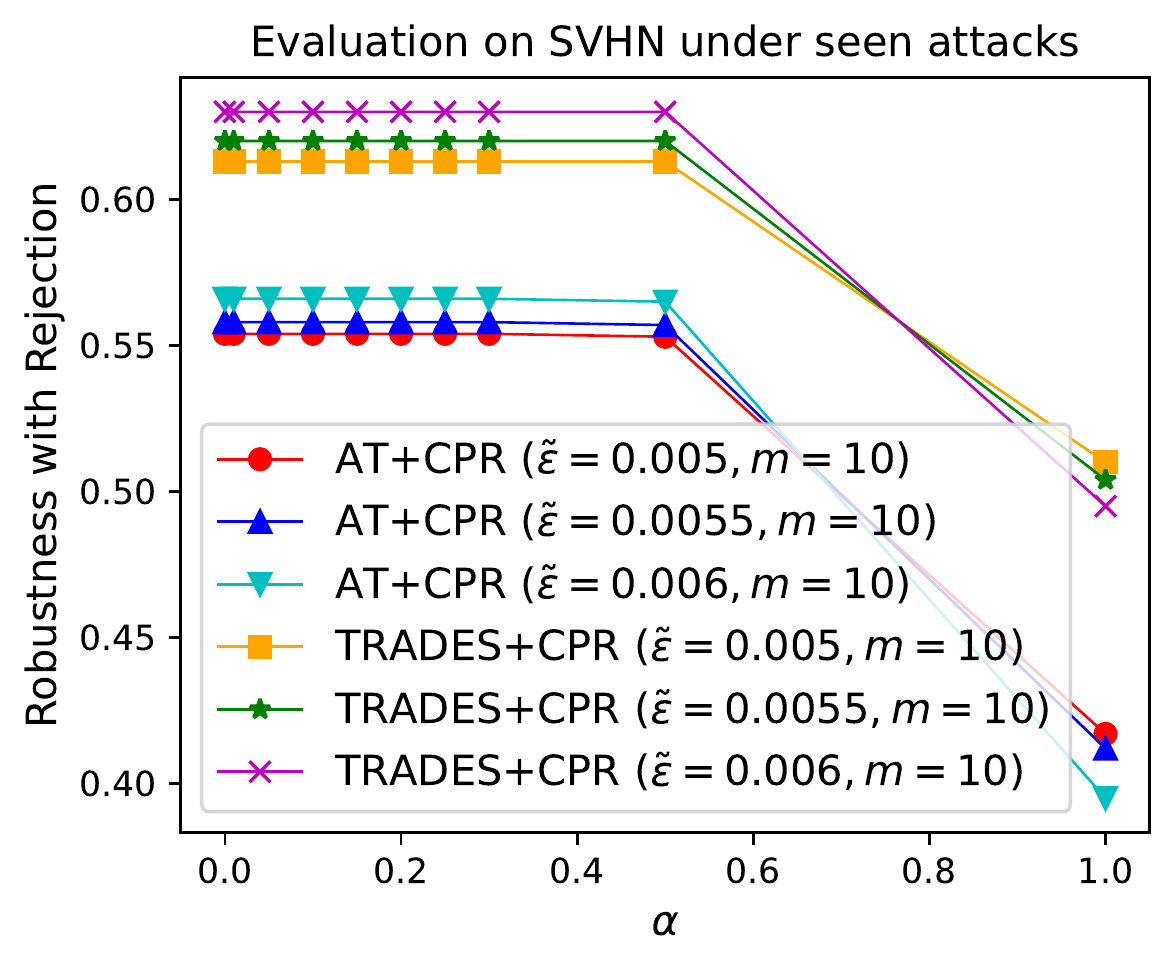}
	\includegraphics[width=0.31\linewidth]{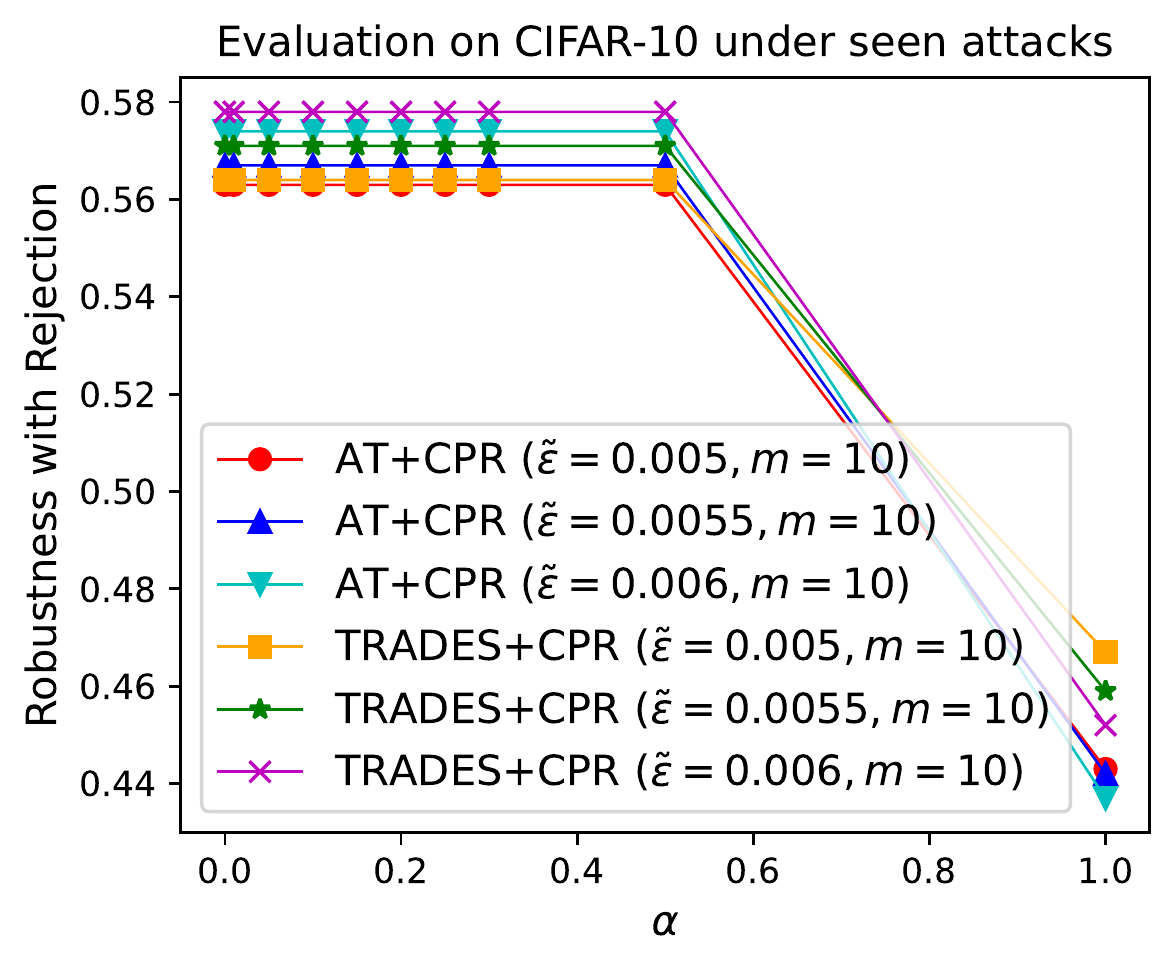}
	\includegraphics[width=0.31\linewidth]{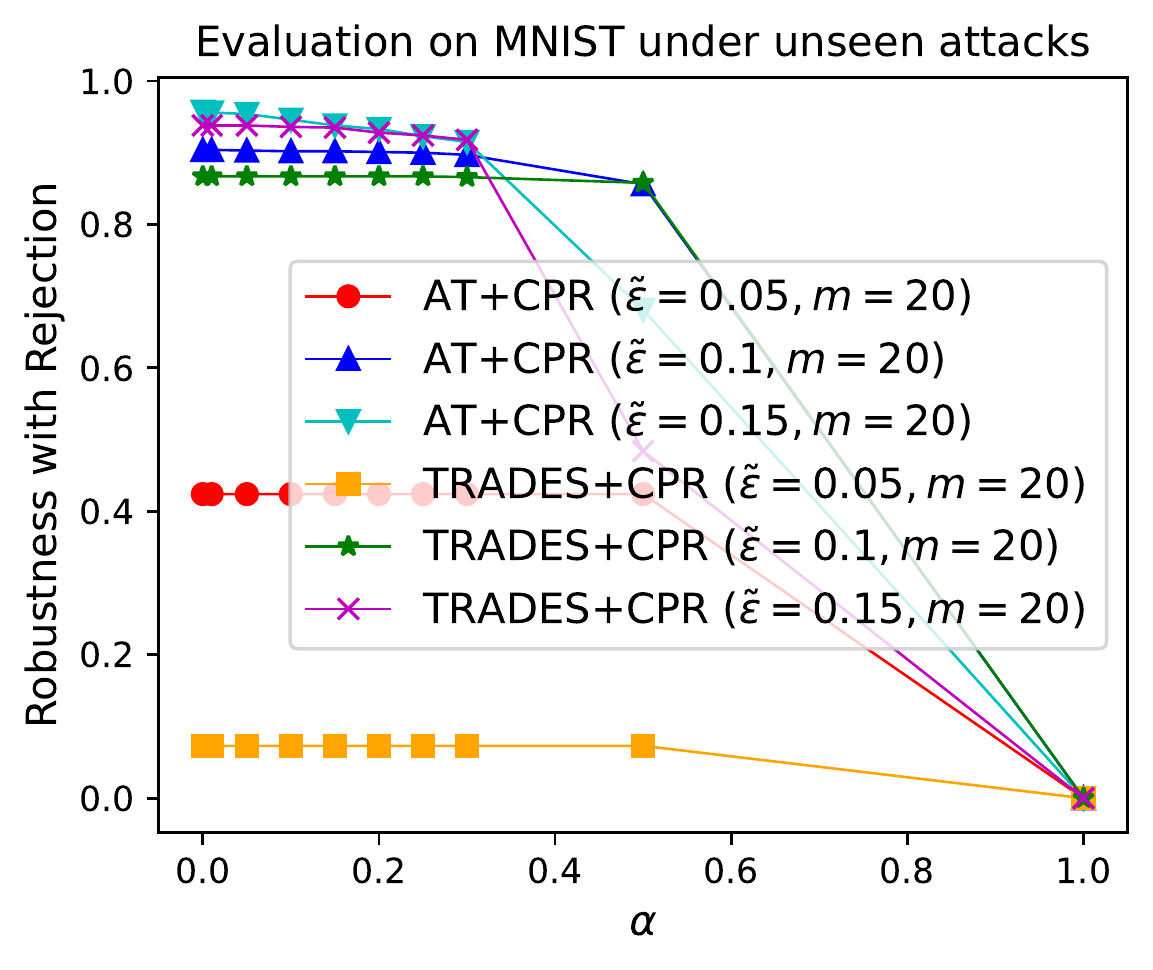}
	\includegraphics[width=0.31\linewidth]{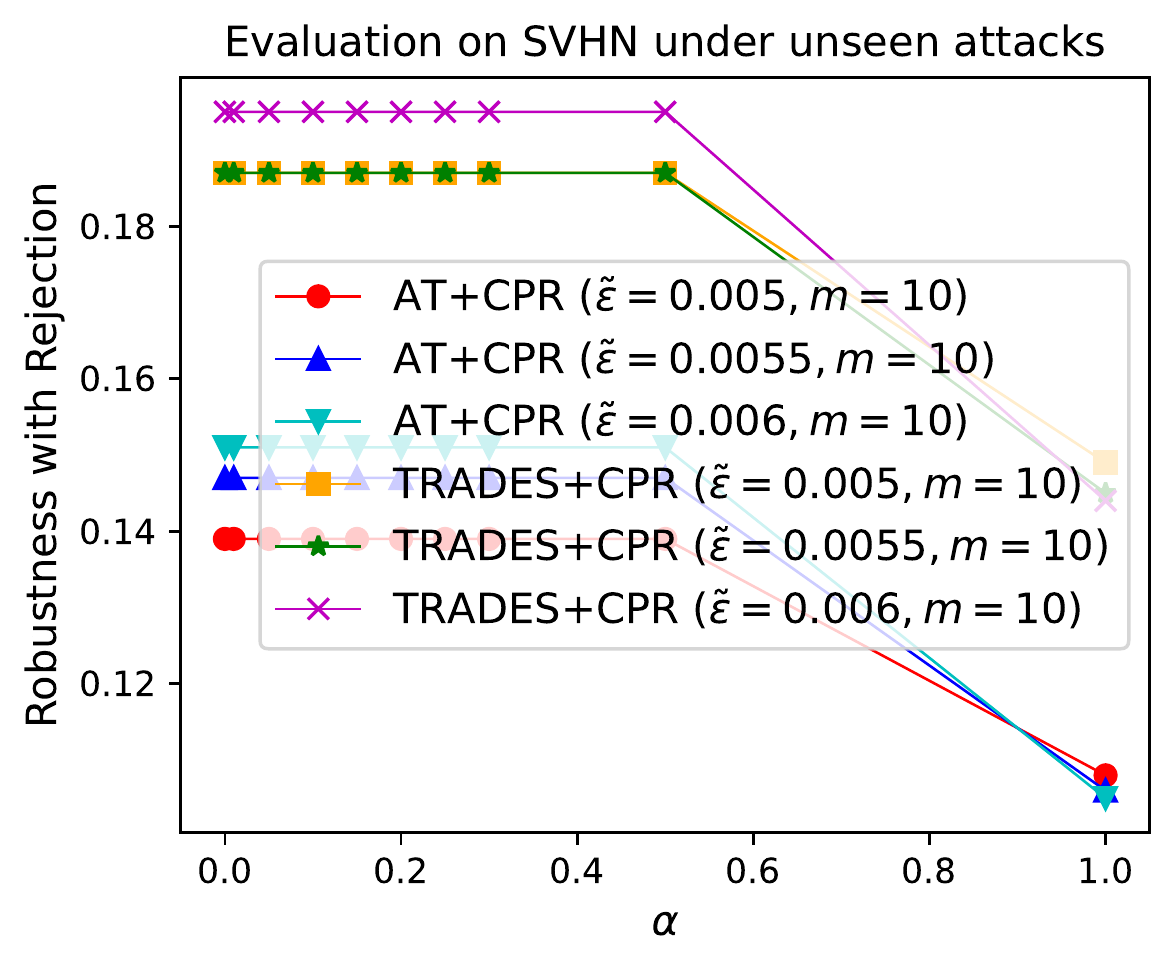}
	\includegraphics[width=0.31\linewidth]{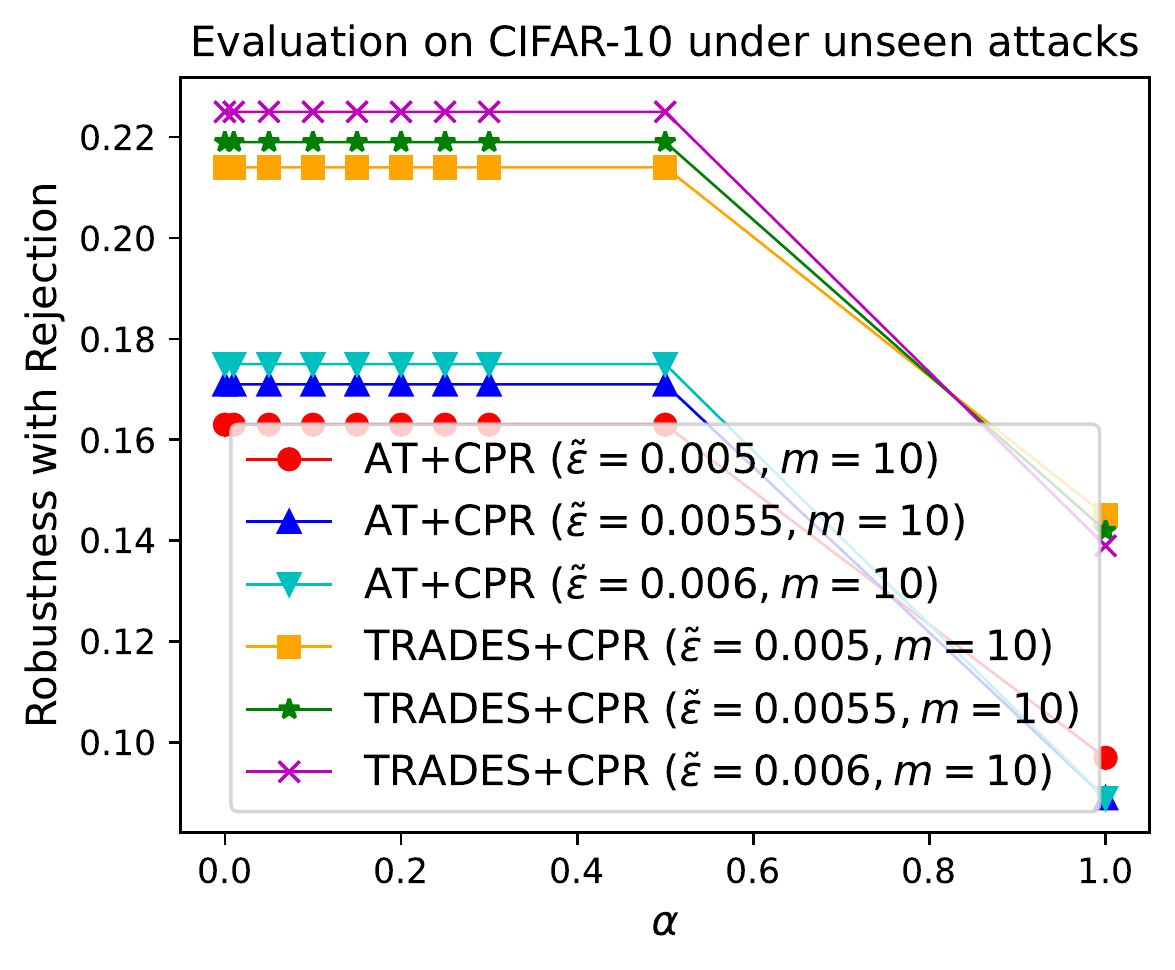}
	\caption{\small Ablation study for the proposed method CPR where we vary the hyper-parameter $\tilde{\epsilon}$ while fixing the hyper-parameter $m$. }
	\label{fig:ablation-vary-eps-results}
\end{figure*}

\begin{figure*}[htb]
	\centering
	\includegraphics[width=0.31\linewidth]{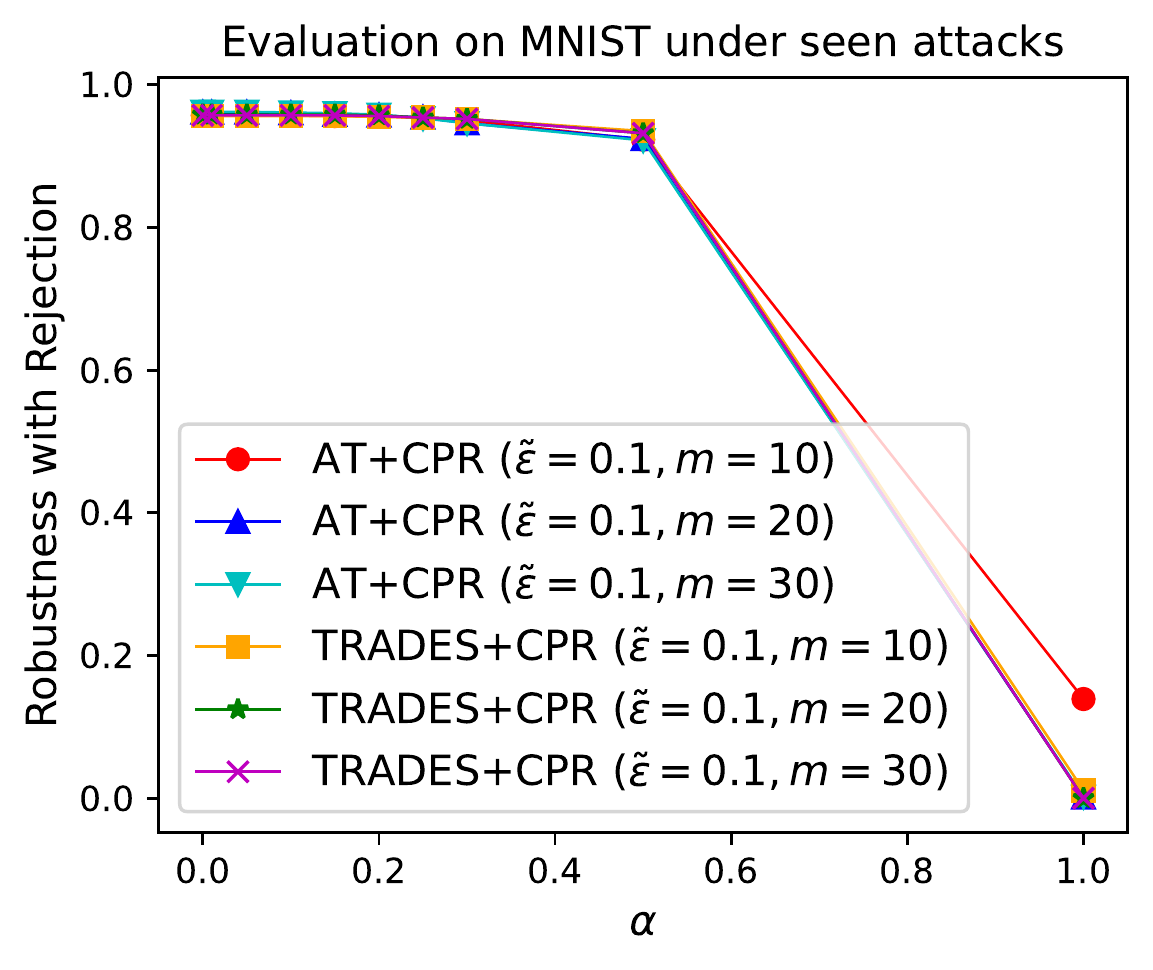}
	\includegraphics[width=0.31\linewidth]{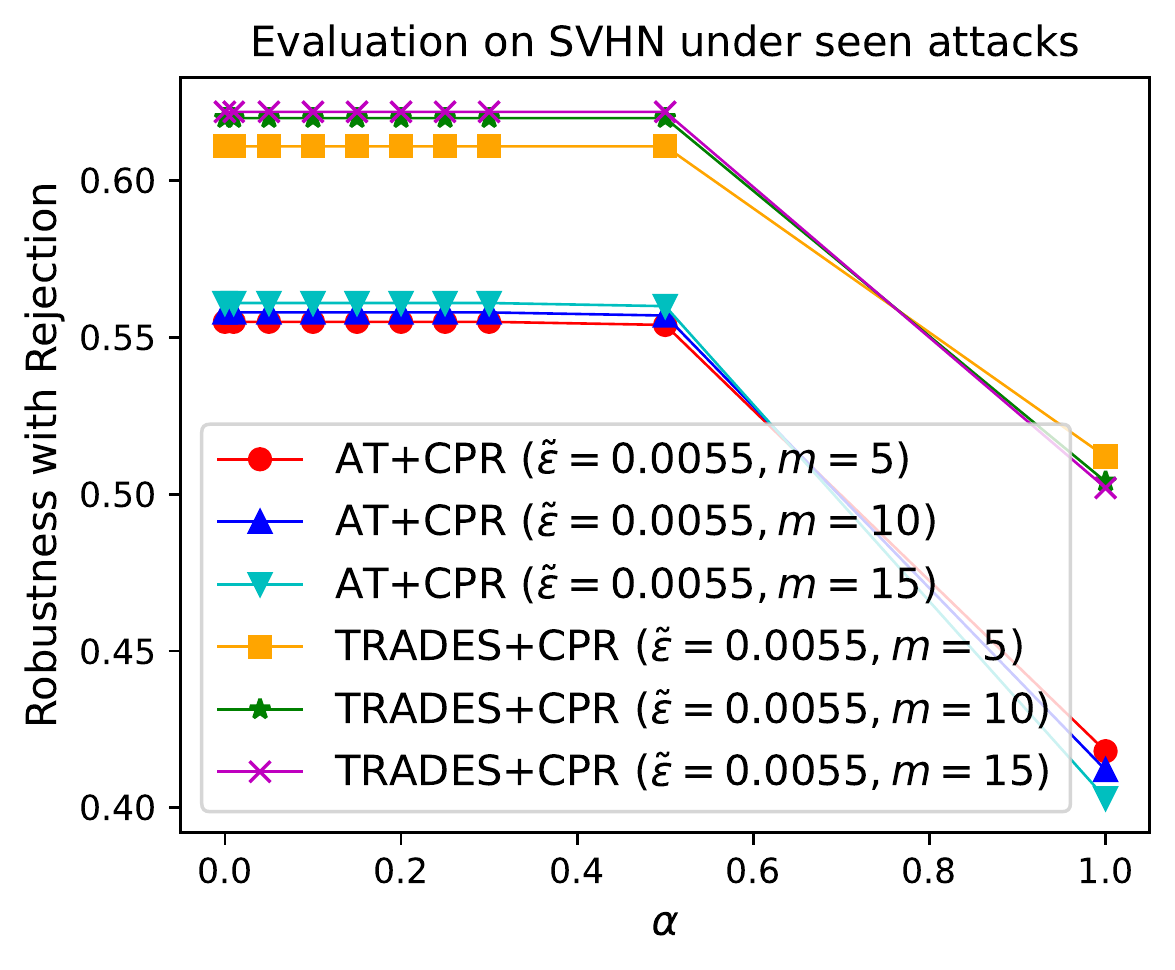}
	\includegraphics[width=0.31\linewidth]{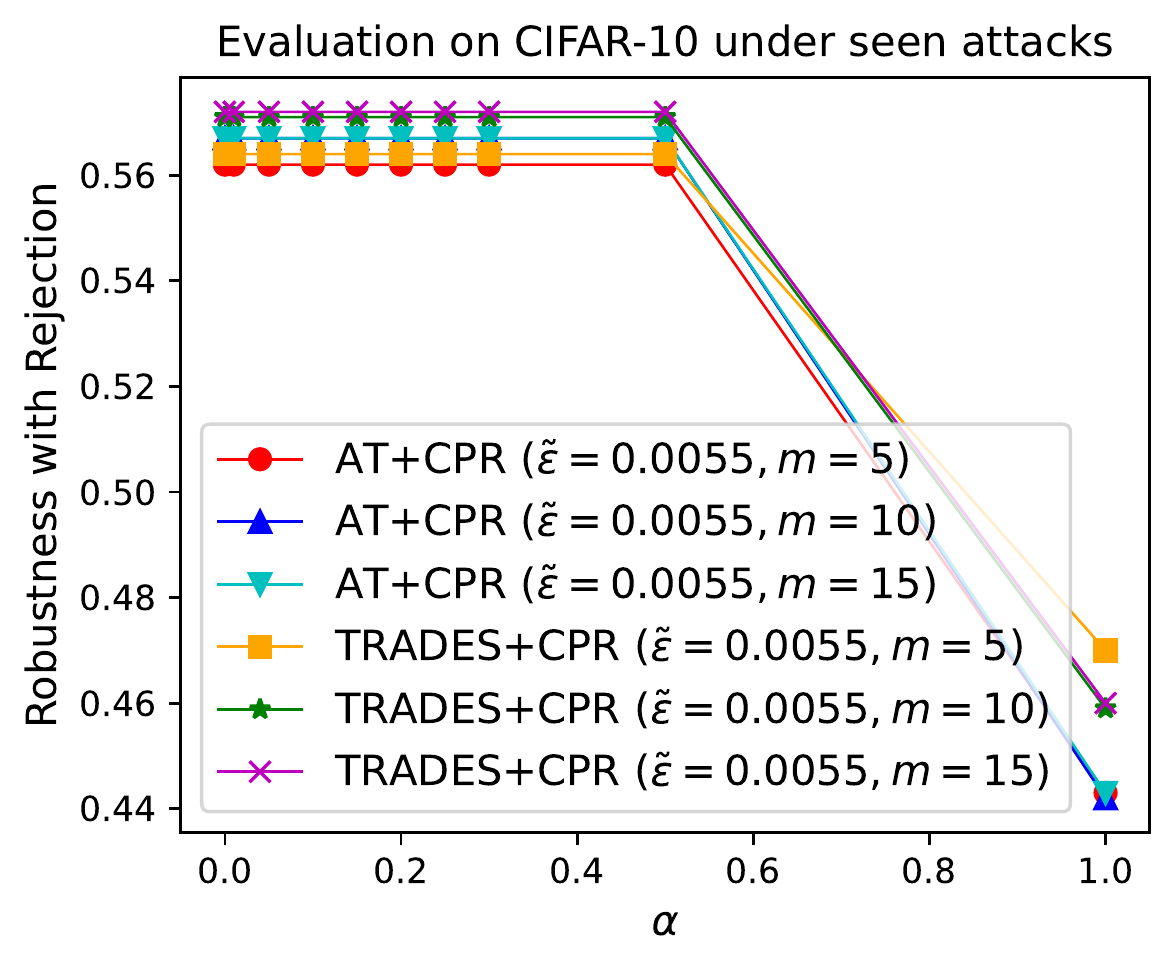}
	\includegraphics[width=0.31\linewidth]{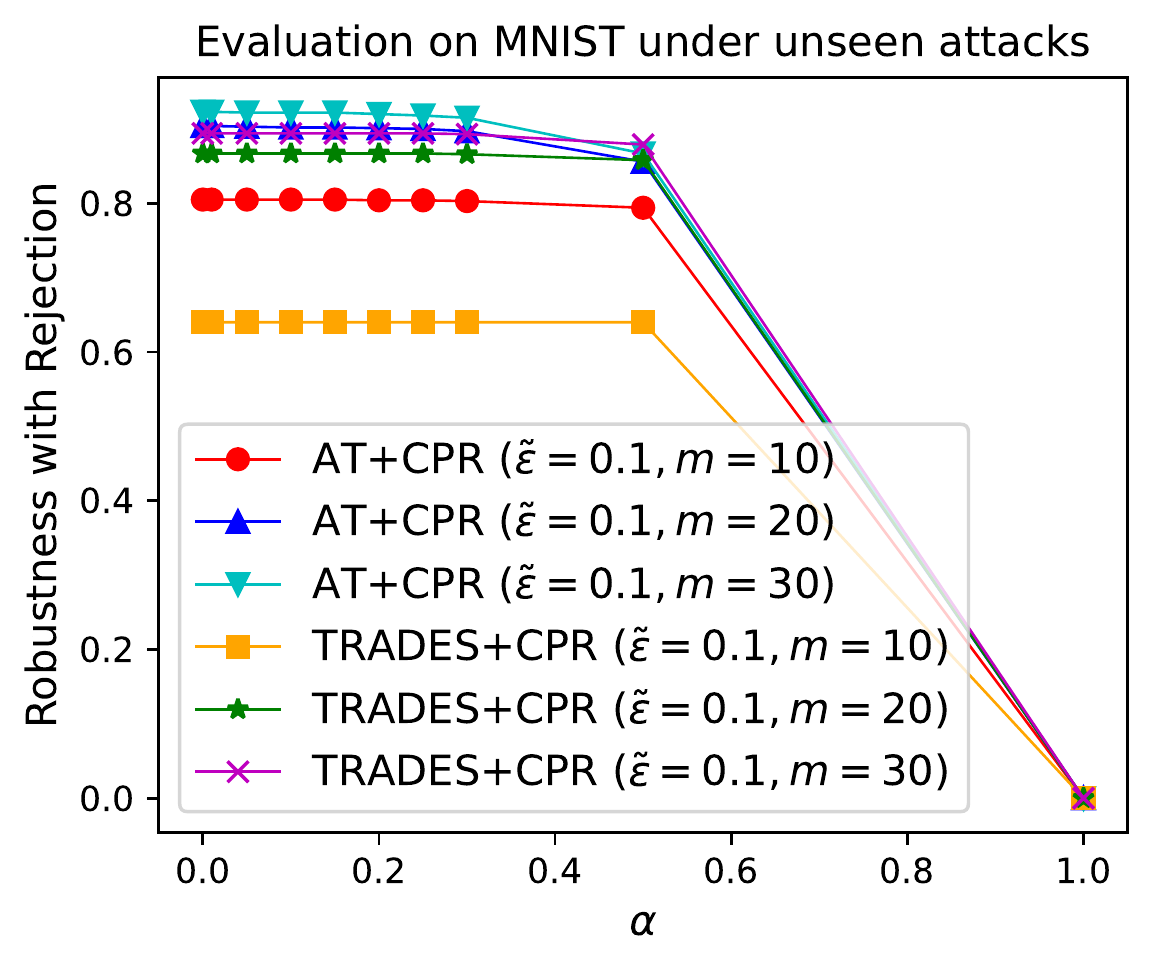}
	\includegraphics[width=0.31\linewidth]{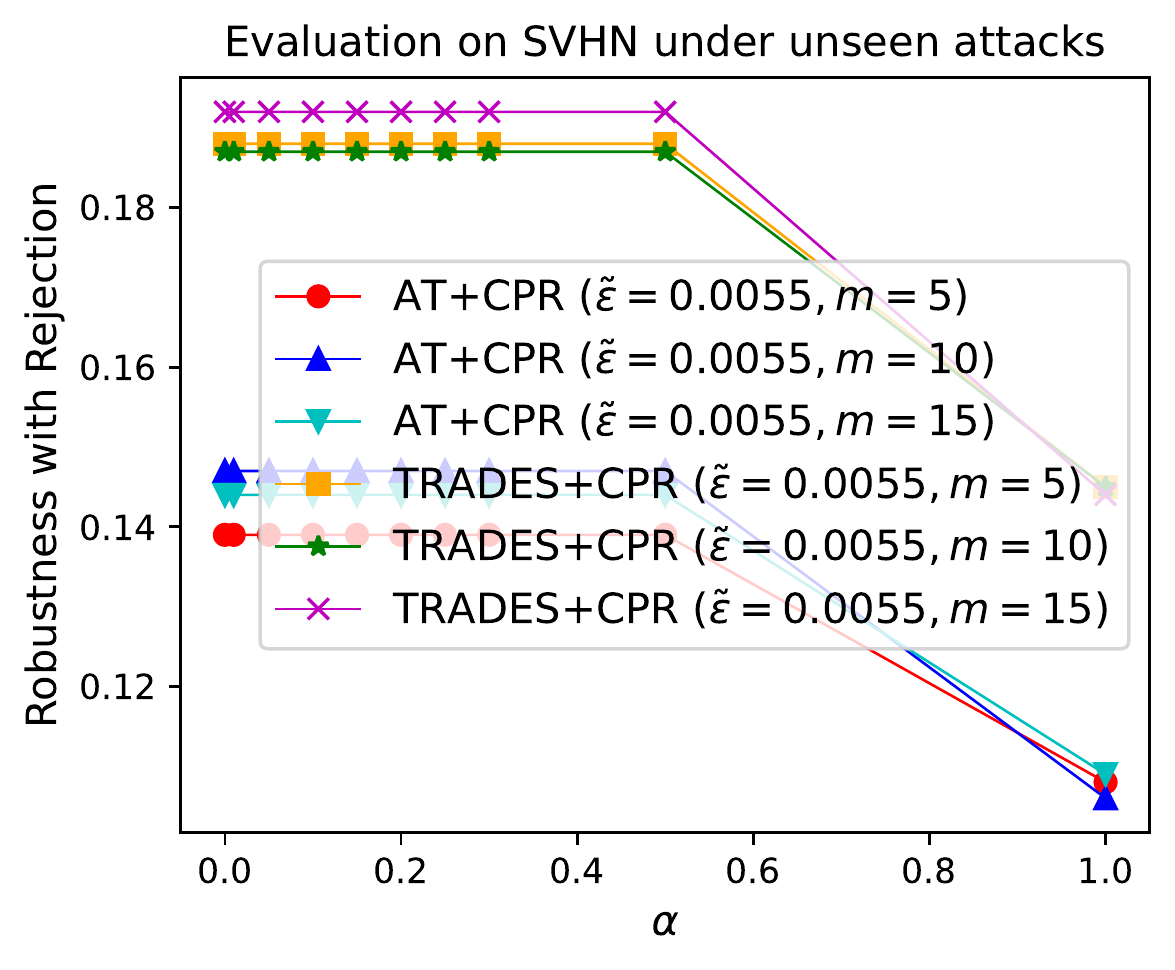}
	\includegraphics[width=0.31\linewidth]{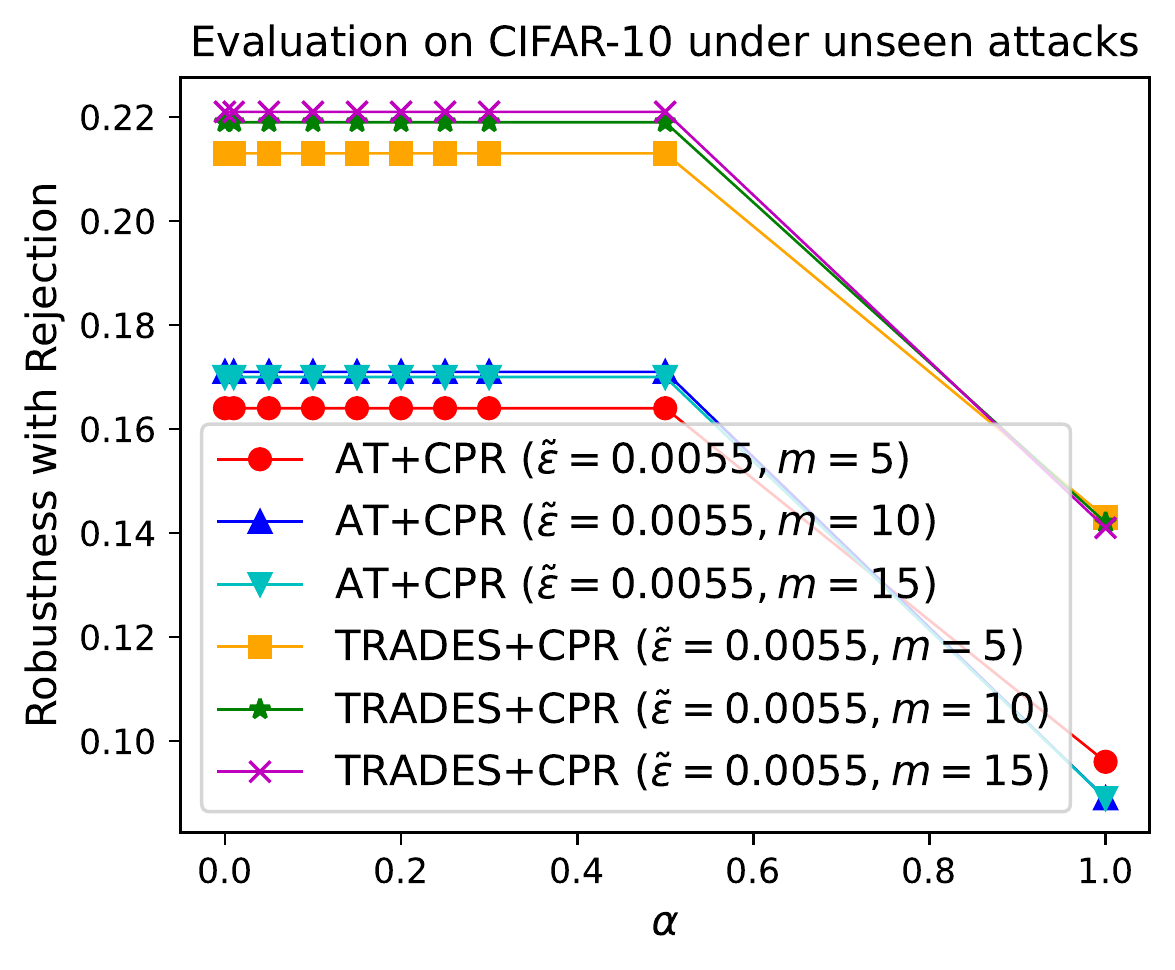}
	\caption{\small Ablation study for the proposed method CPR where we vary the hyper-parameter $m$ while fixing the hyper-parameter $\tilde{\epsilon}$.  }
	\label{fig:ablation-vary-m-results}
\end{figure*}

\subsection{Hyper-parameter Selection for CPR}
\label{sec:cpr-hyper-selection}

The proposed method CPR has three hyper-parameters: the perturbation budget $\tilde{\epsilon}$, the number of PGD steps $m$, and the PGD step size $\eta$. We don't tune $\eta$ but just set it to be a fixed value. From Appendix~\ref{sec:cpr-hyper-ablation}, we know that larger $m$ will lead to better robustness with rejection at $\alpha=0$. However, as we increase $m$, the improvement becomes minor. Thus, by considering the computational cost, we select a reasonably large $m$ based on the performance on the validation data. We also know that larger $\tilde{\epsilon}$ will lead to better robustness with rejection at $\alpha=0$, but will lead to a larger rejection rate on the clean test inputs. In practice, we select a large enough $\tilde{\epsilon}$ such that CPR achieves good robustness with rejection while having a reasonably low rejection rate on the clean inputs. We observed that a wide range of $m$ and $\tilde{\epsilon}$ can lead to good results. Thus, it is easy to select $m$ and $\tilde{\epsilon}$. In our experiments, we consider $m$ in the set $\{ 10, 20, 30 \}$ on MNIST, and $m$ in the set $\{ 5, 10, 15 \}$ on SVHN and CIFAR-10. We consider $\tilde{\epsilon}$ in the set $\{ 0.05, 0.1, 0.15, 0.2 \}$ on MNIST, and $\tilde{\epsilon}$ in the set $\{ 0.004, 0.005, 0.0055, 0.006 \}$ on SVHN and CIFAR-10. We select the best $m$ and $\tilde{\epsilon}$ based on the performance on the validation data.

\subsection{Attacking CCAT}
\label{sec:attacking-ccat}

In this section, we show that the High Confidence Misclassification Outer Attack (HCMOA) with attack step size enumeration proposed in Appendix~\ref{sec:app_adaptive_attacks} is stronger than the PGD attack with backtracking proposed in~\cite{stutz2020ccat} for evaluating robustness with detection of CCAT. We follow the setup in~\cite{stutz2020ccat} to train CCAT models and attack CCAT. For the PGD attack with backtracking, we use a base learning rate of 0.001, momentum factor of 0.9, learning rate factor of $1.1$, 1,000 iterations, and 10 random restarts.

\begin{table*}[htb]
    \centering
    % \begin{adjustbox}{width=\columnwidth,center}
		\begin{tabular}{l|l|c|c}
			\toprule
			\multirow{2}{0.08\linewidth}{Dataset} & \multirow{2}{0.08\linewidth}{Attack} & \multicolumn{2}{c}{Robustness with Detection $\downarrow$} \\  \cline{3-4}
           &  & Seen Attacks & Unseen Attacks \\ \hline \hline
			\multirow{2}{0.12\linewidth}{MNIST}  
			& PGD with backtracking & 88.50 & 85.30 \\
            & HCMOA & \textbf{83.20} & \textbf{75.50} \\ \hline 
			\multirow{2}{0.12\linewidth}{{SVHN}}
			& PGD with backtracking  & 64.10 & 62.30 \\
            & HCMOA  & \textbf{45.30} & \textbf{9.10} \\ \hline 
			\multirow{2}{0.12\linewidth}{CIFAR-10}
			& PGD with backtracking  & 43.70 & 38.20 \\
            & HCMOA  & \textbf{27.70} & \textbf{8.80} \\ 
			\bottomrule
		\end{tabular}
	% \end{adjustbox}
	\caption[]{\small Attacking CCAT using different attacks. All numbers are percentages. \textbf{Bold} numbers are superior results. }
    \label{tab:attacking-ccat}
\end{table*}

%%%%%%%%%%%%%%%%%%%%%%%%%%%%%%%%%%%%%%%%%%%%%%%%%%%%%%%%%%%%%%%%%%%%%%%%%%%%%%%
%%%%%%%%%%%%%%%%%%%%%%%%%%%%%%%%%%%%%%%%%%%%%%%%%%%%%%%%%%%%%%%%%%%%%%%%%%%%%%%

\end{document}